\pdfoutput=1
%%%%%%%%%%%%%%%%%%%%%%%%%%%%%%%%%%%%%%%%%%%%%%%%%%%%%%%%%%%%
% Basic Layout
%%%%%%%%%%%%%%%%%%%%%%%%%%%%%%%%%%%%%%%%%%%%%%%%%%%%%%%%%%%%
%\documentclass{uai2023} % for initial submission
\documentclass[accepted]{uai2023} % after acceptance, for a revised
                                    % version; also before submission to
                                    % see how the non-anonymous paper
                                    % would look like
%% There is a class option to choose the math font
% \documentclass[mathfont=ptmx]{uai2022} % ptmx math instead of Computer
                                         % Modern (has noticable issues)
% \documentclass[mathfont=newtx]{uai2022} % newtx fonts (improves upon
                                          % ptmx; less tested, no support)
% NOTE: Only keep *one* line above as appropriate, as it will be replaced
%       automatically for papers to be published. Do not make any other
%       change above this note for an accepted version.

%% Choose your variant of English; be consistent
%\usepackage[american]{babel}
\usepackage[british]{babel}

%% Some suggested packages, as needed:
\usepackage{natbib} % has a nice set of citation styles and commands
    \bibliographystyle{plainnat}
    
\usepackage{mathtools} % amsmath with fixes and additions
\usepackage{booktabs} % commands to create good-looking tables
\usepackage{tikz} % nice language for creating drawings and diagrams

%%%%%%%%%%%%%%%%%%%%%%%%%%%%%%%%%%%%%%%%%%%%%%%%%%%%%%%%%%%%
% Import Original Packages
%%%%%%%%%%%%%%%%%%%%%%%%%%%%%%%%%%%%%%%%%%%%%%%%%%%%%%%%%%%%
\usepackage{amsmath, bm}
\usepackage{amssymb}
\usepackage{bbm}
\usepackage{amsthm}
\usepackage{xcolor}
\usepackage{graphicx}
\usepackage{hyperref}
\usepackage{cleveref}
\usepackage{subcaption}
\usepackage{cancel}
\usepackage{enumitem}
\usepackage{algorithm}
\usepackage{algpseudocode}
\usepackage{multirow}
\usepackage{caption}
\usepackage{subcaption}
\usepackage{placeins}

%%%%%%%%%%%%%%%%%%%%%%%%%%%%%%%%%%%%%%%%%%%%%%%%%%%%%%%%%%%%
% Define Original Symbols
%%%%%%%%%%%%%%%%%%%%%%%%%%%%%%%%%%%%%%%%%%%%%%%%%%%%%%%%%%%%

\def\R{\mathbb{R}}

\def\D{\mathcal{D}}

\def\GCE{\text{GCE}}

\def\S{\mathcal{S}}
\def\P{\mathbb{P}}

\def\Par{\mathcal{B}}

\def\M{\mathcal{M}}
\def\X{\mathcal{X}}
\def\Y{\mathcal{Y}}

\newtheorem{proposition}{Proposition}

\newtheorem{definition}{Definition}

\newtheorem{remark}{Remark}
\newtheorem{example}{Example}

\algnewcommand\INPUT{\item[\textbf{Input:}]}%
\algnewcommand\OUTPUT{\item[\textbf{Output:}]}%

%%%%%%%%%%%%%%%%%%%%%%%%%%%%%%%%%%%%%%%%%%%%%%%%%%%%%%%%%%%%
% Title & Author Information
%%%%%%%%%%%%%%%%%%%%%%%%%%%%%%%%%%%%%%%%%%%%%%%%%%%%%%%%%%%%
\title{TCE: A Test-Based Approach to Measuring Calibration Error}
\author[1,2]{\href{mailto:<tmatsubara@turing.ac.uk>}{Takuo Matsubara}{}}
\author[3]{Niek Tax}
\author[3]{Richard Mudd}
\author[3]{Ido Guy}
\affil[1]{The Alan Turing Institute}
\affil[2]{Newcastle University}
\affil[3]{Meta Platforms, Inc.}

%%%%%%%%%%%%%%%%%%%%%%%%%%%%%%%%%%%%%%%%%%%%%%%%%%%%%%%%%%%%
% Document
%%%%%%%%%%%%%%%%%%%%%%%%%%%%%%%%%%%%%%%%%%%%%%%%%%%%%%%%%%%%
\begin{document}
\maketitle

%%%%%%%%%%%%%%%%%%%%%%%%%%%%%%%%%%%%%%%%%%%%%%%%%%%%%%%%%%%%
% Abstract
%%%%%%%%%%%%%%%%%%%%%%%%%%%%%%%%%%%%%%%%%%%%%%%%%%%%%%%%%%%%
\begin{abstract}
This paper proposes a new metric to measure the calibration error of probabilistic binary classifiers, called \emph{test-based calibration error} (TCE).
TCE incorporates a novel loss function based on a statistical test to examine the extent to which model predictions differ from probabilities estimated from data.
It offers (i) a clear interpretation, (ii) a consistent scale that is unaffected by class imbalance, and (iii) an enhanced visual representation with repect to the standard reliability diagram.
In addition, we introduce an optimality criterion for the binning procedure of calibration error metrics based on a minimal estimation error of the empirical probabilities.
We provide a novel computational algorithm for optimal bins under bin-size constraints.
We demonstrate properties of TCE through a range of experiments, including multiple real-world imbalanced datasets and ImageNet 1000.
\end{abstract}

%%%%%%%%%%%%%%%%%%%%%%%%%%%%%%%%%%%%%%%%%%%%%%%%%%%%%%%%%%%%
% Introduction
%%%%%%%%%%%%%%%%%%%%%%%%%%%%%%%%%%%%%%%%%%%%%%%%%%%%%%%%%%%%
%auto-ignore
%!TEX root = main.tex
%%%%%%%%%%%%%%%%%%%%%%%%%%%%%%%%%%%%%%%%%%%%%%%%%%%%%%%%%%%%
% Introduction
%%%%%%%%%%%%%%%%%%%%%%%%%%%%%%%%%%%%%%%%%%%%%%%%%%%%%%%%%%%%
\section{Introduction}

In recent years, it has become ubiquitous to deploy complex machine learning models in real-world production systems.
Many of these systems rely on probabilistic classifiers that predict the probability that some target outcome occurs.
For such systems, it is often crucial that their predictive probabilities are \emph{well-calibrated}, meaning that the predictive probability accurately reflects the true frequency that the target outcome occurs.
In some contexts, failures to achieve calibration can lead to negative consequences.
In applications like medical diagnoses \citep{Topol2019} and autonomous driving \citep{Grigorescu2020}, associated risks are often assessed based on model predictions and the consequences of a misguided risk evaluation can be severe.
In online advertising auctions~\citep{Li2015}, it is common to incorporate a prediction of the probability of some outcome of interest (e.g., a click on an advert) when calculating an advertiser's bid.

While a number of metrics---such as log-likelihood, user-specified scoring functions, and the area under the receiver operating characteristic (ROC) curve---are used to assess the quality of probabilistic classifiers, it is usually hard or even impossible to gauge whether predictions are well-calibrated from the values of these metrics.
For assessment of calibration, it is typically necessary to use a metric that measures \emph{calibration error}, that is,~a deviation between model predictions and probabilities of target occurrences estimated from data.
The importance of assessing calibration error has been long emphasised in machine learning~\citep{Nixon2019, Minderer2021} and in probabilistic forecasting more broadly~\citep{Dawid1981, Degroot1983}.

However, existing metrics of calibration error have several drawbacks that in certain scenarios can mean that their values do not appropriately reflect true calibration performance.
In particular, we will demonstrate that values of existing calibration error metrics have an inconsistent scale that is influenced by the target class proportion.
In applications such as fraud detection~\citep{Abdallah2016,Tax2021} and advertising conversion prediction~\citep{Yang2022}, the prevalence, i.e., the proportion of instances belonging to the target class, is often very low.
This leads to situations where one may be unable to identify whether the values of calibration error metrics are small due to good calibration performance or due to the low prevalence. 
This is also problematic for monitoring applications aimed at tracking the calibration performance of a model in a production system, where the prevalence can change over time (i.e., \emph{prior probability shift}~\citep{Storkey2009}) and that makes it difficult to understand whether to attribute changes in the metric to an actual change in calibration performance or to the change in prevalence.

Furthermore, \emph{binning} of model predictions---an essential component of most calibration error metrics \citep{Naeini2015}---is often based on heuristics and lacks clear design principles.
For calibration error metrics, empirical probabilities of target occurrences are typically estimated by clustering data into several subsets based on binning of the associated model predictions.
The design of the binning scheme is a vital factor in the accurate estimation of the empirical probabilities, yet few principles guiding the design of binning schemes have emerged to date.

In this paper, we elaborate on the issues of existing calibration error metrics in~\Cref{sec:background}.
We establish a simple yet novel metric that counterbalances the issues in~\Cref{sec:methodology}.
\Cref{sec:experiment} empirically demonstrates properties of the proposed metric by experiments based on various datasets.
Related works are discussed in~\Cref{sec:related_work}, followed by the conclusion in \Cref{sec:conclusion}.
This paper focuses on the methodological aspects of the proposed new metric for binary classification, while theoretical development is left for future research.
Our contributions are summarised as follows:

\paragraph{Contributions}

\begin{itemize}[leftmargin=12.5pt]
    \item{Our primary contribution is a novel calibration error metric called \emph{test-based calibration error} (TCE). TCE is based on statistical hypothesis testing and is interpretable as a percentage of model predictions that deviate significantly from estimated empirical probabilities. TCE produces values in a normalised, comparable range $[0, 100]$ regardless of the class prevalence.}
    \item{We propose an explanatory visual representation of TCE called the \emph{test-based reliability diagram}. It carries more information than the standard reliability diagram and facilitates a better understanding of calibration performance (See~\Cref{fig:reliability_diagram}).}
    \item{We introduce an optimality criterion for bins under which optimal bins minimise an estimation error of the empirical probabilities. We then propose a novel algorithm to compute optimal bins approximately under the constraints of the minimum and maximum size of each bin.}
\end{itemize}

%%%%%%%%%%%%%%%%%%%%%%%%%%%%%%%%%%%%%%%%%%%%%%%%%%%%%%%%%%%%
% Background
%%%%%%%%%%%%%%%%%%%%%%%%%%%%%%%%%%%%%%%%%%%%%%%%%%%%%%%%%%%%
%auto-ignore
%!TEX root = main.tex
%%%%%%%%%%%%%%%%%%%%%%%%%%%%%%%%%%%%%%%%%%%%%%%%%%%%%%%%%%%%
% Background
%%%%%%%%%%%%%%%%%%%%%%%%%%%%%%%%%%%%%%%%%%%%%%%%%%%%%%%%%%%%
\section{Background} \label{sec:background}

In this section, we introduce the definition of \emph{calibration} and recap one of the most common \emph{calibration error} metrics.
We then outline several critical challenges of existing calibration error metrics.
The basic notation used in this paper is introduced below.

Denote input and output spaces respectively by $\X$ and $\Y$.
We focus on probabilistic binary classification, i.e.~$\Y = \{ 0, 1 \}$, in which a probabilistic classifier $P_\theta: \mathcal{X} \to [0, 1]$ models a conditional probability of $Y = 1$ given an input $x \in \X$.
The data $\D := \{ x_i, y_i \}_{i=1}^{N}$ are assumed to be i.i.d.~realisations from a random variable $(X, Y) \sim \P$.
To simplify notation, for any data subset $\S \subseteq \D$, we denote by $\S^x$ a set of all inputs $x$ in $\S$ and by $\S^y$ a set of all outputs $y$ in $\S$. 
By ``a set of bins'' or simply ``bins'', we mean a set of arbitrary disjoint intervals whose union is the unit interval $[0, 1]$.
For example, a set $\{ \Delta_b \}_{b=1}^{2}$ of intervals $\Delta_1 = [0.0, 0.4)$ and $\Delta_2 = [0.4, 1.0]$ is a set of bins.

\subsection{Calibration Error}

A probabilistic classifier $P_\theta: \mathcal{X} \to [0, 1]$ is said to be \emph{calibrated} \citep{Dawid1981,Brocker2009} if
\begin{align}
\P( Y = 1 \mid P_\theta(X) = Q ) = Q \label{eq:calibration}
\end{align}
for all $Q \in [0, 1]$ s.t.~the conditional probability is well-defined.
Informally, this criterion implies that the model prediction coincides with the actual probability of $Y=1$ for all inputs.
Any deviation between the actual probabilities and the model predictions in~\cref{eq:calibration} is often referred to as \emph{calibration error}, which quantifies to what degree the classifier $P_\theta$ is calibrated. 
The empirical computation of such a deviation involves estimating conditional probability $\P( Y = 1 | P_\theta(X) = Q )$ from data.
For given bins $\{ \Delta_b \}_{b=1}^{B}$, define disjoint subsets $\{ \D_b \}_{b=1}^{B}$ of data $\D$ by
\begin{align}
    \D_{b} := \{ (x_i, y_i) \in \D \mid P_\theta(x_i) \in \Delta_b \}. \label{eq:emp_d_b}
\end{align}
Simply put, $\D_{b}$ is a subset of data whose model predictions have similar values.
The conditional probability $\P( Y = 1 \mid P_\theta(X) = Q )$ for any $Q \in \Delta_b$ can then be estimated by the empirical mean of the labels in subset $\D_b$:
\begin{align}
    \P( Y = 1 \mid P_\theta(X) = Q ) \approx \widehat{P}_{b} := \frac{1}{N_b} \sum_{y_i \in \D_b^y} y_i \label{eq:estimate_p}
\end{align}
where we denote by $\widehat{P}_{b}$ the estimated conditional probability in $\D_b$ and by $N_b$ the sample size of $\D_b$.

One of the most common metrics to measure calibration error is \emph{expected calibration error} (ECE)~\citep{Naeini2015}.
ECE uses equispaced bins $\{ \Delta_b \}_{b=1}^{B}$ over $[0, 1]$ for a given number $B$ and measures an absolute difference between the averaged model predictions and the estimated conditional probability $\widehat{P}_{b}$ within each data subset $\D_b$.
The value of ECE is defined as
\begin{align}
    \text{ECE} := \sum_{b=1}^{B} \frac{N_b}{N} \left| \widehat{P}_{b} - \frac{1}{N_b} \sum_{x_i \in \D_b^x} P_\theta(x_i) \right| . \label{eq:ece}
\end{align}
ECE has an associated practical visual representation known as the \emph{reliability diagram}~\citep{Degroot1983,Niculescu-Mizil2005}, which aligns the averaged model prediction and the estimated conditional probability in each $\D_b$ (see Figure~\ref{fig:reliability_diagram}).
The reliability diagram is a powerful tool to intuitively grasp the deviation between the model and the estimated probability in ECE.

\subsection{Challenges in Calibration Error} \label{sec:challenge}

Calibration error metrics, such as ECE, are widely used in real-world applications. 
There nonetheless exist several challenges that may cause a misassessment of calibration.
These problems become evident especially when a distribution of model predictions $\{ P_\theta(x_i) \}_{i=1}^{N}$ is not well-dispersed. 
This scenario often arises in imbalanced classification where model predictions tend to be severely skewed towards either $0$ or $1$.
The following paragraphs illustrate challenges of existing calibration error metrics, which we aim to address.

\paragraph{Challenge 1 (Scale-Dependent Interpretation)}
In most calibration error metrics, the deviation between the model prediction and the estimated probability $\widehat{P}_{b}$ in each $\D_b$ is measured by the absolute difference as in~\cref{eq:ece}.
However, the use of the absolute difference can result in values that have an inconsistent scale influenced by the class prevalence.
To illustrate this problem, consider an estimated probability $\widehat{P}_b$ and an averaged model prediction denoted $\overline{Q}_b$ for some $b$ in~\cref{eq:ece}.
If $\widehat{P}_b = 0.50$ and $\overline{Q}_b = 0.49$, their absolute difference is $0.01$. 
On the other hand, if $\widehat{P}_b = 0.01$ and $\overline{Q}_b = 0.0001$, their absolute difference is $0.0099$.
Despite the comparison under the absolute difference suggesting that the probability $\overline{Q}_b = 0.0001$ with respect to $\widehat{P}_b = 0.01$ in the latter case is better calibrated than in the former case, one may reasonably argue that the latter is not well-calibrated---or at least not comparable to the former---given the stark difference in the order of magnitude.
Similarly to this illustration, the values of existing calibration metrics built on the absolute difference can be proportionally small whenever the scales of $\widehat{P}_b$ and $\overline{Q}_b$ are small.
This issue makes it difficult to distinguish whether the metric values are low due to good calibration performance or due to the small scale of the probabilities as in imbalanced classification.

\paragraph{Challenge 2 (Lack of Normalised Range)}
The range of values of calibration error metrics built on absolute differences is not normalised.
The range can vary depending on the choice of bins $\{ \Delta_b \}_{b=1}^{B}$.
To illustrate this problem, consider a bin $\Delta_b$ for some $b$.
If $\Delta_b = [0.4, 0.6]$, the absolute difference between $\widehat{P}_b$ and $\overline{Q}_b$ falls into a range $[0.0, 0.6]$ because $\widehat{P}_{b}$ is the estimated probability in $[0.0, 1.0]$ and the averaged model prediction $\overline{Q}_b$ in the bin $\Delta_b$ takes the value within $\Delta_b$.
Similarly, a different choice of bin $\Delta_b$ leads to a different range of the absolute difference.
Consequently, the choice of bins $\{ \Delta_b \}_{b=1}^{B}$ impacts the range of the final value of calibration error metrics that are built on the absolute difference.
To assure rigorous comparability of the final value of a calibration error metric, it is desirable to establish a measurement of the deviation whose value has a fixed, normalised range independent of the choice of bins.

\paragraph{Challenge 3 (Arbitrary Choice of Bins)}
An appropriate choice of bins is critical because it meaningfully impacts on final values of calibration error metrics.
Equispaced bins $\{ \Delta_b \}_{b=1}^{B}$ over $[0, 1]$ for a given number $B$ are one of the most common choices of bins in practice, as used in ECE.
However, equispaced bins can often cause a situation where a few particular bins contain the majority of the model predictions when they are not well-dispersed over $[0, 1]$, as often happens in imbalanced classification.
If some bin $\Delta_b$ contains the majority of model predictions, the corresponding estimated probability $\widehat{P}_b$ coincides approximately with the empirical mean of all labels.
On the other hand, estimated probabilities of the bins other than $\Delta_b$ become unreliable due to the small size of samples contained.
A potential solution to this problem is to use bins that adapt based on the dispersion of model predictions.
\cite{Nixon2019} proposed \emph{adaptive calibration error} (ACE) that computes the value of \cref{eq:ece} using bins $\{ \Delta_b \}_{b=1}^{B}$ based on $B$-quantiles of model predictions $\{ P_\theta(x_i) \}_{i=1}^{N}$ for given $B$.
However, questions remain regarding the optimal number $B$ of bins and the appropriate quantile to use for each bin.
To the best of our knowledge, there is no established notion of what makes bins optimal, nor do clear design principles for bins exist.

%%%%%%%%%%%%%%%%%%%%%%%%%%%%%%%%%%%%%%%%%%%%%%%%%%%%%%%%%%%%
% Methodology
%%%%%%%%%%%%%%%%%%%%%%%%%%%%%%%%%%%%%%%%%%%%%%%%%%%%%%%%%%%%
%auto-ignore
%!TEX root = main.tex
%%%%%%%%%%%%%%%%%%%%%%%%%%%%%%%%%%%%%%%%%%%%%%%%%%%%%%%%%%%%
% Methodology
%%%%%%%%%%%%%%%%%%%%%%%%%%%%%%%%%%%%%%%%%%%%%%%%%%%%%%%%%%%%
\section{Calibration Error Based on Test and Optimal Bins} \label{sec:methodology}

We propose a new calibration error metric that offers a simple yet novel solution to the challenges outlined in~\Cref{sec:challenge}.
First, in \Cref{sec:gce}, we present a general formulation of calibration error metrics that encompasses most metrics used in practice.
This general formulation allows for a structured understanding of the design of calibration error metrics.
In \Cref{sec:tce}, we derive from the general formulation a new calibration error metric, called TCE, which incorporates a loss based on a statistical test to compare model predictions with estimated empirical probabilities.
TCE produces a value that has a clear interpretation as a percentage of model predictions determined to deviate significantly from estimated empirical probabilities, which leads to a normalised range of possible values $[0, 100]$ regardless of the choice of bins $\{ \Delta_b \}_{b=1}^{B}$.
In \Cref{sec:lso}, we consider an optimal criterion of bins $\{ \Delta_b \}_{b=1}^{B}$ from the perspective of minimising an estimation error of the empirical probabilities $\{ \widehat{P}_b \}_{b=1}^{B}$.
We then develop a practical regularisation approach that ensures a minimum and maximum sample size in each subset $\D_b$.

\subsection{General Calibration Error} \label{sec:gce}

The following definition presents an abstract formulation of calibration error metrics, which we call \emph{general calibration error} (GCE) for terminological convenience.
Denote by $2^\D$ a power set of $\D$, i.e.~a space of all subsets of $\D$ and by $\M$ a space of all probabilistic classifiers below.

\begin{definition} \textbf{\emph{(GCE)}} \label{def:GCE}
Let $L: 2^\D \times \M \to \R$ be a loss of any probabilistic classifier evaluated for any data subset.
Let $\Par$ be a set of bins $\{ \Delta_b \}_{b=1}^{B}$ that define data subsets $\{ \D_b \}_{b=1}^{B}$ as in~\cref{eq:emp_d_b}.
Let $\| \cdot \|$ be a norm of a $B$-dimensional vector space.
For a given probabilistic classifier $P_\theta: \mathcal{X} \to [0, 1]$, define a scalar $\GCE_b \in \R$ for each $b=1, \cdots, B$ by
\begin{align}
    \GCE_b := L\left( \D_b, P_\theta \right) . \label{eq:gce_b}
\end{align}
Then, GCE of the probabilistic classifier $P_\theta$ is defined by
\begin{align}
    \GCE = \| ( \GCE_1, \cdots, \GCE_B ) \| .
\end{align}
\end{definition}

This formulation translates the problem of designing a calibration error metric into a problem of choosing the tuple $(L, \Par, \| \cdot \|)$.
Most existing calibration error metrics used in practice can be derived by selecting an appropriate tuple of the loss $L$, the bins $\Par$, and the norm $\| \cdot \|$ in GCE.
See~\Cref{ex:ece_gce} below for the case of ECE.
It is also immediate to show that ACE can be recovered from GCE.

\begin{example} \label{ex:ece_gce}
    Let $\Par$ be equispaced bins $\{ \Delta_b \}_{b=1}^{B}$ over $[0, 1]$, let $L$ be $L(\D_b, P_\theta) = | \frac{1}{N_b} \sum_{y \in \D_b^y} y - \frac{1}{N_b} \sum_{x \in \D_b^x} P_\theta(x) |$, and let $\| \cdot \|$ be a weighted 1-norm $\| v \| = \sum_{b=1}^{B} \frac{N_b}{N} \times | v_b |$.
    The ECE corresponds to the GCE under this tuple.
\end{example}

We aim to choose the tuple $(L, \Par, \| \cdot \|)$ so that it addresses the aforementioned challenges in~\Cref{sec:challenge}.
\Cref{sec:tce} addresses a loss $L$ based on a statistical test and presents the resulting TCE.
Subsequently, \Cref{sec:lso} addresses a choice of bins $\Par$ that is obtained through optimisation to minimise an estimation error of the empirical probabilities $\{ \widehat{P}_b \}_{b=1}^{B}$.
All norms $\| \cdot \|$ are equivalent in finite dimensions, and hence we do not focus on any particular choice. 
As with ECE, we use the weighted 1-norm $\| \cdot \|$ in \Cref{ex:ece_gce} for TCE.

\subsection{Test-based Calibration Errors} \label{sec:tce}

We present our main contribution, a new calibration error metric called TCE, that is derived from GCE by specifying a novel loss $L$ based on a statistical test.
Our proposed loss $L$ summarises the percentage of model predictions that deviate significantly from the empirical probabilities in each subset $\D_b$.
We effectively test a null hypothesis ``the probability of $Y=1$ is equal to $P_\theta(x)$'' at each $x \in \D_b^x$ using the output data $\D_b^y$.
A rigorous formulation of this loss $L$ is provided below, combined with the definition of the TCE.
Note that the bins $\{ \Delta_b \}_{b=1}^{B}$ and the norm $\| \cdot \|$ of TCE are arbitrary, while the weighted 1-norm is our default choice of $\| \cdot \|$.

\begin{definition} \textbf{\emph{(TCE)}} \label{def:TCE}
Given a statistical test and its significance level $\alpha \in [0, 1]$, let $R$ be a function of any observed dataset of random variable $Y \in \{0, 1\}$ and any probability $Q \in [0, 1]$, which returns $1$ if a hypothesis $P(Y=1) = Q$ is rejected based on the dataset and returns $0$ otherwise.
In~\Cref{def:GCE}, let $L$ be an average rejection percentage s.t.~
\begin{align}
    L(\D_b, P_\theta) = 100 \times \frac{1}{N_b} \sum_{x \in \D_b^x} R\left( \D_b^y, P_\theta(x) \right) .
\end{align}
GCE in \Cref{def:GCE} is then called TCE.
\end{definition}

In contrast to existing metrics that examine the difference between averaged model predictions and empirical probabilities in each bin, TCE examines each prediction $P_\theta(x)$ and summarises the rejection percentage in each bin.
The procedure of TCE can be intuitively interpreted as follows.

\begin{remark}
    Informally speaking, TCE examines whether each model prediction $P_\theta(x)$ can be regarded as an outlier relative to the empirical probability of the corresponding data $\D_b^y$, where the test in function $R$ acts as a criterion for determining outliers.
    The level of model-calibration is then measured by the rate of outliers produced by the model.
\end{remark}

In this paper, we use the Binomial test as the \emph{de facto} standard statistical test to define $R$ in the TCE.
TCE based on other tests, including Bayesian testing approaches, is an open direction for future research.
\Cref{alg:TCE} summarises the computational procedure of TCE.
There are multiple advantages of TCE as follows.

\begin{algorithm}[b]
\caption{Computation of TCE} \label{alg:TCE}
\begin{algorithmic}
\INPUT data $\D$, model $P_\theta$, norm $\| \cdot \|$, bins $\{ \Delta_b \}_{b=1}^{B}$, function $R$ based on a chosen test and significant level
\OUTPUT a value $\text{TCE} \in \R$
\For{$b = 1, \dots, B$}
    \State $\D_{b} \gets \{ (x_i, y_i) \in \D \mid P_\theta(x_i) \in \Delta_b \}$ \Comment{make subset}
    \State $\text{TCE}_b \gets 0$
    \For{$x_i \in \D_b^x$}
        \State $\text{TCE}_b \gets \text{TCE}_b + R(\D_b^y, P_\theta(x_i))$ \Comment{test each}
    \EndFor
    \State $\text{TCE}_b \gets 100 / N_b \times \text{TCE}_b$ 
\EndFor
\State $\text{TCE} \gets \| (\text{TCE}_1, \dots, \text{TCE}_B) \|$
\end{algorithmic}
\end{algorithm}

\paragraph{Advantage 1 (Clear Interpretation)}
The final value of TCE has a clear interpretation as a percentage of model predictions that are determined by the test of choice (here the Binomial test) to deviate significantly from estimated empirical probabilities.
Because the value is a percentage, the range of the value is normalised to $[0, 100]$.

\begin{figure*}[t!]
    \begin{subfigure}[t]{0.49\linewidth}
    \centering
    \includegraphics[trim={0 10pt 0 10pt},clip,width=\columnwidth]{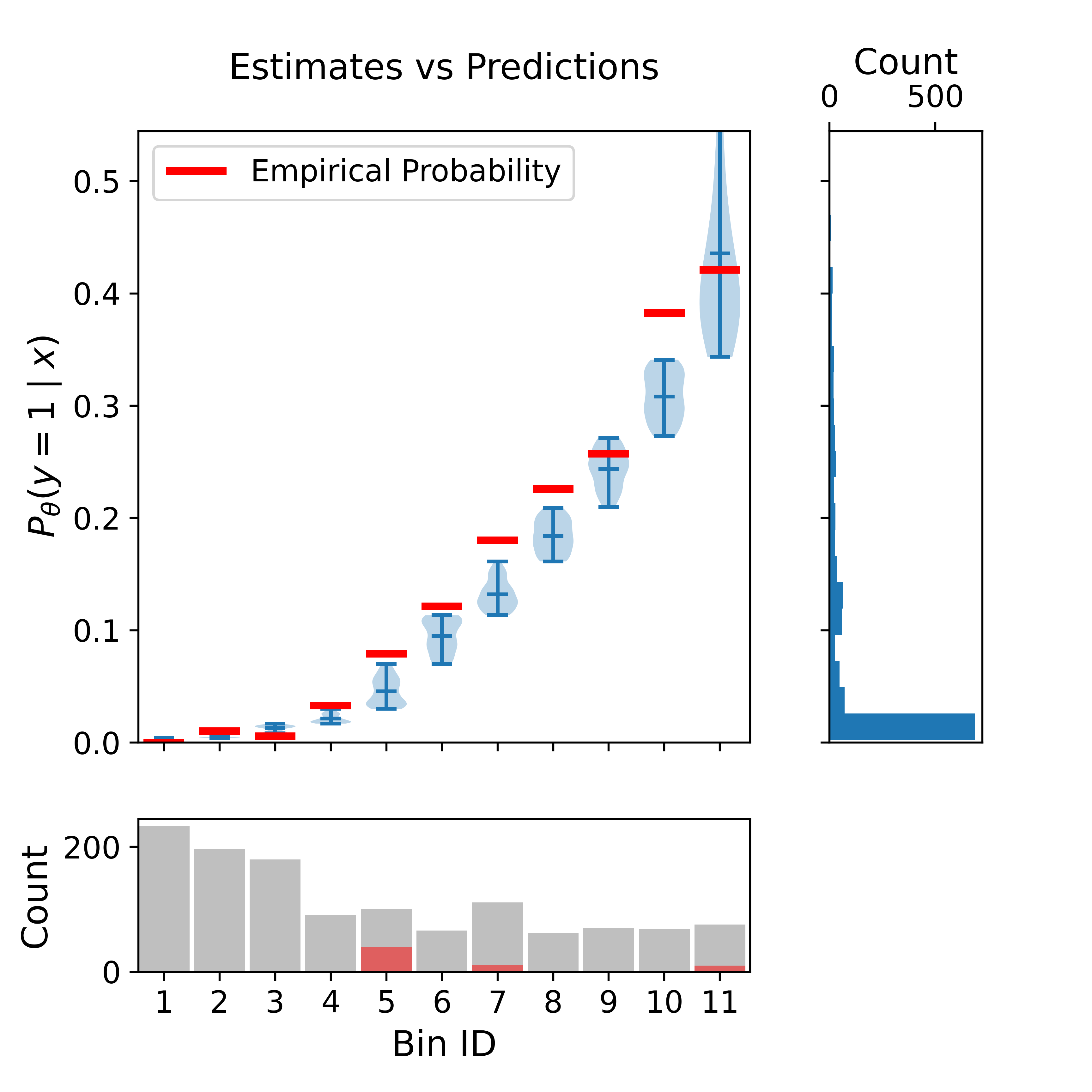}
    \end{subfigure}
    \hfill
    \begin{subfigure}[t]{0.49\linewidth}
    \centering
    \includegraphics[trim={0 10pt 0 10pt},clip,width=\columnwidth]{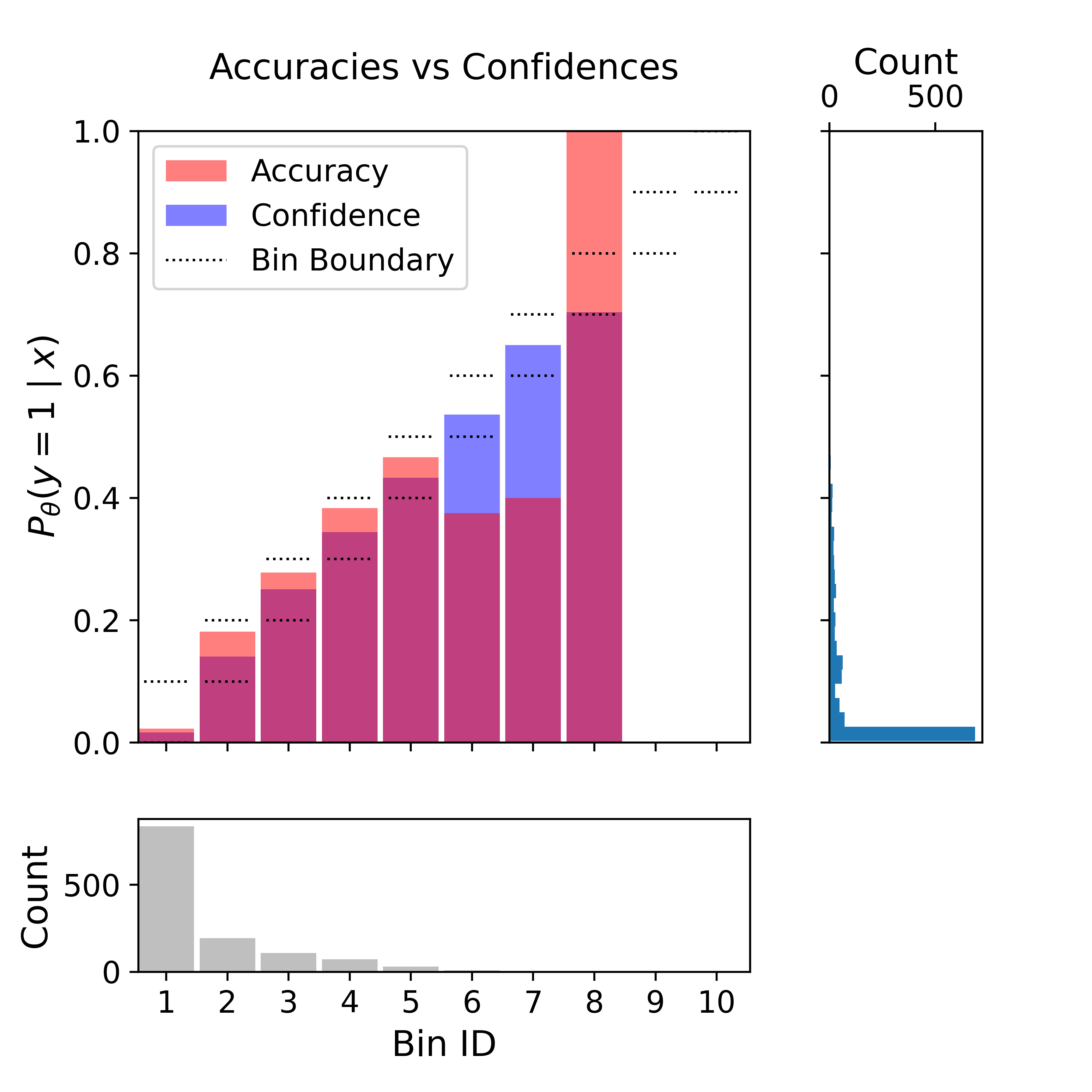}
    \end{subfigure}
    \caption{Comparison of two visual representations both applied for a gradient boosting model trained on the \emph{abalone} dataset used in \Cref{sec:experiment_uci}. (Left) A new visual representation, which we call the \emph{test-based reliability diagram}. The central plot shows a violin plot of model predictions in each bin, whose estimated probability is presented by a red line. The bottom plot shows by grey bar the sample size of each bin and by red bar the percentage of model predictions that deviate significantly from the estimated probability in each bin. The right plot shows a histogram of all model predictions. (Right) The standard reliability diagram with the bin-size plot on the bottom and the histogram plot on the right added for comparison.} \label{fig:reliability_diagram}
\end{figure*}

\paragraph{Advantage 2 (Consistent Scale)}
The test evaluates the statistical deviation of data from a model prediction $P_\theta(x)$ adaptively and appropriately for each scale of $P_\theta(x)$ and data size $N_b$.
Informally, TCE is the number of relative outliers determined for each $P_\theta(x)$ adaptively. 
This endows the value with a consistent scale robust to class imbalance.

\paragraph{Advantage 3 (Enhanced Visualisation)}
TCE leads to a new visual representation that shows the distribution of model predictions, and the proportion of model predictions that deviate significantly from an empirical probability in each bin.
See \Cref{fig:reliability_diagram} for the description and comparison with the standard reliability diagram.

Our interest is in the aggregated rejection percentage of all the tests performed, and so multiple testing corrections---e.g., the Bonferroni correction to offer a frequentist guarantee to control the familywise error rate---are not considered.
If all the null hypotheses were simultaneously true, TCE would simply coincide with the false positive rate which equals in expectation to type I error specified by the significant level of the test.
Full discussion on when and how adjustments for multiple hypotheses tests should be made may be found in \cite{Bender2001}.

Given that TCE is based on a statistical testing procedure, it may be possible to apply ideas from power analysis to inform the desired sample size in each $\D_b$.
Such analysis may also benefit the algorithm in the next subsection to compute optimal bins under the bin-size constraints, providing insights on what bin-size should be used as the constraints.
Finally, it is worth noting that TCE can be extended to multi-class classification.
The following remark presents one straightforward approach to the extension.

\begin{remark}
    Any calibration error metric defined for binary classification can be extended to multi-class classification by considering classwise-calibration \citep[e.g.][]{Kull2019}, where the calibration error metric is applied for one-vs-rest classification of each class independently.
    A modification of TCE in multi-class classification settings can then be defined as an average of TCEs applied for one-vs-rest classification of each class.
\end{remark}

\subsection{Optimal Bins by Monotonic Regressor and Bin-Size Constraints} \label{sec:lso}

It is a fundamental challenge to establish a practical and theoretically sound mechanism to design bins used in calibration error metrics.
Ideally designed bins provides accurate probability estimates $\{ \widehat{P}_b \}_{b=1}^{B}$ from data $\D$ while keeping the size of each bin reasonable.
To this end, we propose a novel algorithm to compute bins that aim to minimise an estimation error of the probability estimates $\{ \widehat{P}_b \}_{b=1}^{B}$ under the constraint of the size of each bin.

Recently, \cite{Dimitriadis2021} pointed out that an existing quadratic programming algorithm, called pool-adjacent-violators algorithm (PAVA), can be directly applied to compute ``optimal'' bins in the context of obtaining a better reliability diagram.
The bins are designed in a manner that minimises the \emph{Brier score}~\citep{Brier1950} of resulting empirical probabilities by virtue of PAVA.
Forging ahead with this observation, we introduce the following definition that makes explicit in what sense bins $\{ \Delta_b \}_{b=1}^{B}$ can be considered optimal given an arbitrary estimation error $\mathrm{D}$ of the probability estimates $\{ \widehat{P}_b \}_{b=1}^{B}$ from data $\D$.

\begin{definition} \textbf{\emph{(Optimal Bins)}} \label{def:LSO}
Let $\Pi$ be a space of all sets of bins $\{ \Delta_b \}_{b=1}^{B}$ for any $B$, with associated data subsets denoted by $\{ \D_b \}_{b=1}^{B}$ and probability estimates from $\{ \D_b^y \}_{b=1}^{B}$ denoted by $\{ \widehat{P}_b \}_{b=1}^{B}$.
Let $\mathrm{D}$ be any error function between an observed dataset of random variable $Y \in \{0, 1\}$ and a given probability $Q \in [0, 1]$.
Any set of bins that satisfies
\begin{align}
    & \min_{ \{ \Delta_b \}_{b=1}^{B} \in \Pi } ~ \sum_{b=1}^{B} W_b \times \mathrm{D}(\D_b^y, \widehat{P}_b ) \nonumber \\
    & \hspace{100pt} \text{\emph{subject to}}~\widehat{P}_1 \le \dots \le \widehat{P}_B \label{eq:M2E2_partition}
\end{align}
can be considered an optimal set of bins under the estimation error $\mathrm{D}$, where $W_b := N_b / N$ is the weight associated with the error of subset $\D_b^y$ of size $N_b$.
\end{definition}

The monotonic constraint $\widehat{P}_1 \le \cdots \le \widehat{P}_B$ of the probability estimates $\{ \widehat{P}_b \}_{b=1}^{B}$ is a natural requirement because the choice of bins becomes trivial otherwise.
For example, consider bins $\{ \Delta_b \}_{b=1}^{B}$ with $B = N$ such that $\Delta_b$ contains one single point $y_b$ and the probability estimate $\widehat{P}_b = y_b$ for each $b$.
This clearly achieves that $\sum_{b=1}^{B} W_b \times \mathrm{D}(\D_b^y, \widehat{P}_b) = \frac{1}{N} \sum_{b=1}^{N} \mathrm{D}(\{ y_b \}, y_b) = 0$.
Under the monotonic constraint, the choice of bins becomes non-trivial.

Under some choices of the estimation error $\mathrm{D}$, the optimisation of \cref{eq:M2E2_partition} can be solved as a monotonic regression problem.
Given an ordered dataset $\{ y_i \}_{i=1}^{N}$, a monotonic regression algorithm finds $N$ monotonically increasing values $\widehat{y}_1 \le \cdots \le \widehat{y}_N$ that minimise some loss between $\{ \widehat{y}_i \}_{i=1}^{N}$ and $\{ y_i \}_{i=1}^{N}$.
There exist algorithms for various losses, including the $l_p$ loss, the Huber loss, and the Chebyshev loss \citep{Leeuw2009}.
PAVA solves a monotonic regression problem under the squared error $\sum_{i=1}^{N} ( \widehat{y}_i - y_i )^2$.
If we choose the error $\mathrm{D}$ as the variance of each $\D_b^y$, i.e.,
\begin{align}
    \mathrm{D}(\D_b^y, \widehat{P}_b ) = \frac{1}{N_b} \sum_{i=1}^{N_b} ( y_i - \widehat{P}_b )^2 \label{eq:D_variance}
\end{align}
the optimal set of bins under $\mathrm{D}$ can be obtained using PAVA, which corresponds to the case of \cite{Dimitriadis2021}.
See \Cref{sec:appendix_a} for the proof that the optimisation criterion of \cref{eq:M2E2_partition} is indeed minimised at bins obtained using PAVA.
The approach using PAVA is a highly appealing solution to the design of bins $\{ \Delta_b \}_{b=1}^{B}$ because it achieves a fully-automated design of the bins based on the clear criterion of \cref{eq:M2E2_partition}.
However, such a fully-automated design can occasionally generate a bin that contains an excessively small or large number of data for the sake of minimising the aggregated estimation error over all $\{ \widehat{P}_b \}_{b=1}^{B}$.
Imposing a certain regularisation on the minimum and maximum size of each $\D_b$ can aid in keeping some baseline quality of the estimation of each individual $\widehat{P}_b$.

\begin{algorithm}
    \caption{PAVA-BC (PAVA with Block Constraints)} \label{alg:NC-PAVA}
    \begin{algorithmic}
	\INPUT ordered scalars $\{ y_i \}_{i=1}^{N}$, size constraints $N_{\text{min}}$ and $N_{\text{max}}$ s.t.~$0 \le N_{\text{min}} \le N_{\text{max}} \le N$. 
	\OUTPUT sequence $\{ \widehat{y}_i \}_{i=1}^{N}$
	\State $B \gets 0$ 
	\For{$i = 1, \dots, N - N_{min}$}
	\State $B \gets B + 1$ 
	\State $Y_B \gets y_i$ 
	\State $W_B \gets 1$ 
	\While{$B > 1$}
        \If{$W_{B-1}+W_{B} > N_{min}$}
        \State \textbf{If} $W_{B-1}+W_{B} > N_{max}$ \textbf{then} Break
        \State \textbf{If} $Y_{B-1}/W_{B-1} < Y_{B}/W_{B}$ \textbf{then} Break
        \EndIf
	\State $Y_{B-1} \gets Y_{B-1} + Y_B$
	\State $W_{B-1} \gets W_{B-1} + W_B$
	\State $B \gets B - 1$
	\EndWhile
	\EndFor
        \If{$W_B + N_{min} \le N_{max}$}
        \State $Y_B \gets Y_B + \sum_{i=N-N_{\text{min}+1}}^{N} y_i$
	\State $W_B \gets W_B + N_{min}$
        \Else
	\State $B \gets B + 1$
	\State $Y_B \gets \sum_{i=N-N_{\text{min}}+1}^{N} y_i$
	\State $W_B \gets N_{min}$
        \EndIf
        \State $s \gets 0$
        \For{$j = 1, \dots, B$}
        \For{$k = 1, \dots, W_j$}
        \State $\widehat{y}_{s + k} \gets Y_{j} / W_{j}$
        \EndFor
        \State $s \gets s + W_j$
        \EndFor
\end{algorithmic}
\end{algorithm}

\begin{algorithm}
    \caption{Near-Optimal Bins Based on PAVA-BC} \label{alg:LSO}
    \begin{algorithmic}
        \INPUT data $\D$, model $P_\theta$, size constraints $N_{\text{min}}$ and $N_{\text{max}}$ s.t.~$0 \le N_{\text{min}} \le N_{\text{max}} \le N$.  
        \OUTPUT a set of bins $\{ \Delta_b \}_{b=1}^{B}$
        \State $\{ y_i \}_{i=1}^{N} \gets \text{Sort}(\D, P_\theta)$
        \State $\{ \widehat{y}_i \}_{i=1}^{N} \gets \text{PAVA-BC}(\{ y_i \}_{i=1}^{N}, N_{\text{min}}, N_{\text{max}})$
        \State $B \gets 1$
        \State $L \gets 0$
        \State $R \gets 0$
        \For{$i = 2, \dots, N$}
        \If{$\widehat{y}_{i-1} \ne \widehat{y}_i$}
        \State $R \gets ( P_\theta(x_{i-1}) + P_\theta(x_i) ) / 2$
	\State $\Delta_{B} \gets [L, R)$
	\State $L \gets R$
        \State $B \gets B + 1$
	\EndIf
	\EndFor
	\State $\Delta_{B} \gets [L, 1.0]$
    \end{algorithmic}
\end{algorithm}

Therefore, we propose a modified version of PAVA that regularises based on the given minimum and maximum size of each subset $\D_b^y$.
\Cref{alg:NC-PAVA} summarises the full algorithm, which we call \emph{PAVA with block constraints} (PAVA-BC), followed by \Cref{alg:LSO} that summarises how to compute bins using PAVA-BC accordingly, where $\text{Sort}(\mathcal{D}, P_\theta)$ in \Cref{alg:LSO} denotes any algorithm that sorts labels $\{ y_i \}_{i=1}^{N}$ in acending order of model predictions $\{ P_\theta(x_i) \}_{i=1}^{N}$.
By \Cref{alg:LSO}, we can obtain bins that satisfy the given minimum and maximum size constraints $N_{\text{min}}$ and $N_{\text{max}}$ in each $\D_b$, while benefitting from the automated design of bins by PAVA.
A set of bins based on PAVA can be recovered by replacing PAVA-BC with PAVA in \Cref{alg:LSO}.
In general, the introduction of the regularisation can cause mild violation of the monotonicity $\widehat{P}_1 \le \cdots \le \widehat{P}_B$, meaning that there may exist a few values $\widehat{P}_b$ that is smaller than $\widehat{P}_{b-1}$.
See \Cref{sec:appendix_b} for each example where mild violation of the monotonicity by PAVA-BC occured and did not occur.
In practice, mild violation of the monotonicity can often be a reasonable cost to achieve better properties of bins.
For example, \cite{Tibshirani2011} studied settings where the monotonicity is only ``nearly" satisfied.

See \Cref{fig:comparison_bins} for a comparison of the bins computed by three different approaches: PAVA, PAVA-BC, and binning based on $10$-quantiles.
The bins produced by PAVA-BC interpolate between the optimal bins produced by PAVA and the well-sized bins produced by binning based on quantiles.
This is further confirmed by \Cref{tab:comparison_bins} which shows the total estimation error in \cref{eq:M2E2_partition} and the estimation error within each bin in \cref{eq:D_variance} for each approach.
The total estimation error is minimised by PAVA, while an average of the estimation error within each bin is minimised by binning based on quantiles.
In contrast, PAVA-BC takes a balance between the total and individual estimation error.

\begin{figure}[t]
    \centering
    \includegraphics[width=\columnwidth]{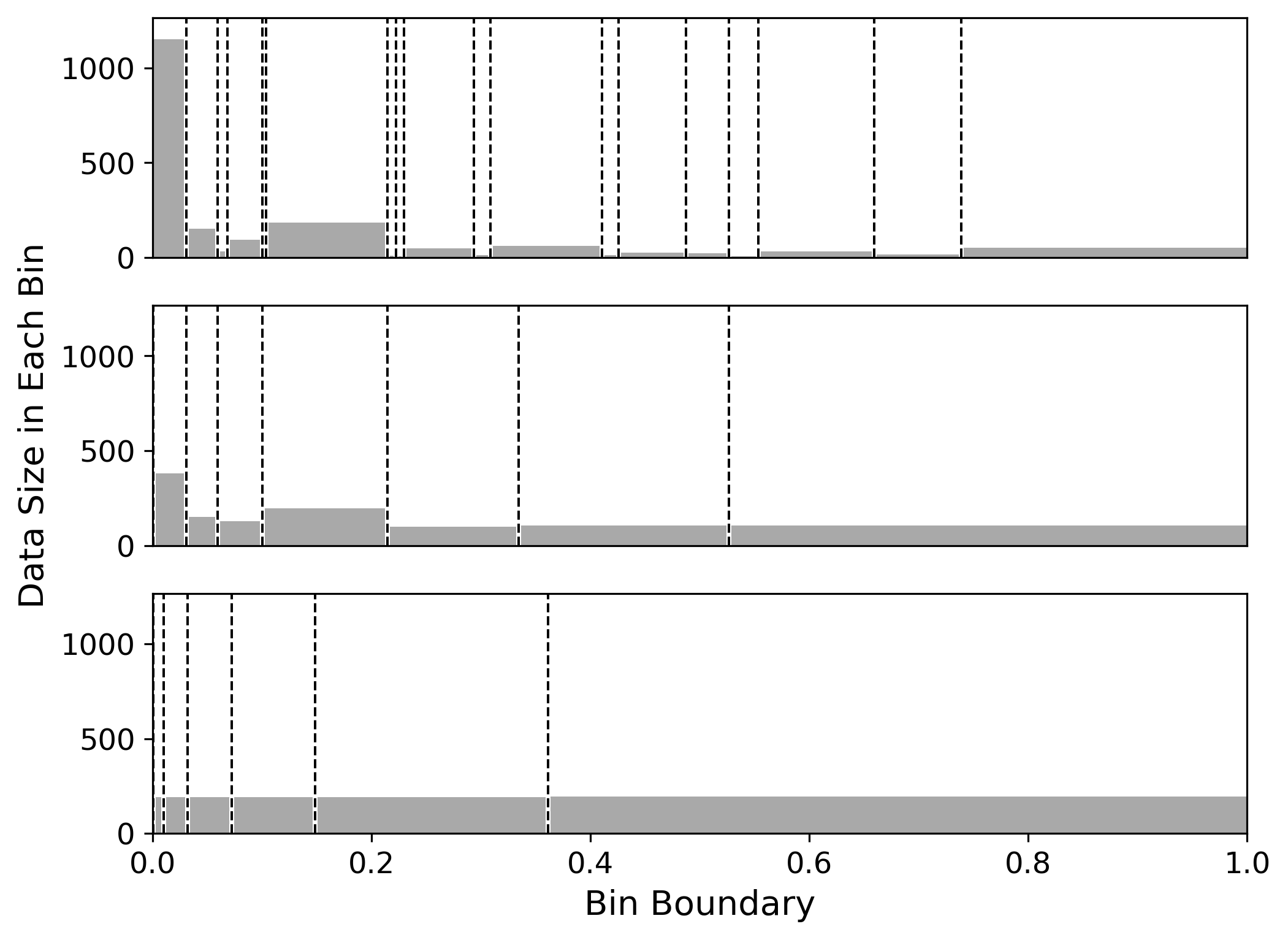}
    \caption{Comparison of bins for a random forest model on the \emph{satimage} dataset used in \Cref{sec:experiment_uci} based on (top) PAVA, (middle) PAVA-BC, (bottom) binning based on $10$-quantiles. The dotted line represents the boundary of each bin and the grey bar represents the size of each bin.}
    \label{fig:comparison_bins}
\end{figure}

\begin{table}[h]
\caption{The total estimation error and an average of the estimation error within each bin for the bins in \Cref{fig:comparison_bins}.} \label{tab:comparison_bins}
\centering
\setlength{\tabcolsep}{0.55em}
\resizebox{\linewidth}{!}{ 
\begin{tabular}{@{}lccc@{}}
\toprule
 & \bfseries PAVA    & \bfseries PAVA-BC    & \bfseries Quantile \\
\midrule
Total Error & 0.040 & 0.042 & 0.048 \\
Averaged Within-Bin Error & 0.132 & 0.077 & 0.047 \\
\bottomrule
\end{tabular}
}
\end{table}

%%%%%%%%%%%%%%%%%%%%%%%%%%%%%%%%%%%%%%%%%%%%%%%%%%%%%%%%%%%%
% Experiment
%%%%%%%%%%%%%%%%%%%%%%%%%%%%%%%%%%%%%%%%%%%%%%%%%%%%%%%%%%%%
%auto-ignore
%!TEX root = main.tex
%%%%%%%%%%%%%%%%%%%%%%%%%%%%%%%%%%%%%%%%%%%%%%%%%%%%%%%%%%%%
% Experiment
%%%%%%%%%%%%%%%%%%%%%%%%%%%%%%%%%%%%%%%%%%%%%%%%%%%%%%%%%%%%
\section{Empirical Evaluation} \label{sec:experiment}

In this section, we demonstrate the properties of TCE via three experiments.
The first experiment uses synthetic data to examine the properties of TCE under controlled class imbalance.
The second experiment involves ten real-world datasets from the University of California Irvine (UCI) machine learning repository \citep{Dua2017}, where nine are designed as benchmark tasks of imbalanced classification, and one is a well-balanced classification task for comparison.
In the second experiment, we also demonstrate that ECE and ACE may produce misleading assessments of calibration performance under class imbalance.
TCE has the potential to reduce such misinterpretation risks.
The final experiment uses the ImageNet1000 dataset to illustrate that TCE is applicable to large-scale settings.
In all experiments, models are fitted to training data first and any calibration error metric are computed using validation data.
Source code to reproduce the experiments is available in \url{https://github.com/facebookresearch/tce}.

We compute TCE with bins based on PAVA-BC unless otherwise stated.
The minimum and maximum size of each bin for PAVA-BC are set to $N / 20$ and $N / 5$ for a given dataset size $N$.
Under these constraints, the number of bins based on PAVA-BC falls into a range between 5 and 20.
In addition to ECE and ACE, we include the maximum calibration error (MCE) \citep{Naeini2015} for comparison.
MCE is defined by replacing the weighted 1-norm with the supremum norm over $b = 1, \dots, B$ in \Cref{ex:ece_gce}.
We denote, by TCE(Q) and MCE(Q), TCE and MCE each with bins based on $B$-quantiles.
For all metrics, $B$-equispaced bins and $B$-quantiles bins are computed with $B = 10$.

\subsection{Synthetic Data with Controlled Class Imbalance}

We first examine TCE using synthetic data from a simulation model considered in~\cite{Vaicenavicius2019}.
The data are simulated from a Gaussian discriminant analysis model $(x, y) \sim P(x \mid y) P(y)$. 
The output $y \in \{0, 1\}$ is first sampled from a Bernoulli distribution $P(y)$ with parameter $\pi$ and the input $x \in \R$ is then sampled from a Gaussian distribution $P(x \mid y) = \mathcal{N}(m_y, s_y)$ with mean $m_y$ and scale $s_y$ dependent of $y$.
We set $m_y = (2 \times y - 1)$ and $s_y = 2$, and change the parameter $\pi$ for each setting below.
By Bayes' theorem, the conditional probability of $y$ given $x$ corresponds to a logistic model: $P(y \mid x) = 1 / (1 + \exp(\beta_0 + \beta_1 \times x))$ where $\beta_0 = \log( \pi / (1 - \pi))$ and $\beta_1 = 4$.
A logistic model is therefore capable of reproducing the probability $P(y \mid x)$ of this synthetic data perfectly.

We consider two baseline cases of (i) well-balanced classification and (ii) imbalanced classification in this experiment.
We train a logistic model for the training data simulated with the parameter $\pi = 0.5$ (i.e.~50\% prevalence) in case (i) and with $\pi = 0.01$ (i.e.~1\% prevalence) in case (ii).
In each case (i) and (ii), we generate three different test datasets to create situations where the trained model is (a) well-calibrated, (b) over-calibrated, and (c) under-calibrated.
We examine the performance of TCE under these senarios.
Test datasets for senarios (a), (b), and (c) are generated from the simulation model with prevalences $50\%$, $40\%$, and $60\%$ in case (i) and with prevalences $1\%$, $0\%$, and $2\%$ in case (ii).
We generate 20000 data points in total, of which 70\% are training data and 30\% are test data.

\Cref{tab:table_41} shows the values of four calibration error metrics applied to the logistic regression model in each scenario.
\Cref{tab:table_41} demonstrates that all values of ECE and ACE in imbalanced case (ii) can be smaller than---or very close to---values for well-calibrated senario (a) in well-balanced case (i).
For example, the ECE value for case (ii)-(b) was smaller than that for case (i)-(a).
In contrast, TCE provides values with a consistent scale in both well-balanced and imbalanced cases.
More simulation studies of TCE with different hyperparameters are presented in \Cref{sec:appendix_c1}. 

\begin{table}[h]
\caption{Comparison of four calibration error metrics under senarios (a) - (c) in each case (i) and (ii).} \label{tab:table_41}
\centering
\resizebox{\linewidth}{!}{ 
\begin{tabular}{@{}lccccc@{}}
\toprule
\bfseries Prevalence & \bfseries TCE & \bfseries TCE(Q) & \bfseries ECE    & \bfseries ACE \\
\midrule
50\% vs 50\% & 7.28\%  & 10.88\%  & 0.0138 & 0.0150 \\
50\% vs 40\% & 96.10\%  & 96.47\%  & 0.0963 & 0.0951 \\
50\% vs 60\% & 98.83\%  & 98.93\%  & 0.1097 & 0.1096 \\
1\% vs 1\% & 3.40\%  & 0.18\%  & 0.0017 & 0.0031 \\
1\% vs 0\% & 95.50\%  & 68.73\%  & 0.0094 & 0.0094 \\
1\% vs 2\% & 92.32\%  & 89.73\%  & 0.0139 & 0.0139 \\
 \bottomrule
\end{tabular}
}
\end{table}

\subsection{Imbalanced UCI Datasets} \label{sec:experiment_uci}

Next, we compare calibration error metrics using real-world datasets in the regime of severe class imbalance. 
We use nine UCI datasets that were preprocessed by \cite{Lemaitre2017} as benchmark tasks of imbalanced classification.
We also use one additional UCI dataset with a well-balanced prevalence for comparison.
For each dataset, 70\% of samples are used as training data and 30\% of samples are kept as validation data.
We train five different algorithms: logistic regression (LR), support vector machine (SVM), random forest (RF), gradient boosting (GB), and multi-layer perceptron (MLP).
We evaluate the calibration performance of each model by five different calibration error metrics in the following tables.
\Cref{tab:table_41_1,tab:table_41_2} show results for the imbalanced datasets, \emph{abalone} and \emph{webpage} \citep{Dua2017}, respectively.
Results for all the other datasets are presented in \Cref{sec:appendix_c2}.
In \Cref{tab:table_41_1}, the best model ranked by TCE and ACE agree with each other while ECE identifies RF as the best model.
It can be observed from the reliability diagram of ECE for both the datasets in \Cref{sec:appendix_c2} that a large majority of model predictions are contained in a single bin of ECE. 
In such cases, ECE becomes essentially equivalent to a comparison of global averages of all labels and all model predictions.
\Cref{tab:table_41_2} demonstrates a situation where ECE and ACE risk misleading assessments of calibration performance.
Several values of ECE and ACE are all sufficiently small in \Cref{tab:table_41_2}, by which one may conclude that it is reasonable to use a model with the smallest calibration error.
However, the values of TCE indicate that no model has a good calibration performance.
In fact, relatively large statistical deviations between model predictions and empirical probabilities can be observed from the test-based reliability diagram for the webpage dataset in \Cref{sec:appendix_c2}.

\begin{table}[h]
\caption{Comparison of five calibration error metrics for five different algorithms trained on the abalone dataset.} \label{tab:table_41_1}
\centering
\resizebox{\linewidth}{!}{ 
\begin{tabular}{@{}lccccc@{}}
\toprule
 & \bfseries TCE    & \bfseries ECE    & \bfseries ACE    & \bfseries MCE    & \bfseries MCE(Q) \\
\midrule
 LR  & 7.26\%  & 0.0140 & 0.0252 & 0.0946 & 0.0851 \\
 SVM & 47.21\% & 0.0436 & 0.0473 & 0.8302 & 0.1170 \\
 RF  & 33.89\% & 0.0127 & 0.0177 & 0.0670 & 0.0547 \\
 GB  & 4.86\%  & 0.0182 & 0.0160 & 0.2965 & 0.0418 \\
 MLP & 3.83\%  & 0.0167 & 0.0122 & 0.0806 & 0.0540 \\
\bottomrule 
\end{tabular}
}
\end{table}

\begin{table}[h]
\caption{Comparison of five calibration error metrics for five different algorithms trained on the webpage dataset.} \label{tab:table_41_2}
\centering
\resizebox{\linewidth}{!}{ 
\begin{tabular}{@{}lccccc@{}}
\toprule 
 & \bfseries TCE    & \bfseries ECE    & \bfseries ACE    & \bfseries MCE    & \bfseries MCE(Q) \\
\midrule 
 LR  & 40.16\% & 0.0044 & 0.0034 & 0.3134 & 0.0214 \\
 SVM & 59.83\% & 0.0043 & 0.0057 & 0.5402 & 0.0239 \\
 RF  & 99.66\% & 0.0234 & 0.0241 & 0.5980 & 0.1189 \\
 GB  & 71.12\% & 0.0086 & 0.0107 & 0.2399 & 0.0436 \\
 MLP & 49.81\% & 0.0090 & 0.0018 & 0.4344 & 0.0076 \\
\bottomrule 
\end{tabular} 
}
\end{table}

\subsection{K-vs-Rest on ImageNet1000}

Finally, we demonstrate that TCE is applicable for a large-scale binary classification task using ImageNet1000 data.
We consider a K-vs-rest classification problem by using a set of all dog-kind classes (from class 150 to class 275) as a positive class and the rest as a negative class.
Under this setting, 12.5\% of validation samples belong to the positive class.
We used 5 different trained models: AlexNet, VGG19, ResNet18, ResNet50, and ResNet152.
Their calibration errors were measured based on the ImageNet1000 validation dataset consisting of 50000 data points.
\Cref{tab:table_imagenet} demonstrates that TCE produces interpretable values, with model rankings that largely agree with other metrics in this setting.
The last row of \Cref{tab:table_imagenet} shows the average computational time of each metric.
Computation of all the procedures in TCE required only 71.78 seconds for 50000 data points with 1 CPU on average.
The reliability diagrams corresponding to the results are presented in \Cref{sec:appendix_c3}.

\begin{table}[ht]
\caption{Comparison of five calibration error metrics for five different deep learning models on ImageNet1000 data.} \label{tab:table_imagenet}
\centering
\setlength{\tabcolsep}{0.55em}
\resizebox{\linewidth}{!}{ 
\begin{tabular}{@{}lccccc@{}}
\toprule
 & \bfseries TCE    & \bfseries ECE    & \bfseries ACE    & \bfseries MCE    & \bfseries MCE(Q)    \\
\midrule
AlexNet & 42.74\% & 0.0070 & 0.0070 & 0.1496 & 0.0528 \\
VGG19   & 23.57\% & 0.0028 & 0.0028 & 0.2148 & 0.0247 \\
Res18   & 29.93\% & 0.0042 & 0.0042 & 0.2368 & 0.0350 \\
Res50   & 24.60\% & 0.0020 & 0.0018 & 0.1911 & 0.0152 \\
Res152  & 16.09\% & 0.0012 & 0.0013 & 0.1882 & 0.0102 \\
\midrule
\bfseries{Time (s)} & 71.78  & 0.4873  & 0.4221  & 0.0046  & 0.0063 \\ 
\bottomrule
\end{tabular}
}
\end{table}

%%%%%%%%%%%%%%%%%%%%%%%%%%%%%%%%%%%%%%%%%%%%%%%%%%%%%%%%%%%%
% Conclusion
%%%%%%%%%%%%%%%%%%%%%%%%%%%%%%%%%%%%%%%%%%%%%%%%%%%%%%%%%%%%
%auto-ignore
%!TEX root = main.tex
%%%%%%%%%%%%%%%%%%%%%%%%%%%%%%%%%%%%%%%%%%%%%%%%%%%%%%%%%%%%
% Conclusion
%%%%%%%%%%%%%%%%%%%%%%%%%%%%%%%%%%%%%%%%%%%%%%%%%%%%%%%%%%%%

\section{Related Work} \label{sec:related_work}

Several calibration error metrics have been proposed, including the aforementioned ECE. 
MCE is a widely used variant of ECE that replaces the summation over $b=1, \dots, B$ in~\eqref{eq:ece} with the supremum over $b=1, \dots, B$.
\citep{Kumar2019} introduce a more general $l_p$ calibration error, which includes both ECE and MCE.
ACE replaces the equispaced bins in ECE with bins designed based on quantiles of model predictions, which prevents high concentration of data in one bin when data is imbalanced~\citep{Nixon2019}.
These calibration error metrics can be extended to multi-class classification~\citep{Kumar2019}.
Other than calibration error, scoring functions~\citep{Gneiting2007} are commonly used measurements to evaluate a probabilistic classifier.
\citep{wallace2014improving} reported a limitation of the Brier score for imbalanced classification, and proposed the \emph{stratified} Brier score that aggregates multiple Brier scores.

This paper designed a new calibration error metric based on a statistical test.
While statistical tests have been used in the context of calibration, we are the first to incorporate a statistical test into the design of a calibration error metric.
\cite{Vaicenavicius2019} performed a statistical test on whether ECE computed for synthetic data generated from predictive probabilities is significantly different from ECE computed for actual data.
Similarly, \cite{Widmann2019} proposed a statistical test of the value of their calibration error metric built on kernel methods.
In contrast to existing works which considered a test for final values of calibration error metrics, our approach incorporates a test into the metric itself.

While the use of binning is vital in the vast majority of calibration metrics, there are a few works on the \emph{binning-free} design of calibration error metrics.
The main idea is to use an cumulative distribution function (CDF) of predictive probabilities, which can be estimated without binning, and evaluate how significantly it differs from an ideal CDF that occurs if the predictive probabilities are all well-calibrated.
For example, \cite{Gupta2021} and \cite{Arrieta-Ibarra2022} considered the Kolmogorov-Smirnov test for the empirical CDF, where \cite{Gupta2021} further proposed a spline interpolation to obtain a continuous approximation of the CDF.
An approach proposed by \cite{Kull2017} can also be regarded as binning-free.
It uses a continuous CDF of the beta distribution produced by their calibration method, mentioned below, rather than the empirical CDF.

\emph{Calibration methods} refer to algorithms used to improve the calibration performance of a model $P_\theta$.
Usually, they learn some `post-hoc' function $\varphi: [0, 1] \to [0, 1]$ to be applied to each model predictio so that the new prediction $\varphi(P_\theta(x))$ is better calibrated.
Various calibration algorithms have been proposed in parallel to the development of calibration error metrics.
Platt scaling uses a logistic function for the post-hoc function $\varphi$ \citep{Platt1999}. 
Alternatively, \cite{Kull2017,Kull2019} proposed to use a beta distribution in binary classification and a Dirichlet distribution in multi-class classification.
Isotonic regression is a powerful non-parametric approach to find a monotonically increasing function $\varphi$ that minimises the Brier score~\citep{Zadrozny2002}.
Finally, Bayesian Binning into Quantiles by~\cite{Naeini2015} extends a classical histogram-based calibration~\citep{Zadrozny2001} to an ensemble of histogram-based calibrations based on Bayesian model averaging.

\section{Conclusion} \label{sec:conclusion}
In this paper, we proposed a new calibration error metric TCE that incorporates a novel loss function based on a statistical test.
TCE has (i) a clear interpretation as a percentage of model predictions determined to deviate significantly from estimated empirical probabilities, (ii) a consistent scale that is robust to class imbalance, and (iii) an informative visual representation that facilitates a better understanding of calibration performance of probabilistic classifiers.
We further introduced an optimality criterion of bins associated with a minimal estimation error of the empirical probabilities and a new algorithm to compute optimal bins approximately under the constraint of the size of each bin.

Our proposal opens up room for new research directions in the context of calibration.
This paper focuses on the methodological development of TCE.
There are various directions to investigate in terms of theoretical properties of TCE.
These include the convergence properties of TCE in the limit of data size $N$, understanding the minimum number of data points that should be contained in each subset $\D_b$, and a rigorous theoretical analysis of PAVA-BC. 
By continuing to investigate these areas, we can refine and expand our understanding of the capabilities of TCE.

%%%%%%%%%%%%%%%%%%%%%%%%%%%%%%%%%%%%%%%%%%%%%%%%%%%%%%%%%%%%
% Acknowledgement & Reference
%%%%%%%%%%%%%%%%%%%%%%%%%%%%%%%%%%%%%%%%%%%%%%%%%%%%%%%%%%%%
\begin{acknowledgements}
The authors would like to thank Abbas Zaidi, Michael Gill, and Will Bullock for their useful feedback on early work of this paper.
TM is supported by The Alan Turing Institute under the EPSRC grant EP/N510129/1.
\end{acknowledgements}

\bibliography{bibliography}

%auto-ignore
%!TEX root = main.tex

\onecolumn %% Turn this off if single column is desired for the supplement
\title{TCE: A Test-Based Approach to Measuring Calibration Error\\(Supplementary Material)}
\maketitle

\appendix

This supplement contains all the additional results referred to in the main text. \Cref{sec:appendix_a} contains the proof that the optimisation criterion of \cref{eq:M2E2_partition} is indeed minimised using PAVA.
\Cref{sec:appendix_b} shows an example of bins obtained using PAVA-BC that caused mild violation of the monotonic constraint of the empirical probabilities $\{ \widehat{P}_b \}_{b=1}^{B}$.
Finally, additional experimental results are presented in \Cref{sec:appendix_c}.

\section{Optimal Bins Based on PAVA} \label{sec:appendix_a}

The optimal bins defined by \Cref{def:LSO} can be exactly computed under the error function $\mathrm{D}$ specified by \cref{eq:D_variance} which corresponds to the variance of each $\D_b^y$.
The optimal bins result in minimisation of a weighted average of the variance of each $\D_b^y$ over all $b$, where the weights are proportional to the size of each bin.
The following proposition shows that \Cref{alg:LSO} with PAVA-BC replaced by PAVA generates the optimal bins under the error function $\mathrm{D}$.
In what follows, we assume a standard setting where the solution of \cref{eq:M2E2_partition} is at least not a set of only one single bin, i.e., $\{ \Delta_b \}_{b=1}^{1} = \{ [0, 1] \}$.

\begin{proposition}
The minimum of \cref{eq:M2E2_partition} in \Cref{def:LSO} under the error function $\mathrm{D}$ in \cref{eq:D_variance} is attained at bins computed by \Cref{alg:LSO} with PAVA-BC replaced by PAVA.
\end{proposition}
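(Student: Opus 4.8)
The plan is to substitute the weights $W_b = N_b/N$ and the variance error $\mathrm{D}$ of \cref{eq:D_variance} into the objective of \cref{eq:M2E2_partition}, show it equals $1/N$ times the objective of the monotonic (isotonic) least-squares regression that PAVA solves, and then argue that the constrained minimisation over bins has the same optimal value, which is furthermore achieved precisely at the PAVA level sets --- i.e.\ at the bins returned by \Cref{alg:LSO} with PAVA.

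First I would reduce sets of bins to ordered blocks of the sorted data. Sort $\{(x_i,y_i)\}_{i=1}^N$ in ascending order of $P_\theta(x_i)$. Since each $\Delta_b$ is an interval and the intervals partition $[0,1]$, every set of bins $\{\Delta_b\}_{b=1}^B$ induces a partition of the sorted sequence $\{y_i\}_{i=1}^N$ into $B$ \emph{consecutive} blocks of indices; under the standing assumption that the predictions are distinct (equivalently, that tied predictions are kept in a common bin) a block boundary can occur only between consecutive indices with strictly different predictions. Conversely, every consecutive-block partition of $\{y_i\}$ is realised by some set of bins, e.g.\ placing boundaries at midpoints of neighbouring predictions as in \Cref{alg:LSO}. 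Hence the feasible set of \cref{eq:M2E2_partition} is in bijection with the consecutive-block partitions whose block means are non-decreasing; in particular it is finite, so a minimiser exists (this also takes care of the minimisation over all $B$).

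Next, substituting the weights and $\mathrm{D}$ gives $\sum_{b=1}^B W_b\,\mathrm{D}(\D_b^y,\widehat P_b)=\tfrac1N\sum_{b=1}^B\sum_{i\in\D_b}(y_i-\widehat P_b)^2=\tfrac1N\sum_{i=1}^N(y_i-f_i)^2$, where $f$ is the piecewise-constant sequence equal to $\widehat P_b$ on block $b$; by \cref{eq:estimate_p} each $\widehat P_b$ is the mean of its block, and the constraint $\widehat P_1\le\cdots\le\widehat P_B$ says exactly that $f$ is non-decreasing. For any feasible bin set, this $f$ is therefore admissible in $\min_{\widehat y_1\le\cdots\le\widehat y_N}\sum_i(y_i-\widehat y_i)^2$, the problem PAVA solves, so the minimum of \cref{eq:M2E2_partition} is $\ge \tfrac1N$ times the PAVA optimum. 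For the reverse inequality I would use the standard structure of the isotonic least-squares solution \citep{Leeuw2009}: the PAVA fit $\{\widehat y_i\}$ is constant on consecutive level-set blocks, its value on each block equals that block's mean of $\{y_i\}$, and those values are non-decreasing. These blocks thus form a feasible bin set for \cref{eq:M2E2_partition}, and it is exactly the one produced by \Cref{alg:LSO} with PAVA, since that algorithm inserts a boundary precisely where $\widehat y_{i-1}\ne\widehat y_i$ and the resulting $\widehat P_b$ coincide with the block means, i.e.\ with $\widehat y_i$. Its objective value is therefore $\tfrac1N\sum_i(y_i-\widehat y_i)^2$, the PAVA optimum over $N$, and combining the two inequalities yields equality.

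The main obstacle is the bin/block correspondence of the first step, and in particular ties in $\{P_\theta(x_i)\}$: a PAVA level-set boundary could in principle fall inside a tie group and fail to correspond to a genuine interval boundary, so the ``standard setting'' distinctness (or co-binning) assumption is doing real work. A second point to handle carefully --- but which is in fact what makes the argument go through --- is that the monotonicity constraint in \Cref{def:LSO} is imposed on the empirical probabilities $\widehat P_b$, which by \cref{eq:estimate_p} are the block means, and this is exactly the condition under which the induced sequence $f$ is feasible for the isotonic regression problem; the remaining manipulations are a routine bias--variance/isotonic-regression computation.
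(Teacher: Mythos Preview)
Your proposal is correct and follows essentially the same approach as the paper: both reduce the bin-optimisation objective to $\tfrac{1}{N}\sum_i (y_i-f_i)^2$ with $f$ piecewise constant and non-decreasing, identify this with the isotonic least-squares problem, and then invoke the standard block-mean structure of the PAVA solution to conclude that \Cref{alg:LSO} with PAVA returns an optimal set of bins. Your version is in fact slightly more careful than the paper's --- you make the two-sided inequality explicit, you note finiteness of the feasible set (hence existence of a minimiser), and you flag the tie issue in $\{P_\theta(x_i)\}$ that the paper leaves implicit --- but the underlying argument is the same.
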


\begin{proof}
First, we show that the optimasation problem of \cref{eq:M2E2_partition} in \Cref{def:LSO} under the loss function $\mathrm{D}$ in \cref{eq:D_variance} is equivalent to the monotonic regression problem under the squared error.
Recall that, given a choice of bins $\{ \Delta_b \}_{b=1}^{B}$, each label subset $\D_{b}^y$ is defined by $\D_{b}^y := \{ y_i \in \D^y \mid P_\theta(x_i) \in \Delta_b \}$.
The input of \Cref{alg:LSO} is a set of labels $\D^y = \{ y_i \}_{i=1}^{N}$ ordered by in ascending order of $\{ P_\theta(x_i) \}_{i=1}^{N}$.
This means that each label subset $\D_{b}^y$ is a set of consecutive elements in the ordered set $\{ y_i \}_{i=1}^{N}$.
Therefore, there exist corresponding indices $n_b$ and $n_{b+1}$ s.t.~each label subset $\D_{b}^y$ can expressed by
\begin{align*}
    \D_{b}^y = \{ y_i \in \D^y \mid P_\theta(x_i) \in \Delta_b \} = \{  y_i  \in \D^y \mid i~~\text{s.t.}~~n_b \le i < n_{b+1} \} .
\end{align*}
Accordingly, with the ordered labels $\D^y$, each empirical probability $\widehat{P}_b$ in $\D_{b}^y$ can be expressed by
\begin{align*}
    \widehat{P}_b = \frac{1}{N_b} \sum_{y \in \D_b^y} y = \frac{1}{n_{b+1} - n_b} \sum_{j = n_b}^{n_{b+1}-1} y_j . 
\end{align*}
Define a set of scalars $\{ g_i \}_{i=1}^{N}$ whose element $g_i \in [0, 1]$ corresponds to the empirical probability $\widehat{P}_b$ of the bin index $b$ if $n_b \le i < n_{b+1}$.
Namely,
\begin{align}
    g_i  := \widehat{P}_b = \frac{1}{n_{b+1} - n_b} \sum_{j = n_b}^{n_{b+1}-1} y_j \quad \text{for each} \quad i \quad \text{s.t.} \quad n_b \le i < n_{b+1} . \label{eq:g_i}
\end{align}
Under these notations, the optimisation criterion in \cref{eq:M2E2_partition} can be rewritten as
\begin{align}
    \sum_{b=1}^{B} W_b \times \mathrm{D}(\mathcal{D}_b, \widehat{P}_b) & = \frac{1}{N} \sum_{b=1}^{B} \sum_{y \in \mathcal{D}_b} \left( y - \widehat{P}_b \right)^2 = \frac{1}{N} \sum_{b=1}^{B} \sum_{i = n_b}^{n_{b+1}-1} \left( y_i - \widehat{P}_b \right)^2 = \frac{1}{N} \sum_{i=1}^{N} ( y_i - g_i )^2 . \label{eq:average_loss}
\end{align}
This formulation translates a problem of choosing bins $\{ \Delta_b \}_{b=1}^{B}$ into a problem of finding a monotonically increasing sequence $\{ g_i \}_{i=1}^{N}$ that is determined by the choice of indices $\{ n_b \}_{b=1}^{B}$, so that \cref{eq:average_loss} is minimised.
Therefore the optimasation problem of \cref{eq:M2E2_partition} in \Cref{def:LSO} under the loss function $\mathrm{D}$ in \cref{eq:D_variance} is equivalent to the monotonic regression problem under the squared error whose solution sequence $\{ g_i \}_{i=1}^{N}$ is restriced to a form of \cref{eq:g_i}.

Next, consider a standard monotonic regression problem under the square error $\sum_{i=1}^{N} ( y_i - \widehat{y}_i )^2$ for the ordered set $\{ y_i \}_{i=1}^{N}$.
PAVA finds a monotonically increasing sequence $\{ \widehat{y}_i \}_{i=1}^{N}$ that minimises the square error.
The solution sequence $\{ \widehat{y}_i \}_{i=1}^{N}$ by PAVA is given in a form of \cref{eq:g_i}; see e.g.~\citep{Leeuw2009,Henzi2022}.
This means that there exists a set of indices $\{ n_b^* \}_{b=1}^{B}$ s.t.~the solution sequence $\{ \widehat{y}_i \}_{i=1}^{N}$ by PAVA is expressed as
\begin{align*}
    \widehat{y}_i = \frac{1}{n_{b+1}^* - n_b^*} \sum_{j = n_b^*}^{n_{b+1}^*} y_j \quad \text{for each} \quad i \quad \text{s.t.} \quad n_b^* \le i < n_{b+1}^* 
\end{align*}
and the sequence $\{ \widehat{y}_i \}_{i=1}^{N}$ satisfies the monotonic constraint $\widehat{y}_1 \le \dots \le \widehat{y}_N$ holds.
We can obtain such a solution sequence $\{ \widehat{y}_i \}_{i=1}^{N}$ by applying any standard implementation of PAVA.

An output of most implementations of PAVA is the solution sequence $\{ \widehat{y}_i \}_{i=1}^{N}$ rather than the associated indices $\{ n_b^* \}_{b=1}^{B}$.
However, the indices $\{ n_b^* \}_{b=1}^{B}$ can be easily recovered from a given solution sequence $\{ \widehat{y}_i \}_{i=1}^{N}$ of PAVA by simply finding all indeces $i$ s.t.~$\widehat{y}_i \ne \widehat{y}_{i+1}$.
Finally, we consider constructing bins $\{ \Delta_b \}_{b=1}^{B}$ based on the recovered indices $\{ n_b^* \}_{b=1}^{B}$.
Recall that the set of labels $\D^y = \{ y_i \}_{i=1}^{N}$ are ordered in ascending order of $\{ P_\theta(x_i) \}_{i=1}^{N}$.
If we construct each bin $\Delta_b$ by 
\begin{align*}
    \Delta_b := \left[ \frac{ P_\theta(x_{n_b^* - 1}) + P_\theta(x_{n_b^*}) }{2}, \frac{ P_\theta(x_{n_{b+1}^* - 1}) + P_\theta(x_{n_{b+1}^*}) }{2} \right] ,
\end{align*}
it is sufficient to generate each label subset $\D_{b}^y$ that corresponds to
\begin{align*}
    \D_{b}^y = \{ y_i \in \D^y \mid P_\theta(x_i) \in \Delta_b \} = \{  y_i  \in \D^y \mid i~~\text{s.t.}~~n_b^* \le i < n_{b+1}^* \} .
\end{align*}
Then the optimisation criterion in \cref{eq:M2E2_partition}, which is translated to the error of the monotonic regression problem of PAVA, is minimised by the choice of bins produced in this procedure.
Observing that \Cref{alg:LSO} with PAVA-BC replaced by PAVA performs this procedure concludes the proof.
\end{proof}

\section{Mild Violation of Monotonicity by PAVA-BC} \label{sec:appendix_b}

A monotonic regression algorithm finds a monotonically increasing sequence $\widehat{y}_1 \le \cdots \le \widehat{y}_N$ that minimises some error $\mathrm{D}( \{ \widehat{y}_i \}_{i=1}^{N}, \{ y_i \}_{i=1}^{N} )$ for a given ordered set $\{ y_i \}_{i=1}^{N}$.
PAVA is one of the most common monotonic regression algorithms that uses the square error $\sum_{i=1}^{N} ( \widehat{y}_i - y_i )^2$.
For some partition $\mathcal{A}$ of indices $I = \{ 1, \dots, N \}$ whose element $A \in \mathcal{A}$ is a set of consequentive indices in $I$, PAVA produces a solution sequence s.t.~each element $\widehat{y}_i$ is given by $\widehat{y}_i = (1 / |A| ) \sum_{i \in A} y_i$ for $A$ in which $i \in A$.
We refer to each element $A$ in the partition $\mathcal{A}$ of indices $I$ as \emph{block}.
PAVA-BC produces a solution sequence that approximates the solution sequence by PAVA under the contraints of the minimum and maximum size of each block.
For some partition $\mathcal{A}'$ of indices $I$, each element $\widehat{y}_i$ of the solution sequence is given by $\widehat{y}_i = (1 / |A'| ) \sum_{i \in A'} y_i$ for $A'$ in which $i \in A'$ in the same manner as PAVA.
PAVA-BC meets the minimum and maximum size constraints of each block $A' \in \mathcal{A}'$ at the cost of the possibility of mild violation of the monotonic constraint.
It depends on the minimum and maximum size constraints, data, and models whether violation of the monotonic constraint occurs by PAVA-BC.
\Cref{fig:section_b1} shows an example where bins based on PAVA-BC did not violate the monotonicity of the empirical probabilities $\{ \widehat{P}_b \}_{b=1}^{B}$.
\Cref{fig:section_b1} was computed using a random forest model trained on the \emph{satimage} dataset used in \Cref{sec:experiment_uci}, and corresponds to \Cref{fig:comparison_bins} presented in \Cref{sec:methodology}.
The total estimation error in \cref{eq:M2E2_partition} and an average of the estimation error within each bin in \cref{eq:D_variance} for each set of the bins in \Cref{fig:section_b1} were summerised in \Cref{tab:comparison_bins} presented in \Cref{sec:methodology}.
\Cref{fig:section_b2} shows an example where bins based on PAVA-BC violated the monotonic constraint of the empirical probabilities $\{ \widehat{P}_b \}_{b=1}^{B}$.
\Cref{fig:section_b2} was computed using a random forest model trained on the \emph{coil\_2000} dataset used in \Cref{sec:experiment_uci}.
The total estimation error in \cref{eq:M2E2_partition} for each set of the bins in \Cref{fig:section_b2} was $0.0509$, $0.0517$, and $ 0.0521$ for PAVA, PAVA-BC, binning based on $10$-quantiles, respectively.
An average of the estimation error within each bin in \cref{eq:D_variance} for each set of the bins in \Cref{fig:section_b1} was $0.0834$, $0.0627$, and $0.0520$ for PAVA, PAVA-BC, binning based on $10$-quantiles, respectively.

\FloatBarrier

\begin{figure}
    \centering
    \begin{subfigure}[t]{0.475\textwidth}
        \centering
        \includegraphics[width=\textwidth]{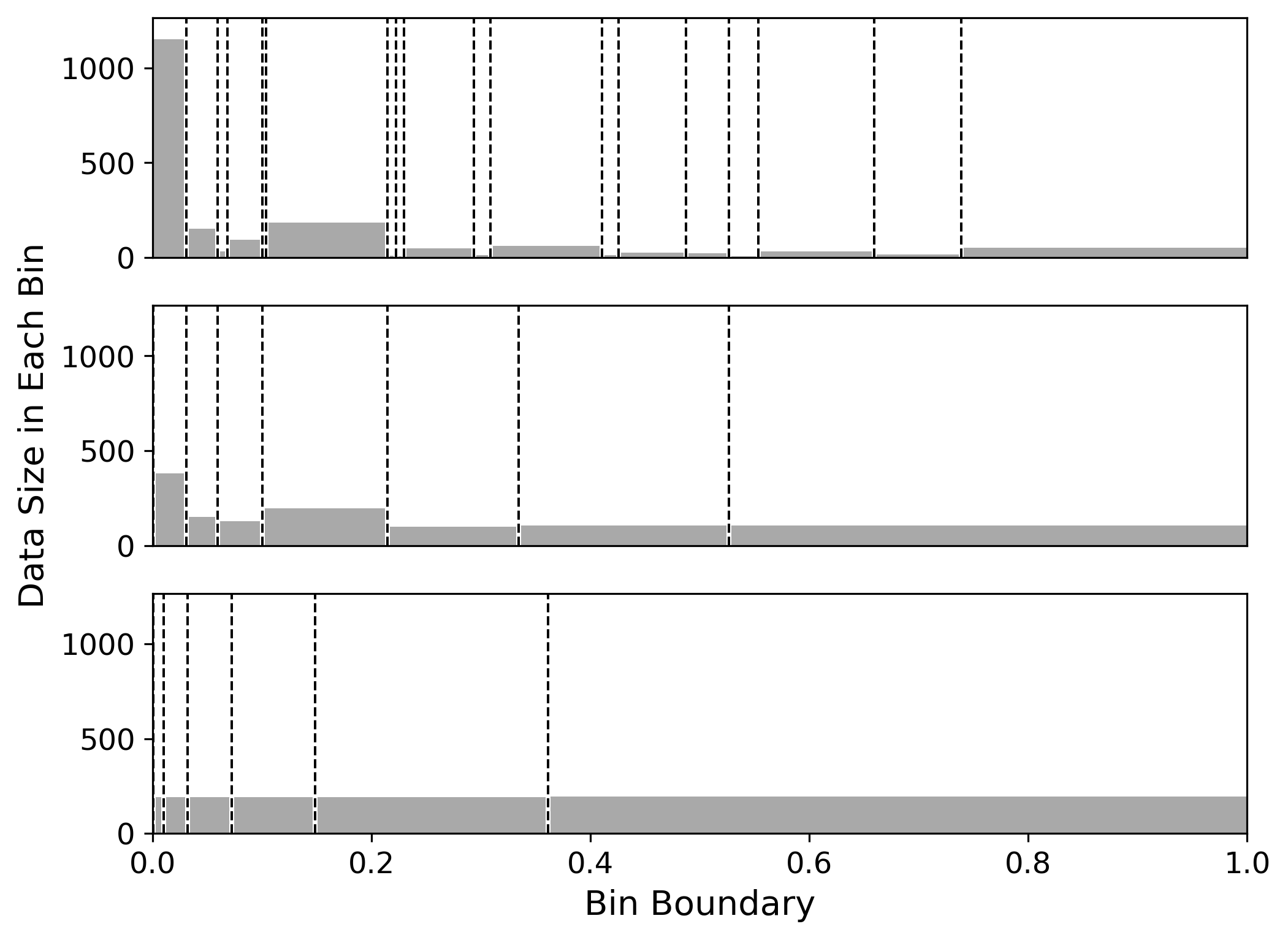}
    \end{subfigure}
    \hfill
    \begin{subfigure}[t]{0.475\textwidth}
        \centering
        \includegraphics[width=\textwidth]{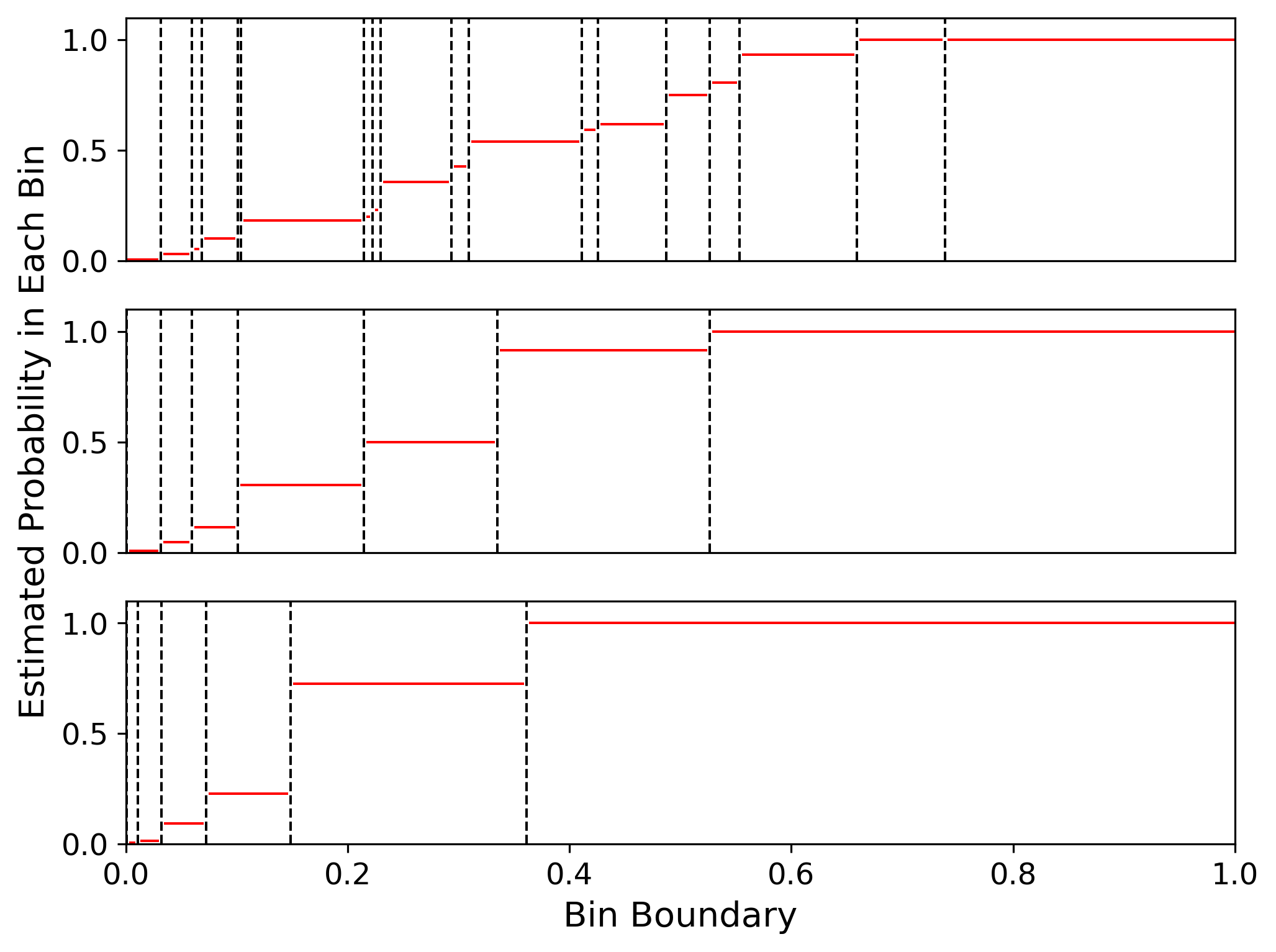}
    \end{subfigure}
    \caption{Comparison of bins based on three different approaches for a random forest model on the satimage dataset: (top) PAVA, (middle) PAVA-BC, (bottom) binning based on $10$-quantiles. The dotted line in the left and right panels represents the boundary of each bin. The grey bar in the left panel repsents the size of each bin. The red line in the right panel repsents the empirical probability of each bin.}
    \label{fig:section_b1}
\end{figure}

\begin{figure}
    \centering
    \begin{subfigure}[t]{0.475\textwidth}
        \centering
        \includegraphics[width=\textwidth]{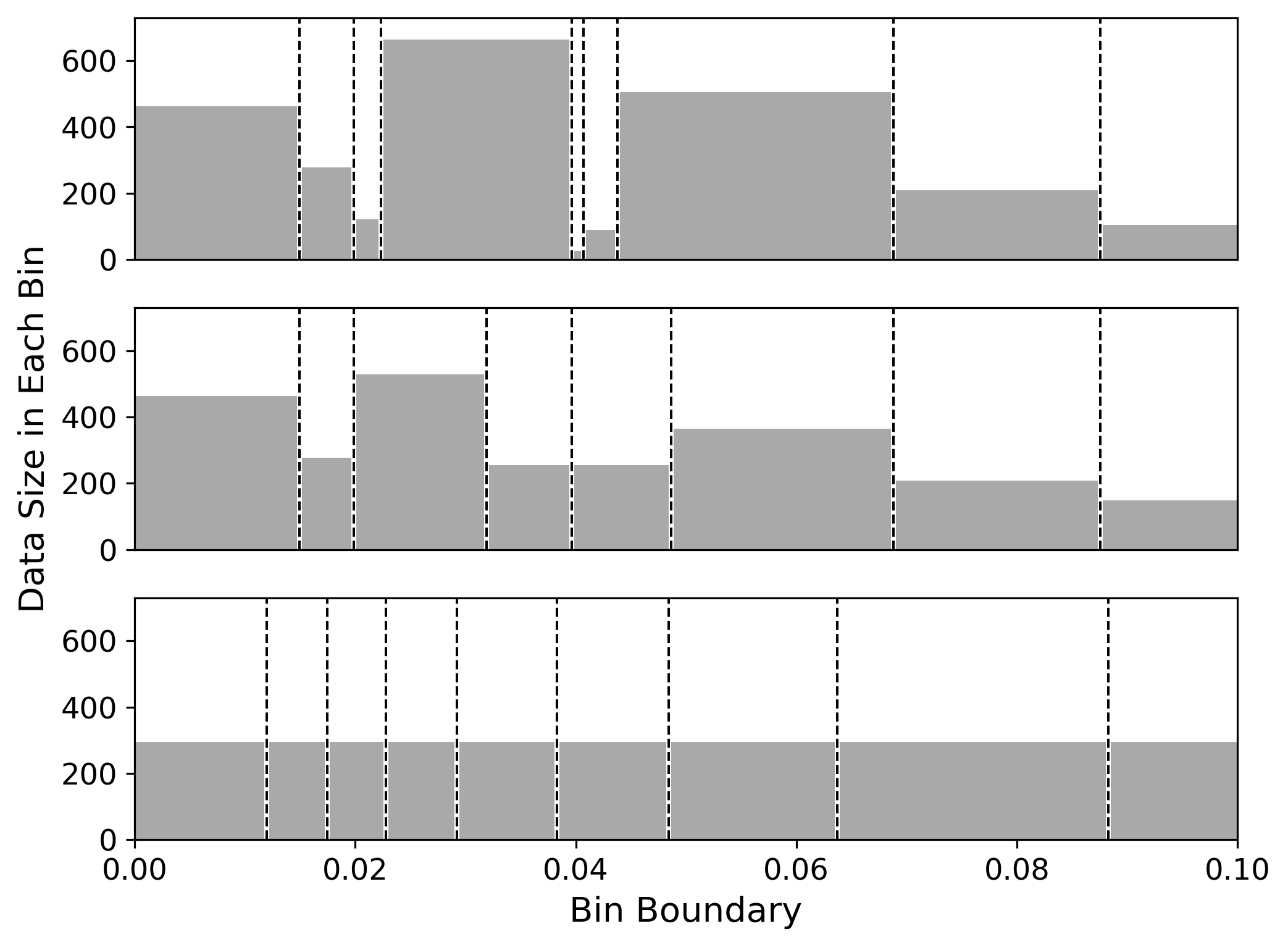}
    \end{subfigure}
    \hfill
    \begin{subfigure}[t]{0.475\textwidth}
        \centering
        \includegraphics[width=\textwidth]{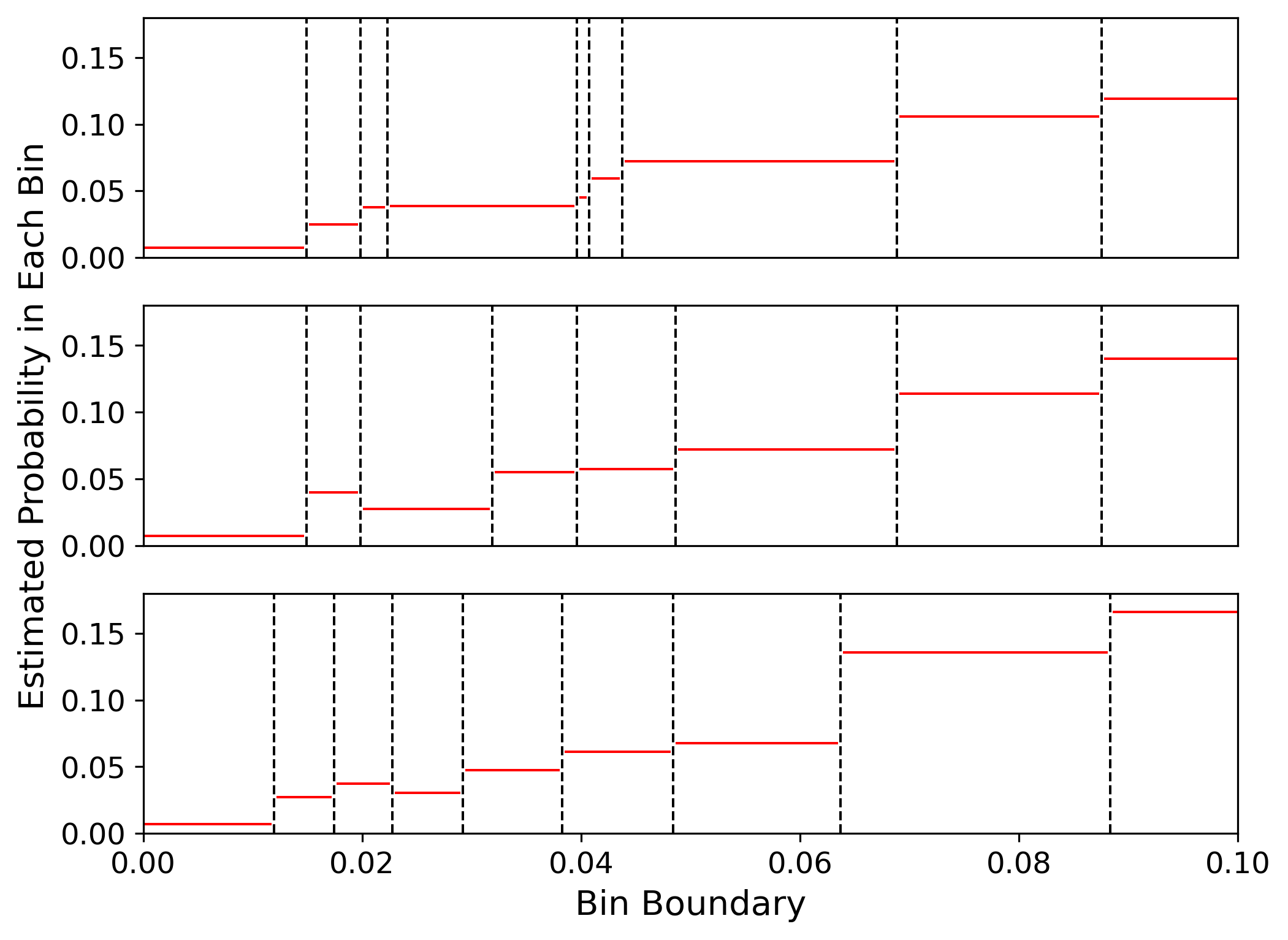}
    \end{subfigure}
    \caption{Comparison of bins based on three different approaches for a random forest model on the satimage dataset: (top) PAVA, (middle) PAVA-BC, (bottom) binning based on $10$-quantiles. Each xaxis is restricted to a range $[0.0, 0.1]$ as the majority of bins were contained in the range in this example. The dotted line in the left and right panels represents the boundary of each bin. The grey bar in the left panel repsents the size of each bin. The red line in the right panel repsents the empirical probability of each bin. A random forest model trained on the coil\_2000 dataset was used.}
    \label{fig:section_b2}
\end{figure}

\FloatBarrier

\section{Additional Experiments} \label{sec:appendix_c}

We present additional experiments in each section that complement the experiments illustrated in the main text.
We use the same settings as the main text for the minimum and maximum size for bins based on PAVA-BC as well as the bin number $B$ for equi-spaced and quantile-based bins.
 
\subsection{Simulation Study of TCE} \label{sec:appendix_c1}

We perform detailed simulation studies of TCE in the same simplified setting as Section 4.1.
We demonstarate sensitivity of TCE to its hyperparameters, an impact of different dataset size and prevalence, and sensitivity to a small purtabation to model predictions.
In all experiments, we generated training and test data from the Gaussian discriminant analysis in Section 4.1, each with the prevalence $P_{\text{training}}(y)$ and $P_{\text{test}}(y)$, and compute TCE of a logistic model fitted to the training data.
In all experiments except ones on an impact of different dataset size and prevalence, we set the training data size to $14000$ and set the test data size to $6000$.
We then examine two cases where the model is calibated and miscalibrated synthetically, setting $P_{\text{training}}(y) = 0.5$ and $P_{\text{test}}(y) = 0.5$ for the first case and setting $P_{\text{training}}(y) = 0.5$ and $P_{\text{test}}(y) = 0.4$ for the second case.
In summary, we present the following experimental analyses:

\begin{itemize}
    \item Sensitivity to the minimum bin size $N_{\text{min}}$ in PAVA-BC from $N_{\text{min}} = 1$ to $N_{\text{min}} = 3000$;
    \item Sensitivity to the maximum bin size $N_{\text{min}}$ in PAVA-BC from $N_{\text{max}} = 6$ to $N_{\text{max}} = 6000$;
    \item Sensitivity to a pair of $(N_{\text{min}}, N_{\text{max}})$ in PAVA-BC chosen so that each binsize fall into selected ranges;
    \item Sensitivity to a small purtabation of predictions by a logit-normal noise with scale $\sigma$ from $\sigma = 0.0$ to $\sigma = 1.0$;
    \item Sensitivity to a choice of significance level $\alpha$ in the Binomial test from $\alpha = 0.0001$ to $\alpha = 0.1$;
    \item Comparison of TCE by different choices of test, binomial test and t-test;
    \item Comparison of TCE by different total sizes $N$ of test dataset from $N = 30$ to $N = 60000$;
    \item Comparison of TCE by different prevalences $P$ of dataset from $P = 0.5$ to $P = 0.02$.
\end{itemize}

\Cref{tab:section_40_1,tab:section_40_2,tab:section_40_3,tab:section_40_4,tab:section_40_5,tab:section_40_6,tab:section_40_7,tab:section_40_8} presents the result of each experiment above in order.
In each table, TCE(P) denotes TCE based on PAVA-BC, TCE(Q) denotes TCE based on quantile-binning, and TCE(V) denotes TCE based on PAVA.
For reference, we include values of ECE, ACE, MCE, and MCE(Q), where MCE(Q) denotes MCE based on quantile-binning.
Observations from each result in are summarised as follows.

\begin{itemize}
    \item \Cref{tab:section_40_1}: The performance of TCE(P) to evidence the well-calibrated model was consistently reasonable for any minimum binsize constaint between $N_{\text{max}}=1$ and $N_{\text{max}}=600$, while there was a breakdown point between $N_{\text{min}}=600$ and $N_{\text{min}}=3000$ where TCE(P) was no longer able to do so. This is likely because the number of bins produced under the contraint $N_{\text{min}}=3000$ for the total datasize $6000$ was $2$ at maximum, which was too small to estimate the empirical probabilities $\{ \widehat{P}_b \}_{b=1}^{B}$ accurately.
    \item \Cref{tab:section_40_2}: The performance of TCE(P) to evidence the miscalibrated model was consistently reasonable for any maximum binsize constaint between $N_{\text{max}}=300$ and $N_{\text{max}}=6000$, while there was a breakdown point between $N_{\text{min}}=60$ and $N_{\text{min}}=300$ where TCE(P) was no longer able to do so. This is likely because the number of bins produced under the contraint $N_{\text{max}}=60$ for the total datasize $6000$ was $100$ at minimum, which is too large to estimate the empirical probabilities $\{ \widehat{P}_b \}_{b=1}^{B}$ accurately.
    \item \Cref{tab:section_40_3}: The performance of TCE(P) to evidence both the well-calibrated and miscalibrated models was arguably the most reasonable when $(N_{\text{min}}, N_{\text{max}})$ was chosen so that the number of bins produced falls into the range $[5, 20]$. This suggests a huristic to use such $(N_{\text{min}}, N_{\text{max}})$ for other experiments.
    \item \Cref{tab:section_40_4}: At each model prediction $P_\theta(x)$, we sample a new prediction from a logit-normal distribution centred at $P_\theta(x)$ with scale $\sigma$ to generate a perturbed prediction by a small noise. All calibration error metrics were shown to have similar sensitivities to the noise. The scale between $\sigma = 0.10$ and $\sigma = 0.50$ was the breakdown point where each metric started to produce an unreasonable score for the well-calibrated model.
    \item \Cref{tab:section_40_5}: The performance of TCE(P) to evidence both the well-calibrated and miscalibrated models was consistently reasonable for any significant level between $\alpha=0.001$ and $\alpha=0.1$, while there was a breakdown point between $\alpha=0.1$ and $\alpha=0.5$ where TCE(P) was no longer able to do so for the well-calibrated model.
    \item \Cref{tab:section_40_6}: TCE based on the Binomial test outperformed one based on the t-test in the majority of the settings. It is possible that the Binomial test produces more accurate outcomes than the t-test, given that it is an exact test whose test statistics does not involve any apporoximation.
    \item \Cref{tab:section_40_7}: The performance of TCE(P) to evidence both the well-calibrated and miscalibrated models was consistently reasonable for any dataset size between $N_{\text{test}}=3000$ and $N_{\text{test}}=60000$, while there was a breakdown point between $N_{\text{test}}=600$ and $N_{\text{test}}=3000$ where TCE(P) was no longer able to do so for the well-calibrated model. This is likely because the dataset size $N_{\text{test}}=600$ was not big enough to estimate the empirical probabilities $\{ \widehat{P}_b \}_{b=1}^{B}$ accurately. This result may be improved by using different settings of the minimnum and maximum binsize constaints.
    \item \Cref{tab:section_40_8}: The performance of TCE(P) on both the well-calibrated and miscalibrated models was reasonable for any prevalence. While there was a fluctuation in values of TCE(P) for different values of prevalence, TCE(P) overall produced better values than TCE(Q and TCE(V).
\end{itemize}

\FloatBarrier
\begin{table}[h]
\caption{Sensitivity to the minimum binsize $N_{\text{min}} = 1, 6, 30, 300, 600, 3000$ in PAVA-BC. For comparison purpose, the number of bins $B$ of quantile-binning and equispaced-binning was varied as $B = 1000, 500, 100, 50, 10, 5, 1$ along with $N_{\text{min}}$. Note that TCE(V) is a constant across all the row because PAVA does not involve any binsize constraint.} \label{tab:section_40_1}
\centering
\begin{tabular}{ccccccccc}
\toprule 
\bfseries Test Prevalence    & \bfseries Min Binsize   & \bfseries TCE(P) & \bfseries TCE(Q) & \bfseries TCE(V) & \bfseries ECE  & \bfseries ACE    & \bfseries MCE    & \bfseries MCE(Q) \\
\midrule
\multirow{7}{*}{\shortstack[c]{50\%\\(Calibrated)}} & 1    & 3.4500  & 5.1000  & 3.4500 & 0.1143 & 0.1651 & 0.8767 & 0.6392 \\
 & 6    & 3.3833  & 4.2000  & 3.4500 & 0.0839 & 0.1142 & 0.8767 & 0.5016 \\
 & 30   & 2.3500  & 4.3000  & 3.4500 & 0.0382 & 0.0457 & 0.8767 & 0.1705 \\
 & 60   & 2.6333  & 3.5667  & 3.4500 & 0.0271 & 0.0370 & 0.2533 & 0.1189 \\
 & 300  & 7.2833  & 10.8833 & 3.4500 & 0.0138 & 0.0150 & 0.1020 & 0.0528 \\
 & 600  & 13.5667 & 38.7500 & 3.4500 & 0.0116 & 0.0086 & 0.1020 & 0.0236 \\
 & 3000 & 92.2000 & 92.2000 & 3.4500 & 0.0021 & 0.0021 & 0.0021 & 0.0021 \\
&   &   &   &   &   &   &   &   \\
\multirow{7}{*}{\shortstack[c]{40\%\\(Miscalibrated)}} & 1    & 88.0667 & 6.6667  & 88.0667 & 0.1417 & 0.1847 & 0.8767 & 0.6111 \\
 & 6    & 88.0667 & 8.7000  & 88.0667 & 0.1179 & 0.1389 & 0.8767 & 0.4811 \\
 & 30   & 88.3333 & 32.2833 & 88.0667 & 0.0993 & 0.0992 & 0.8767 & 0.2264 \\
 & 60   & 87.8667 & 56.7667 & 88.0667 & 0.0971 & 0.0964 & 0.2426 & 0.1827 \\
 & 300  & 96.1000 & 96.4667 & 88.0667 & 0.0963 & 0.0951 & 0.1466 & 0.1314 \\
 & 600  & 96.6000 & 96.7833 & 88.0667 & 0.0963 & 0.0951 & 0.1099 & 0.1092 \\
 & 3000 & 93.9500 & 93.9500 & 88.0667 & 0.0951 & 0.0951 & 0.0951 & 0.0951 \\
\bottomrule
\end{tabular}
\end{table}

\begin{table}[h]
\caption{Sensitivity to the maximum binsize $N_{\text{max}} = 6, 30, 300, 600, 3000, 6000$ in PAVA-BC. For comparison purpose, the number of bins $B$ of quantile-binning and equispaced-binning was varied as $B = 1000, 500, 100, 50, 10, 5, 1$ along with $N_{\text{max}}$. Note that TCE(V) is a constant across all the row because PAVA does not involve any binsize constraint.} \label{tab:section_40_2}
\centering
\begin{tabular}{ccccccccc}
\toprule 
\bfseries Test Prevalence    & \bfseries Max Binsize   & \bfseries TCE(P) & \bfseries TCE(Q) & \bfseries TCE(V) & \bfseries ECE  & \bfseries ACE    & \bfseries MCE    & \bfseries MCE(Q) \\
\midrule
\multirow{7}{*}{\shortstack[c]{50\%\\(Calibrated)}} & 6      & 5.8500 & 5.1000  & 3.4500 & 0.1143 & 0.1651 & 0.8767 & 0.6392 \\
 & 30     & 3.0000 & 4.2000  & 3.4500 & 0.0839 & 0.1142 & 0.8767 & 0.5016 \\
 & 60     & 2.3667 & 4.3000  & 3.4500 & 0.0382 & 0.0457 & 0.8767 & 0.1705 \\
 & 300    & 3.7667 & 3.5667  & 3.4500 & 0.0271 & 0.0370 & 0.2533 & 0.1189 \\
 & 600    & 3.3833 & 10.8833 & 3.4500 & 0.0138 & 0.0150 & 0.1020 & 0.0528 \\
 & 3000   & 3.4500 & 38.7500 & 3.4500 & 0.0116 & 0.0086 & 0.1020 & 0.0236 \\
 & 6000   & 3.4500 & 92.2000 & 3.4500 & 0.0021 & 0.0021 & 0.0021 & 0.0021 \\
&   &   &   &   &   &   &   &   \\
\multirow{7}{*}{\shortstack[c]{40\%\\(Miscalibrated)}} & 6      & 5.5000  & 6.6667  & 88.0667 & 0.1417 & 0.1847 & 0.8767 & 0.6111 \\
 & 30     & 9.1000  & 8.7000  & 88.0667 & 0.1179 & 0.1389 & 0.8767 & 0.4811 \\
 & 60     & 14.3833 & 32.2833 & 88.0667 & 0.0993 & 0.0992 & 0.8767 & 0.2264 \\
 & 300    & 79.6667 & 56.7667 & 88.0667 & 0.0971 & 0.0964 & 0.2426 & 0.1827 \\
 & 600    & 85.6500 & 96.4667 & 88.0667 & 0.0963 & 0.0951 & 0.1466 & 0.1314 \\
 & 3000   & 88.0667 & 96.7833 & 88.0667 & 0.0963 & 0.0951 & 0.1099 & 0.1092 \\
 & 6000   & 88.0667 & 93.9500 & 88.0667 & 0.0951 & 0.0951 & 0.0951 & 0.0951 \\
\bottomrule
\end{tabular}
\end{table}

\begin{table}[h]
\caption{Sensitivity to the pairs $(N_{\text{max}}, N_{\text{min}})$ in PAVA-BC selected so that the number of bins produced falls into ranges $[250, 1000], [50, 200], [25, 100], [10, 20], [3, 10]$. For comparison purpose, the number of bins $B$ of quantile-binning and equispaced-binning was varied as $B = 1000, 500, 100, 50, 10, 5, 1$ along with $(N_{\text{max}}, N_{\text{min}})$. Note that TCE(V) is a constant across all the row because PAVA does not involve any binsize constraint.} \label{tab:section_40_3}
\centering
\begin{tabular}{ccccccccc}
\toprule 
\bfseries Test Prevalence    & \bfseries Binsize Range   & \bfseries TCE(P) & \bfseries TCE(Q) & \bfseries TCE(V) & \bfseries ECE  & \bfseries ACE    & \bfseries MCE    & \bfseries MCE(Q) \\
\midrule
\multirow{5}{*}{\shortstack[c]{50\%\\(Calibrated)}} & [250, 1000] & 3.8000  & 4.2000  & 3.4500  & 0.0839 & 0.1142 & 0.8767 & 0.5016 \\
 & [50, 200] & 1.8333  & 4.3000  & 3.4500  & 0.0382 & 0.0457 & 0.8767 & 0.1705 \\
 & [25, 100] & 0.2833  & 3.5667  & 3.4500  & 0.0271 & 0.0370 & 0.2533 & 0.1189 \\
 & [5, 20] & 7.2833  & 10.8833 & 3.4500  & 0.0138 & 0.0150 & 0.1020 & 0.0528 \\
 & [3, 10] & 13.5667 & 38.7500 & 3.4500  & 0.0116 & 0.0086 & 0.1020 & 0.0236 \\
&   &   &   &   &   &   &   &   \\
\multirow{5}{*}{\shortstack[c]{40\%\\(Miscalibrated)}} & [250, 1000] & 7.7333  & 8.7000  & 88.0667 & 0.1179 & 0.1389 & 0.8767 & 0.4811 \\
 & [50, 200] & 45.7667 & 32.2833 & 88.0667 & 0.0993 & 0.0992 & 0.8767 & 0.2264 \\
 & [25, 100] & 66.1833 & 56.7667 & 88.0667 & 0.0971 & 0.0964 & 0.2426 & 0.1827 \\
 & [10, 20] & 96.1000 & 96.4667 & 88.0667 & 0.0963 & 0.0951 & 0.1466 & 0.1314 \\
 & [3, 10] & 96.6000 & 96.7833 & 88.0667 & 0.0963 & 0.0951 & 0.1099 & 0.1092 \\
\bottomrule
\end{tabular}
\end{table}

\begin{table}[h]
\caption{Sensitivity to a small purtabation to model predictions by a logit-normal noise with scale $\sigma = 0.01, 0.05, 0.10, 0.50, 1.00$. The maximum and minimum binsize of PAVA-BC were set to $1200$ and $300$. The number of bins of quantile-binning and equispaced-binning was set $10$.} \label{tab:section_40_4}
\centering
\begin{tabular}{ccccccccc}
\toprule 
\bfseries Test Prevalence    & \bfseries Noise Level   & \bfseries TCE(P) & \bfseries TCE(Q) & \bfseries TCE(V) & \bfseries ECE  & \bfseries ACE    & \bfseries MCE    & \bfseries MCE(Q) \\
\midrule
\multirow{6}{*}{\shortstack[c]{50\%\\(Calibrated)}} & 0.00    & 7.2833  & 10.8833 & 3.4500  & 0.0138 & 0.0150 & 0.1020 & 0.0528 \\
 & 0.01   & 8.7167  & 9.6167  & 4.8000  & 0.0113 & 0.0125 & 0.0923 & 0.0527 \\
 & 0.05   & 12.8833 & 11.9000 & 7.7667  & 0.0136 & 0.0156 & 0.1198 & 0.0589 \\
 & 0.10    & 8.3500  & 13.0500 & 3.5500  & 0.0109 & 0.0164 & 0.1143 & 0.0587 \\
 & 0.50    & 61.9500 & 65.0500 & 56.1000 & 0.0615 & 0.0618 & 0.3601 & 0.1498 \\
 & 1.00    & 86.1833 & 84.1000 & 88.3833 & 0.1470 & 0.1478 & 0.3364 & 0.2621 \\
&   &   &   &   &   &   &   &   \\
\multirow{6}{*}{\shortstack[c]{40\%\\(Miscalibrated)}}  & 0.00    & 96.1000 & 96.4667 & 88.0667 & 0.0963 & 0.0951 & 0.1466 & 0.1314 \\
 & 0.01   & 96.4000 & 96.4000 & 89.6167 & 0.0962 & 0.0951 & 0.1511 & 0.1332 \\
 & 0.05   & 94.7667 & 95.5333 & 89.1667 & 0.0962 & 0.0951 & 0.1496 & 0.1420 \\
 & 0.10    & 93.8500 & 95.9667 & 86.5833 & 0.0967 & 0.0951 & 0.1852 & 0.1412 \\
 & 0.50    & 86.6667 & 83.9000 & 81.2667 & 0.1071 & 0.1055 & 0.2513 & 0.2203 \\
 & 1.00    & 90.3167 & 88.8500 & 91.2167 & 0.1713 & 0.1698 & 0.4577 & 0.3648 \\
\bottomrule
\end{tabular}
\end{table}

\begin{table}[h]
\caption{Sensitivity to a choice of significance level $\alpha = 0.001, 0.005, 0.01, 0.05, 0.1, 0.5$. The maximum and minimum binsize of PAVA-BC were set to $1200$ and $300$. The number of bins of quantile-binning and equispaced-binning was set $10$.} \label{tab:section_40_5}
\centering
\begin{tabular}{ccccccccc}
\toprule 
\bfseries Test Prevalence    & \bfseries Significant Level   & \bfseries TCE(P) & \bfseries TCE(Q) & \bfseries TCE(V) & \bfseries ECE  & \bfseries ACE    & \bfseries MCE    & \bfseries MCE(Q) \\
\midrule
\multirow{6}{*}{\shortstack[c]{50\%\\(Calibrated)}} & 0.001  & 1.4500  & 4.6833  & 0.1833  & 0.0138 & 0.0150 & 0.1020 & 0.0528 \\
 & 0.005  & 2.4667  & 5.5667  & 1.1333  & 0.0138 & 0.0150 & 0.1020 & 0.0528 \\
 & 0.010   & 3.0500  & 6.2000  & 1.6500  & 0.0138 & 0.0150 & 0.1020 & 0.0528 \\
 & 0.050   & 7.2833  & 10.8833 & 3.4500  & 0.0138 & 0.0150 & 0.1020 & 0.0528 \\
 & 0.100    & 12.8500 & 15.2000 & 6.8667  & 0.0138 & 0.0150 & 0.1020 & 0.0528 \\
 & 0.500    & 53.1000 & 55.3333 & 46.5667 & 0.0138 & 0.0150 & 0.1020 & 0.0528 \\
&   &   &   &   &   &   &   &   \\
\multirow{6}{*}{\shortstack[c]{40\%\\(Miscalibrated)}}  & 0.001  & 77.8000 & 83.3833 & 76.1000 & 0.0963 & 0.0951 & 0.1466 & 0.1314 \\
 & 0.005 & 86.3000 & 92.8000 & 80.0833 & 0.0963 & 0.0951 & 0.1466 & 0.1314 \\
 & 0.010 & 90.1833 & 95.2167 & 83.1167 & 0.0963 & 0.0951 & 0.1466 & 0.1314 \\
 & 0.050 & 96.1000 & 96.4667 & 88.0667 & 0.0963 & 0.0951 & 0.1466 & 0.1314 \\
 & 0.100 & 97.2167 & 96.9167 & 90.1667 & 0.0963 & 0.0951 & 0.1466 & 0.1314 \\
 & 0.500 & 99.3000 & 98.7167 & 97.7500 & 0.0963 & 0.0951 & 0.1466 & 0.1314 \\
\bottomrule
\end{tabular}
\end{table}

\begin{table}[h]
\caption{Comparison of TCE based on the Binomial test and the t-test. TCE(Q)-B denotes TCE(Q) based on the Binomial test and TCE(Q)-T denotes TCE(Q) based on the t-test; the same applies for the other columns. The maximum and minimum binsize of PAVA-BC and the number of bins of quantile-binning and equispaced-binning were varied as in \Cref{tab:section_40_3}.} \label{tab:section_40_6}
\centering
\begin{tabular}{cccccccc}
\toprule 
\bfseries Test Prevalence    & \bfseries Binsize Range   & \bfseries TCE(P)-B & \bfseries TCE(P)-T & \bfseries TCE(Q)-B & \bfseries TCE(Q)-T  & \bfseries TCE(V)-B    & \bfseries TCE(V)-T \\
\midrule
\multirow{5}{*}{\shortstack[c]{50\%\\(Calibrated)}} & [250, 1000]      & 3.8000  & 33.6667 & 4.2000  & 31.9167 & 3.4500 & 34.2167 \\
 & [50, 200]     & 1.8333  & 36.0000 & 4.3000  & 31.4333 & 3.4500 & 34.2167 \\
 & [25, 100]      & 0.2833  & 31.3667 & 3.5667  & 40.4333 & 3.4500 & 34.2167 \\
 & [5, 20]    & 7.2833  & 37.8000 & 10.8833 & 41.8500 & 3.4500 & 34.2167 \\
 & [3, 10]    & 13.5667 & 46.5000 & 38.7500 & 68.8167 & 3.4500 & 34.2167 \\
&   &   &   &   &   &   &   \\
\multirow{5}{*}{\shortstack[c]{40\%\\(Miscalibrated)}} & [250, 1000] & 7.7333  & 50.2833 & 8.7000  & 45.1833 & 88.0667 & 97.7333 \\
 & [50, 200] & 45.7667 & 73.2667 & 32.2833 & 71.1667 & 88.0667 & 97.7333 \\
 & [25, 100] & 66.1833 & 96.5333 & 56.7667 & 85.3833 & 88.0667 & 97.7333 \\
 & [5, 20] & 96.1000 & 99.2667 & 96.4667 & 98.4833 & 88.0667 & 97.7333 \\
 & [3, 10] & 96.6000 & 98.6333 & 96.7833 & 98.4833 & 88.0667 & 97.7333 \\
\bottomrule
\end{tabular}
\end{table}

\begin{table}[h]
\caption{Comparison of TCE by different total sizes $N_{\text{test}} = 30, 60, 300, 600, 3000, 6000, 30000, 60000$ of test dataset. The training prevalence was $P_{\text{training}}(y) = 0.5$ for all datasets. The maximum and minimum binsize of PAVA-BC were set by $N_{\text{max}} = N_{\text{test}} / 20$ and $N_{\text{min}} = N_{\text{test}} / 5$. The number of bins of quantile-binning and equispaced-binning was set $10$.} \label{tab:section_40_7}
\centering
\begin{tabular}{ccccccccc}
\toprule 
\bfseries Test Prevalence    & \bfseries Data Size   & \bfseries TCE(P) & \bfseries TCE(Q) & \bfseries TCE(V) & \bfseries ECE  & \bfseries ACE    & \bfseries MCE    & \bfseries MCE(Q) \\
\midrule
\multirow{8}{*}{\shortstack[c]{50\%\\(Calibrated)}} & 30    & 0.0000  & 0.0000  & 0.0000 & 0.2293 & 0.2631 & 0.4164 & 0.5660 \\
 & 60    & 0.0000  & 3.3333  & 0.0000 & 0.0923 & 0.2158 & 0.7148 & 0.4208 \\
 & 300   & 5.3333  & 11.0000 & 6.3333 & 0.0774 & 0.0867 & 0.1971 & 0.2057 \\
 & 600   & 1.0000  & 4.5000  & 1.6667 & 0.0368 & 0.0445 & 0.3404 & 0.1270 \\
 & 3000  & 8.0667  & 4.6333  & 4.7667 & 0.0190 & 0.0182 & 0.1209 & 0.0304 \\
 & 6000  & 7.2833  & 10.8833 & 3.4500 & 0.0138 & 0.0150 & 0.1020 & 0.0528 \\
 & 30000 & 16.1633 & 31.7167 & 0.7833 & 0.0036 & 0.0061 & 0.9045 & 0.0164 \\
 & 60000 & 19.1483 & 45.7600 & 4.4417 & 0.0035 & 0.0043 & 0.0949 & 0.0100 \\
&   &   &   &   &   &   &   &   \\
\multirow{8}{*}{\shortstack[c]{40\%\\(Miscalibrated)}} & 30    & 13.3333 & 6.6667  & 36.6667 & 0.3164 & 0.3377 & 0.6569 & 0.6338 \\
 & 60    & 0.0000  & 3.3333  & 0.0000  & 0.1072 & 0.1611 & 0.7148 & 0.4208 \\
 & 300   & 27.3333 & 37.3333 & 48.3333 & 0.1240 & 0.1368 & 0.1971 & 0.2665 \\
 & 600   & 14.1667 & 8.0000  & 26.5000 & 0.0694 & 0.0685 & 0.5824 & 0.1350 \\
 & 3000  & 92.2333 & 91.7667 & 76.7667 & 0.0964 & 0.0958 & 0.1495 & 0.1358 \\
 & 6000  & 96.1000 & 96.4667 & 88.0667 & 0.0963 & 0.0951 & 0.1466 & 0.1314 \\
 & 30000 & 99.4700 & 99.2300 & 97.4433 & 0.0907 & 0.0906 & 0.9045 & 0.1064 \\
 & 60000 & 99.7783 & 99.6600 & 98.9000 & 0.0923 & 0.0923 & 0.0972 & 0.1065 \\
\bottomrule
\end{tabular}
\end{table}

\begin{table}[h]
\caption{Comparison of TCE by different prevalences $P$ of training and test dataset. The training data size was $14000$ and the test data size was $6000$. The maximum and minimum binsize of PAVA-BC were set to $1200$ and $300$. The number of bins of quantile-binning and equispaced-binning was set $10$.} \label{tab:section_40_8}
\centering
\begin{tabular}{ccccccccc}
\toprule 
\multicolumn{2}{c}{\bfseries Train - Test Prevalence} & \bfseries TCE(P) & \bfseries TCE(Q) & \bfseries TCE(V) & \bfseries ECE  & \bfseries ACE    & \bfseries MCE    & \bfseries MCE(Q) \\
\midrule
\multirow{9}{*}{Calibrated} & 50\% - 50\%  & 7.2833  & 10.8833 & 3.4500  & 0.0138 & 0.0150 & 0.1020 & 0.0528 \\
 & 40\% - 40\% & 7.5500  & 16.2167 & 8.5667  & 0.0137 & 0.0191 & 0.1632 & 0.0365 \\
 & 30\% - 30\% & 8.1667  & 12.8833 & 2.8167  & 0.0125 & 0.0134 & 0.1042 & 0.0313 \\
 & 20\% - 20\% & 15.9500 & 22.2167 & 15.9167 & 0.0173 & 0.0153 & 0.6238 & 0.0370 \\
 & 10\% - 10\% & 11.9833 & 16.7833 & 15.2333 & 0.0096 & 0.0114 & 0.4361 & 0.0218 \\
 & 8\% - 8\% & 15.7000 & 18.5167 & 23.1500 & 0.0087 & 0.0107 & 0.0700 & 0.0234 \\
 & 6\% - 6\% & 11.5333 & 17.5500 & 13.9833 & 0.0035 & 0.0109 & 0.3064 & 0.0195 \\
 & 4\% - 4\% & 18.5000 & 15.6667 & 20.5833 & 0.0046 & 0.0074 & 0.2240 & 0.0177 \\
 & 2\% - 2\% & 13.1167 & 11.5500 & 20.7667 & 0.0052 & 0.0059 & 0.0052 & 0.0131 \\
&   &   &   &   &   &   &   &   \\
\multirow{9}{*}{Miscalibrated} & 50\% - 40\% & 96.1000 & 96.4667 & 88.0667  & 0.0963 & 0.0951 & 0.1466 & 0.1314 \\
 & 40\% - 30\% & 96.5667 & 96.1833 & 82.7500  & 0.0872 & 0.0869 & 0.1485 & 0.1262 \\
 & 30\% - 20\% & 94.9500 & 94.6667 & 88.5833  & 0.0846 & 0.0846 & 0.2146 & 0.1247 \\
 & 20\% - 10\% & 95.8833 & 95.5833 & 96.4333  & 0.0868 & 0.0868 & 0.6238 & 0.1500 \\
 & 10\% - 8\% & 32.3500 & 26.7000 & 42.5667  & 0.0151 & 0.0173 & 0.4361 & 0.0502 \\
 & 8\% - 6\% & 42.3167 & 38.7833 & 45.8333  & 0.0164 & 0.0186 & 0.3259 & 0.0477 \\
 & 6\% - 4\% & 47.0833 & 39.9500 & 65.9500  & 0.0167 & 0.0188 & 0.3064 & 0.0440 \\
 & 4\% - 2\% & 56.5833 & 42.4333 & 72.4500  & 0.0142 & 0.0142 & 0.2240 & 0.0337 \\
 & 2\% - 0\% & 99.9167 & 96.9000 & 100.0000 & 0.0181 & 0.0181 & 0.0181 & 0.0382 \\
\bottomrule
\end{tabular}
\end{table}
\FloatBarrier

\subsection{Results on Other UCI Datasets} \label{sec:appendix_c2}

Algorithms in Section 4.2 are all trained with the default hyperparameters in the scikit-learn package, except that the maximum depth in the random forest is set to 10 and the number of hidden layers in the multiple perceptron is set to 1 with 1000 units.
For better comparison, we add TCE based on quantile bins, denoted TCE(Q) in each table, to five metrics presented in the main text.
The following \Cref{tab:section_42} compares six different calibration error metrics computed for eight UCI datasets that were not presented in the main text: coil\_2000, isolet, letter\_img, mammography, optimal\_degits, pen\_degits, satimage, spambase \citep{Dua2017,Putten2000,Elter2007}. 
The prevalence of the spambase dataset is well-balanced and that of the rest is imbalanced.
The following \Cref{fig:section_42_1,fig:section_42_2} shows the visual representations of TCE, ECE, and ACE---the test-based reliability diagram and the standard reliability diagram---each for the logistic regression and the gradient boosting algorithm.
We selected four datasets, abalone, coil\_2000, isolet, and webpage, to produce the visual representations in \Cref{fig:section_42_1,fig:section_42_2}.

\subsection{Reliability Diagrams of Results on ImageNet1000} \label{sec:appendix_c3}
	
The following \Cref{fig:section_43_1} shows the viaual representations of TCE, ECE, and ACE---the test-based reliability diagram and the standard reliability diagram---for four different deep learning models presented in the main text, where we omit the model ResNet50 whose result sufficiently resembles that of ResNet18.

\begin{table}[h]
\caption{Comparison of six calibration error metrics for five algorithms trained on eight UCI datasets. The same setting of TCE presented in \Cref{sec:experiment} is used. TCE(Q) and MCE(Q) denotes TCE and MCE each based on quantile bins where the number of bins is set to $10$. } \label{tab:section_42}
\centering
\begin{tabular}{cccccccc}
\toprule
\bfseries Data & \bfseries Algorithm & \bfseries TCE    & \bfseries TCE(Q)     & \bfseries ECE    & \bfseries ACE    & \bfseries MCE    & \bfseries MCE(Q)  \\
\midrule
\multirow{5}{*}{coil\_2000} & LR  & 8.6189  & 11.7408 & 0.0047 & 0.0111 & 0.8558 & 0.0326 \\
 & SVM & 17.0003 & 28.9447 & 0.0071 & 0.0216 & 0.4860 & 0.0381 \\
 & RF  & 22.2260 & 6.1758  & 0.0027 & 0.0125 & 0.2465 & 0.0439 \\
 & GB  & 20.8687 & 12.6569 & 0.0052 & 0.0098 & 0.3738 & 0.0259 \\
 & MLP & 98.7445 & 98.7784 & 0.0652 & 0.0578 & 0.7900 & 0.1649 \\
 &     &        &        &        &        &        \\
\multirow{5}{*}{isolet} & LR  & 28.8462 & 27.3932 & 0.0131 & 0.0051 & 0.2183 & 0.0286 \\
 & SVM & 11.5812 & 13.2479 & 0.0064 & 0.0028 & 0.1969 & 0.0194 \\
 & RF  & 66.4530 & 52.5214 & 0.0524 & 0.0507 & 0.3635 & 0.2137 \\
 & GB  & 25.2991 & 16.6667 & 0.0198 & 0.0174 & 0.4463 & 0.1123 \\
 & MLP & 9.8291  & 17.5641 & 0.0049 & 0.0031 & 0.4173 & 0.0232 \\
 &     &        &        &       &        &        \\
\multirow{5}{*}{letter\_img} & LR  & 10.5167 & 12.0500 & 0.0025 & 0.0008 & 0.1617 & 0.0042 \\
 & SVM & 11.8667 & 14.8167 & 0.0019 & 0.0017 & 0.6257 & 0.0146 \\
 & RF  & 26.5000 & 20.2500 & 0.0097 & 0.0033 & 0.5179 & 0.0131 \\
 & GB  & 25.9500 & 18.7333 & 0.0067 & 0.0029 & 0.3653 & 0.0109 \\
 & MLP & 19.9833 & 9.9833  & 0.0010 & 0.0001 & 0.4550 & 0.0007 \\
 &     &        &        &        &        &        \\
\multirow{5}{*}{mammography} & LR  & 25.0671 & 26.7660 & 0.0027 & 0.0065 & 0.3594 & 0.0208 \\
 & SVM & 20.2683 & 20.1490 & 0.0067 & 0.0088 & 0.6741 & 0.0353 \\
 & RF  & 19.4039 & 9.2996  & 0.0047 & 0.0016 & 0.4465 & 0.0043 \\
 & GB  & 14.5156 & 15.4098 & 0.0061 & 0.0034 & 0.5355 & 0.0124 \\
 & MLP & 20.5663 & 26.9747 & 0.0042 & 0.0027 & 0.4351 & 0.0113 \\
 &     &        &        &        &        &        \\
\multirow{5}{*}{optical\_digits}  & LR  & 11.6251 & 27.1649 & 0.0098 & 0.0037 & 0.2251 & 0.0135 \\
 & SVM & 4.8043  & 10.6762 & 0.0042 & 0.0028 & 0.6608 & 0.0157 \\
 & RF  & 49.6441 & 38.3155 & 0.0451 & 0.0433 & 0.5432 & 0.2271 \\
 & GB  & 13.0486 & 11.2693 & 0.0181 & 0.0168 & 0.5639 & 0.1122 \\
 & MLP & 4.6856  & 12.1590 & 0.0037 & 0.0034 & 0.5992 & 0.0306 \\
 &     &        &        &        &        &        \\
\multirow{5}{*}{pen\_digits}  & LR  & 20.4063 & 23.1049 & 0.0121 & 0.0060 & 0.1652 & 0.0252 \\
 & SVM & 9.7635  & 10.2790 & 0.0017 & 0.0010 & 0.4735 & 0.0068 \\
 & RF  & 29.6240 & 22.8623 & 0.0152 & 0.0132 & 0.4535 & 0.0592 \\
 & GB  & 9.9151  & 13.0988 & 0.0077 & 0.0058 & 0.6543 & 0.0303 \\
 & MLP & 9.4603  & 10.0061 & 0.0014 & 0.0004 & 0.6457 & 0.0037 \\
 &     &        &        &        &        &        \\
\multirow{5}{*}{satimage}  & LR  & 23.6665 & 23.0968 & 0.0215 & 0.0223 & 0.7312 & 0.0767 \\
 & SVM & 10.2020 & 21.8540 & 0.0229 & 0.0163 & 0.1666 & 0.0870 \\
 & RF  & 29.1041 & 20.1450 & 0.0265 & 0.0214 & 0.2084 & 0.1328 \\
 & GB  & 23.2004 & 19.8861 & 0.0154 & 0.0235 & 0.2101 & 0.0902 \\
 & MLP & 58.0528 & 58.4671 & 0.0352 & 0.0328 & 0.5049 & 0.1384 \\
&     &        &        &        &        &        \\
\multirow{5}{*}{spambase}  & LR  & 33.6713 & 56.1188 & 0.0256 & 0.0267 & 0.1539 & 0.0895 \\
 & SVM & 12.8168 & 34.5402 & 0.0177 & 0.0227 & 0.2207 & 0.0465 \\
 & RF  & 66.0391 & 49.4569 & 0.0635 & 0.0601 & 0.2056 & 0.1616 \\
 & GB  & 20.2028 & 20.4200 & 0.0295 & 0.0277 & 0.1409 & 0.0891 \\
 & MLP & 60.9703 & 67.1253 & 0.0413 & 0.0397 & 0.2931 & 0.1076 \\
\bottomrule 
\end{tabular}
\end{table}

\begin{figure}[h]
    \centering
    
    \begin{subfigure}[b]{\textwidth}
    \centering
    \includegraphics[trim={0 10pt 0 5pt},clip,width=0.32\columnwidth]{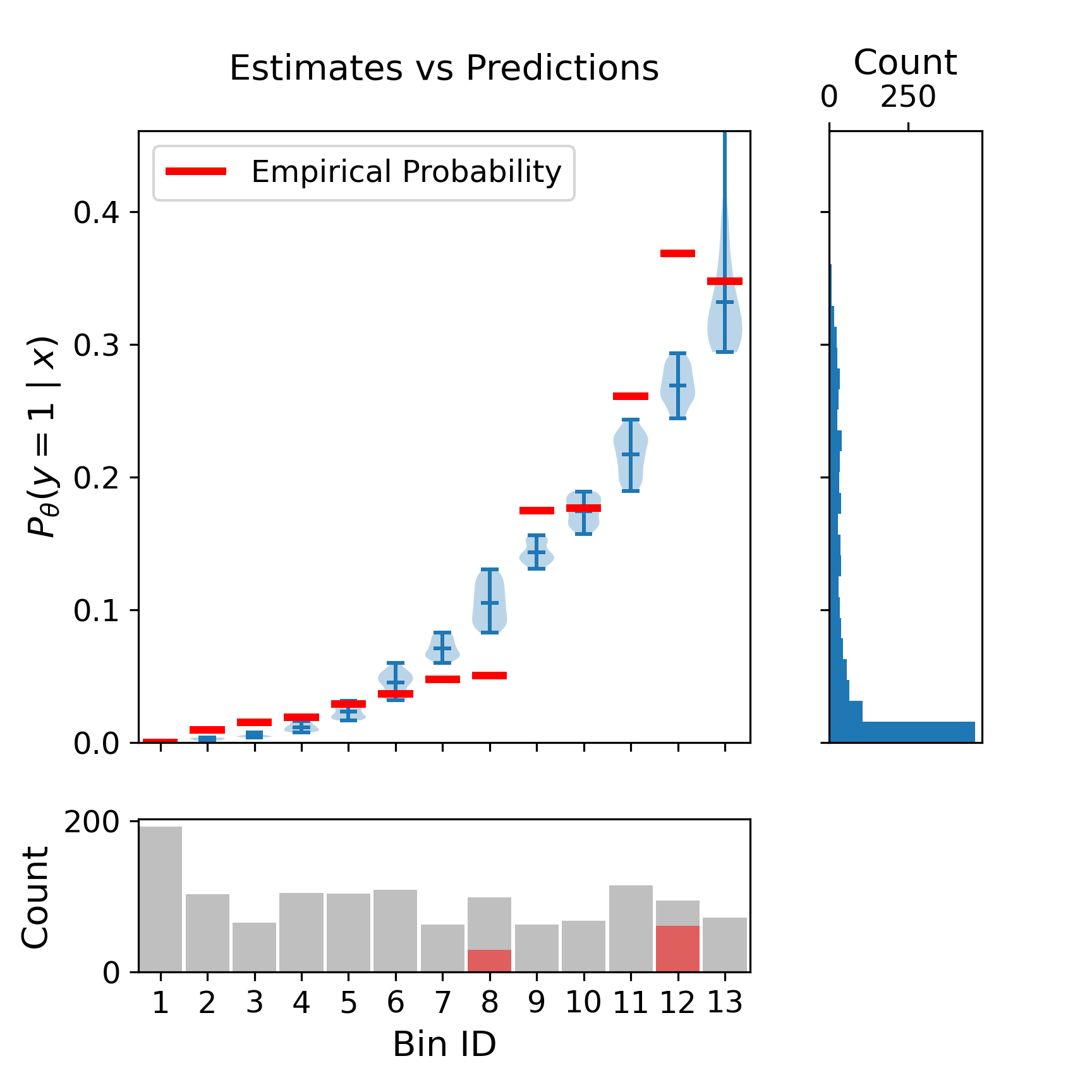}
    \hfill
    \includegraphics[trim={0 10pt 0 5pt},clip,width=0.32\columnwidth]{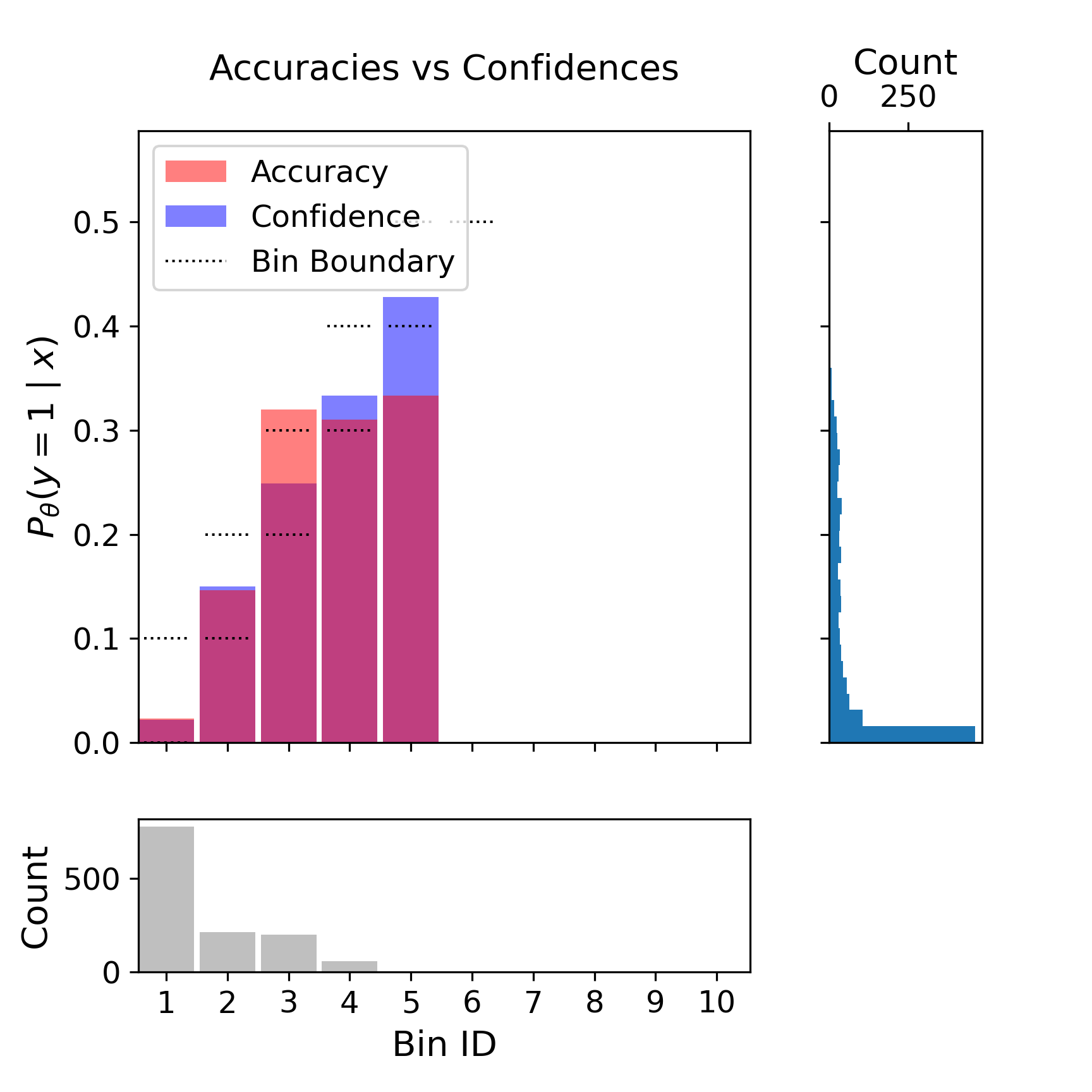}
    \hfill
    \includegraphics[trim={0 10pt 0 5pt},clip,width=0.32\columnwidth]{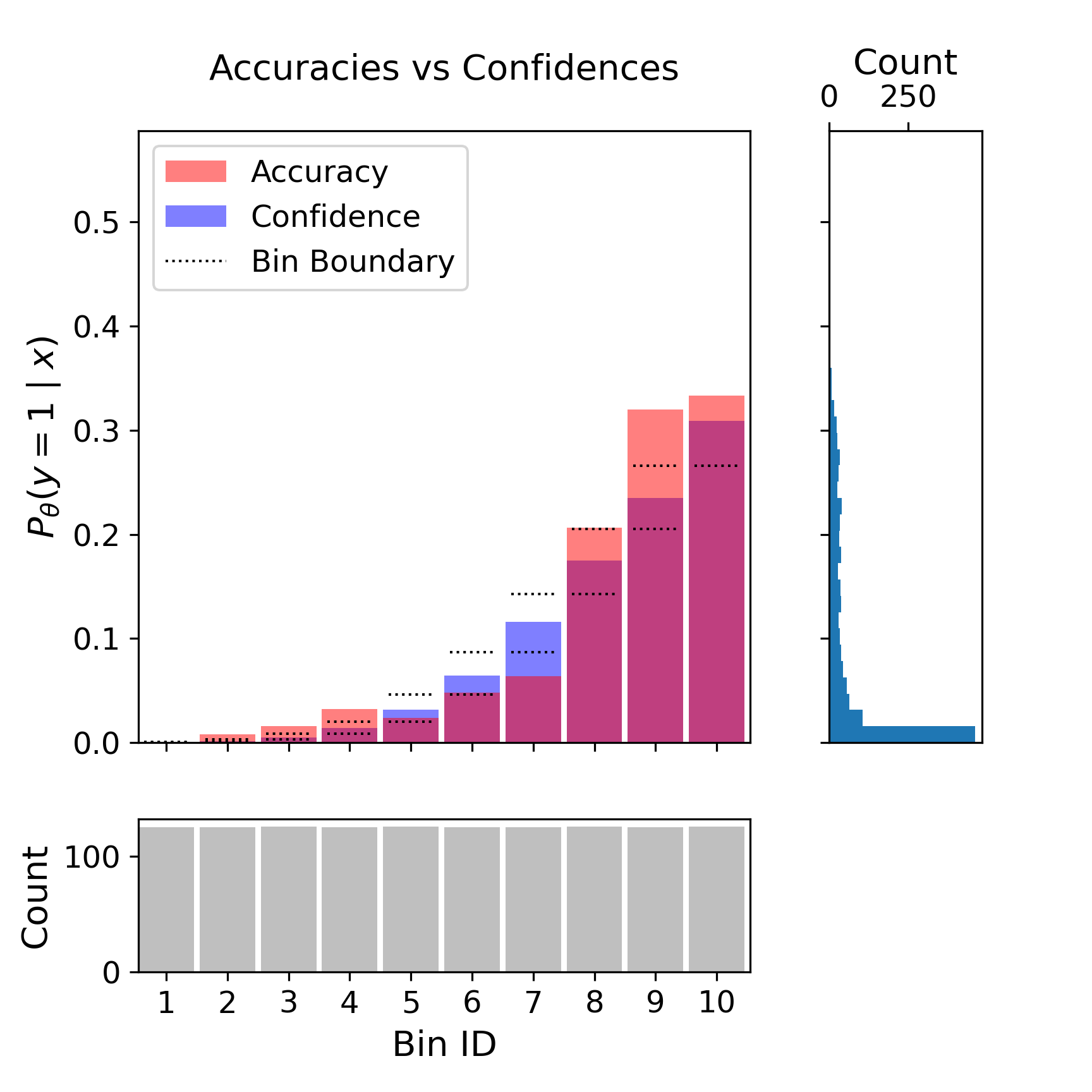}
    \caption{abalone}
    \end{subfigure}
    
    \begin{subfigure}[b]{\textwidth}
    \centering
    \includegraphics[trim={0 10pt 0 5pt},clip,width=0.32\columnwidth]{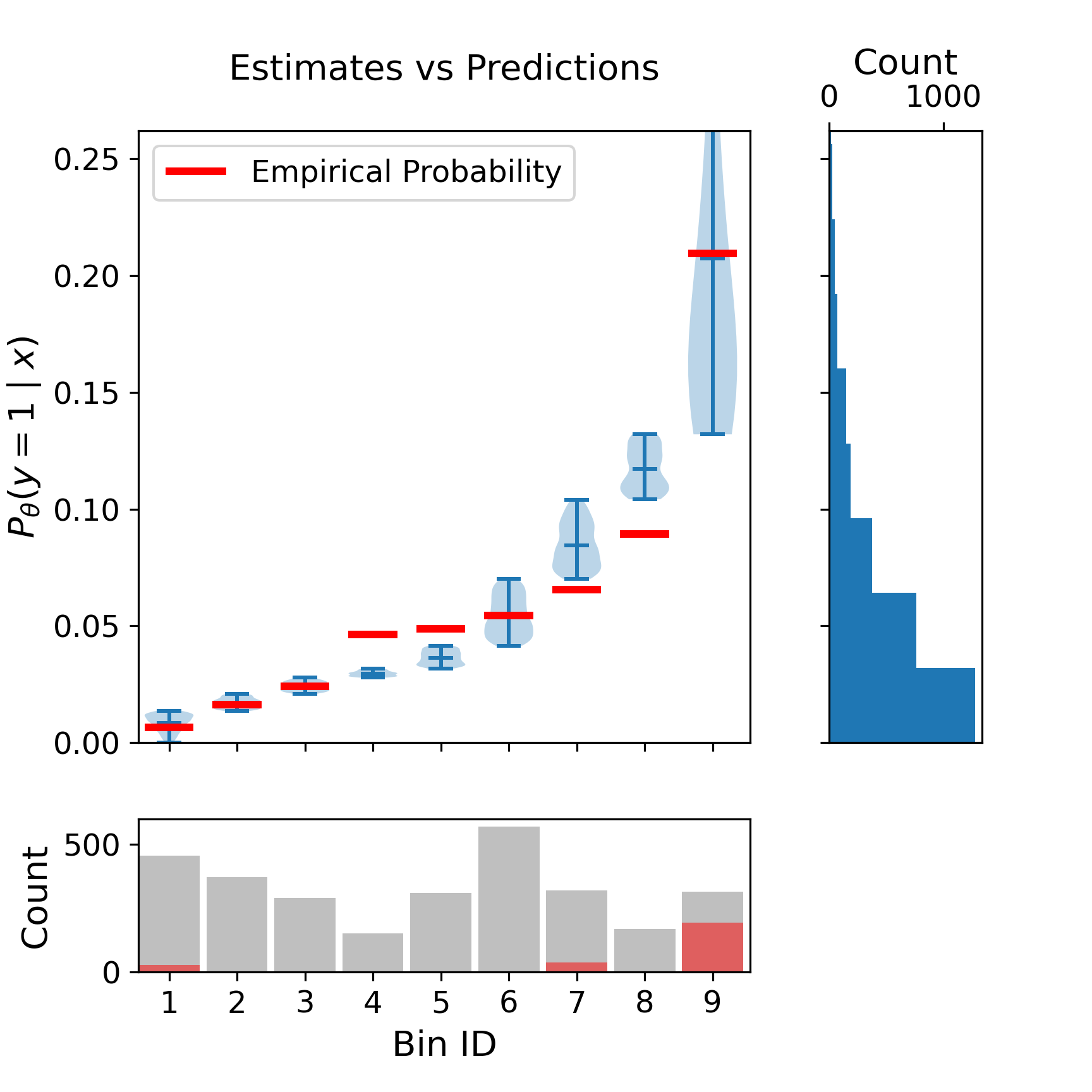}
    \hfill
    \includegraphics[trim={0 10pt 0 5pt},clip,width=0.32\columnwidth]{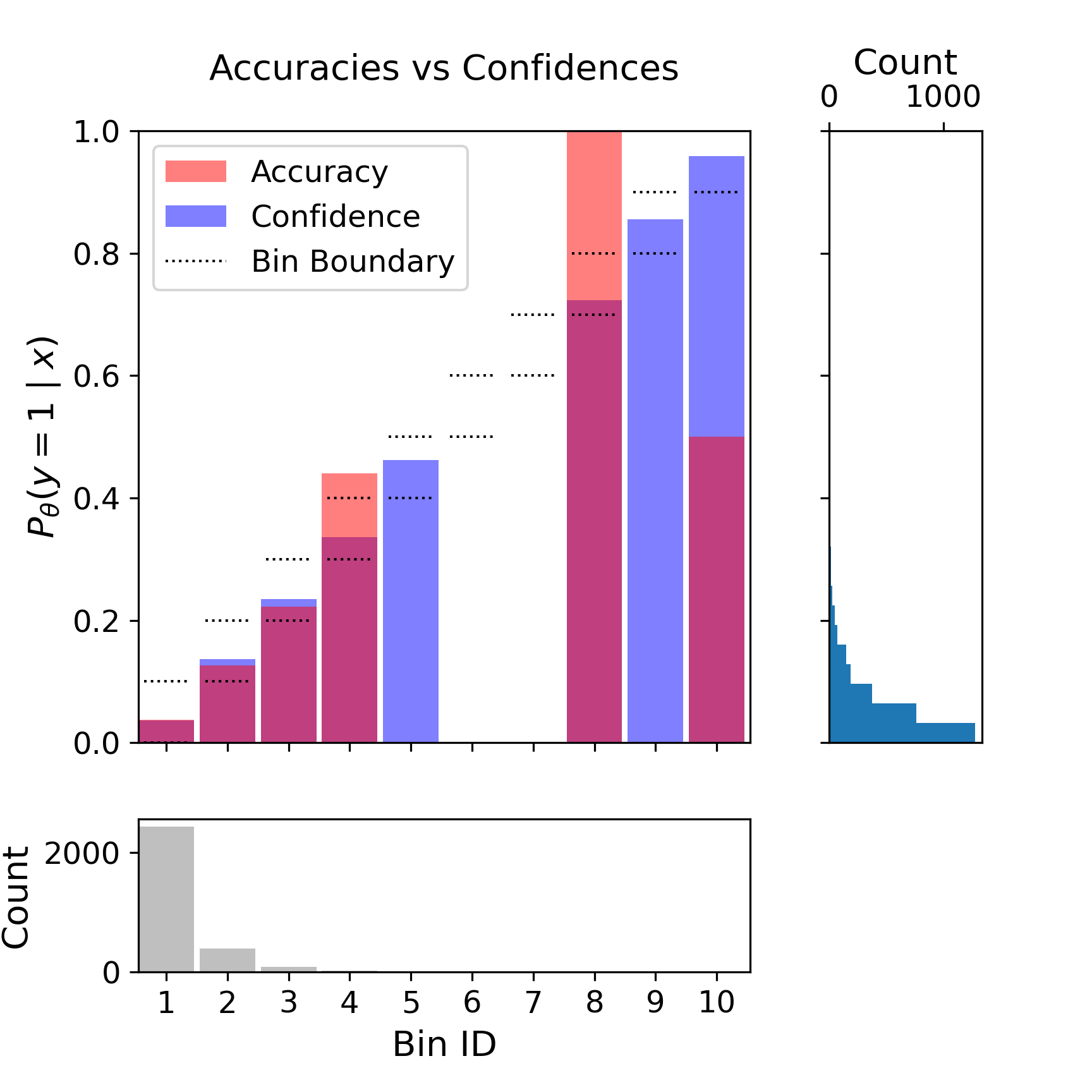}
    \hfill
    \includegraphics[trim={0 10pt 0 5pt},clip,width=0.32\columnwidth]{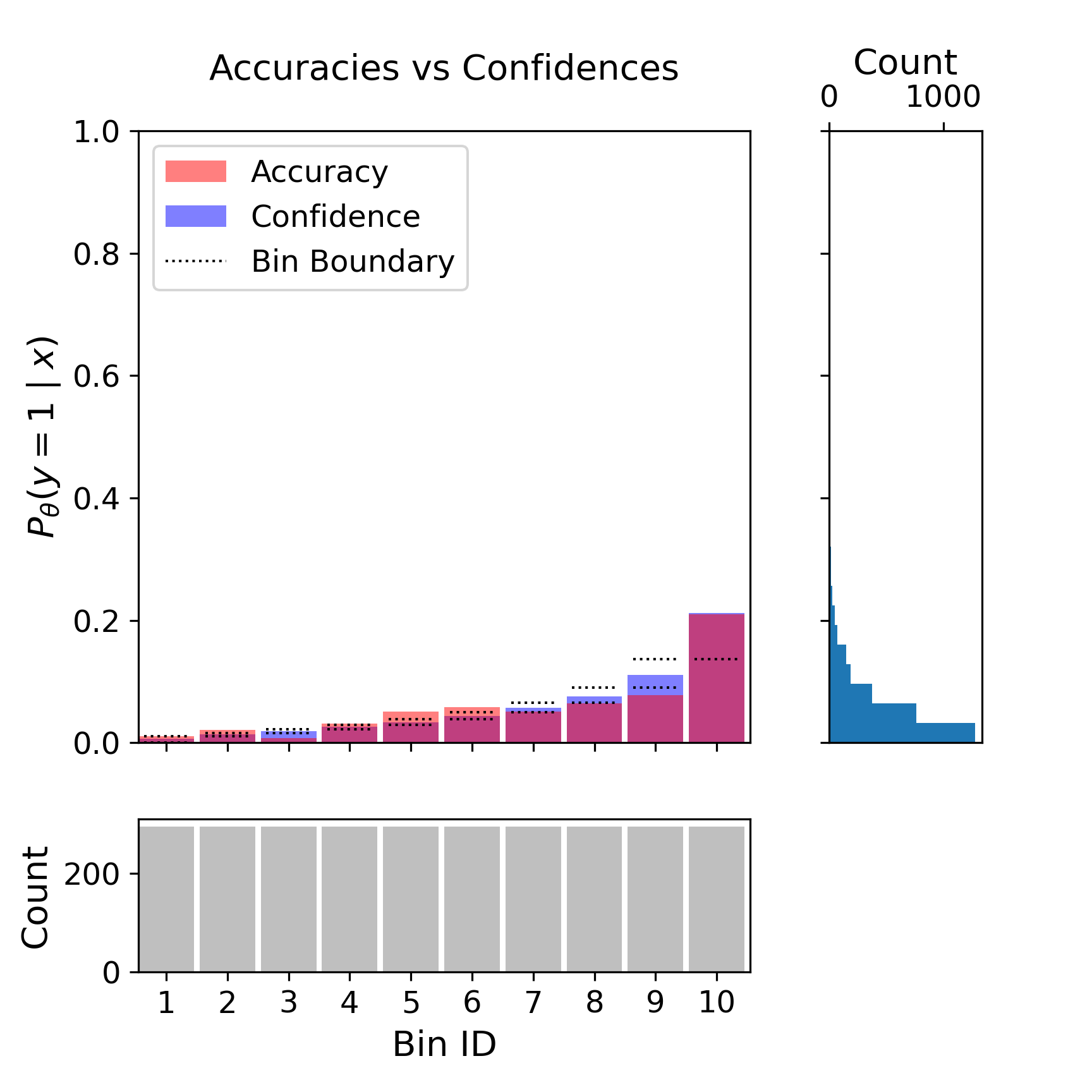}
    \caption{coil\_2000}
    \end{subfigure}
    
    \begin{subfigure}[b]{\textwidth}
    \centering
    \includegraphics[trim={0 10pt 0 5pt},clip,width=0.32\columnwidth]{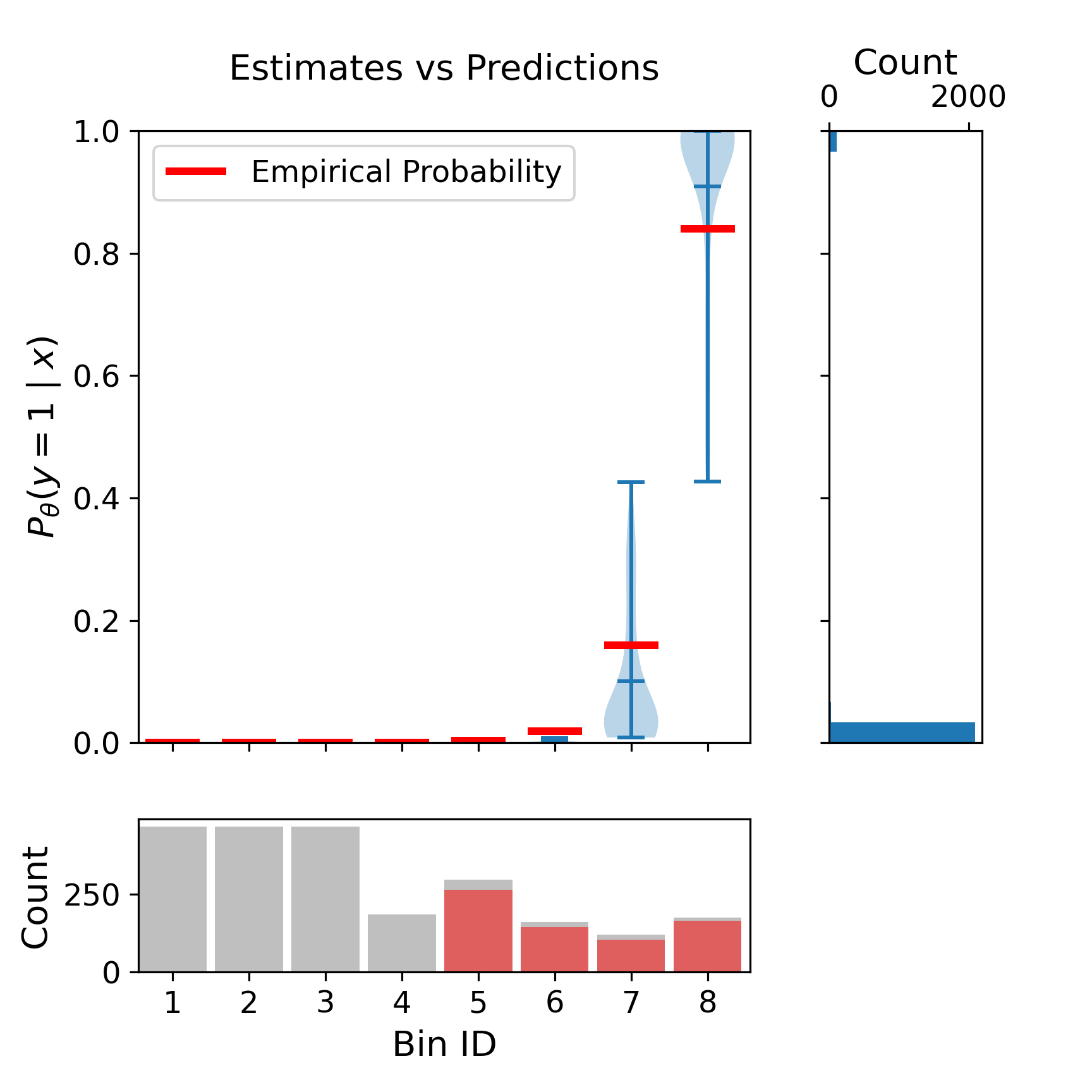}
    \hfill
    \includegraphics[trim={0 10pt 0 5pt},clip,width=0.32\columnwidth]{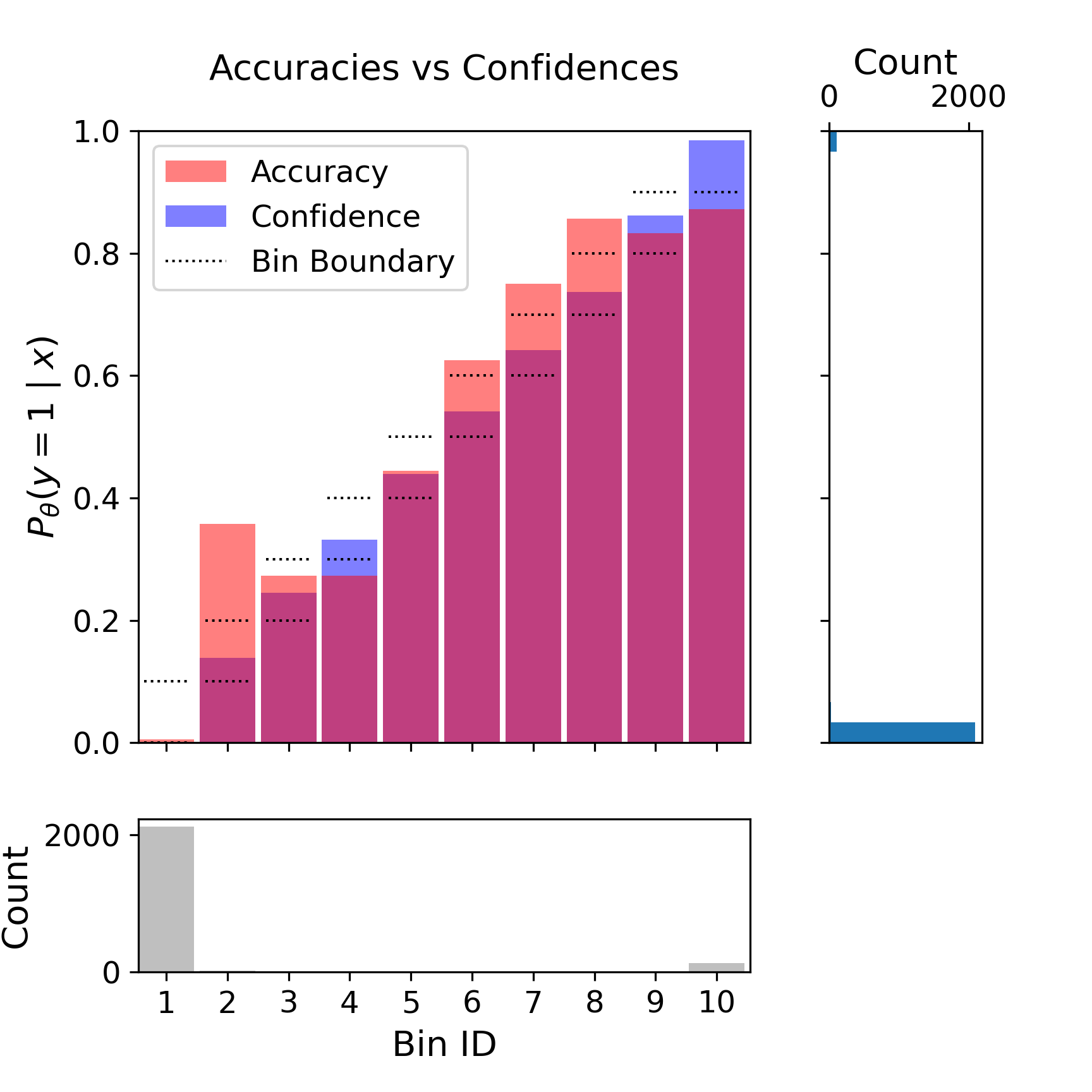}
    \hfill
    \includegraphics[trim={0 10pt 0 5pt},clip,width=0.32\columnwidth]{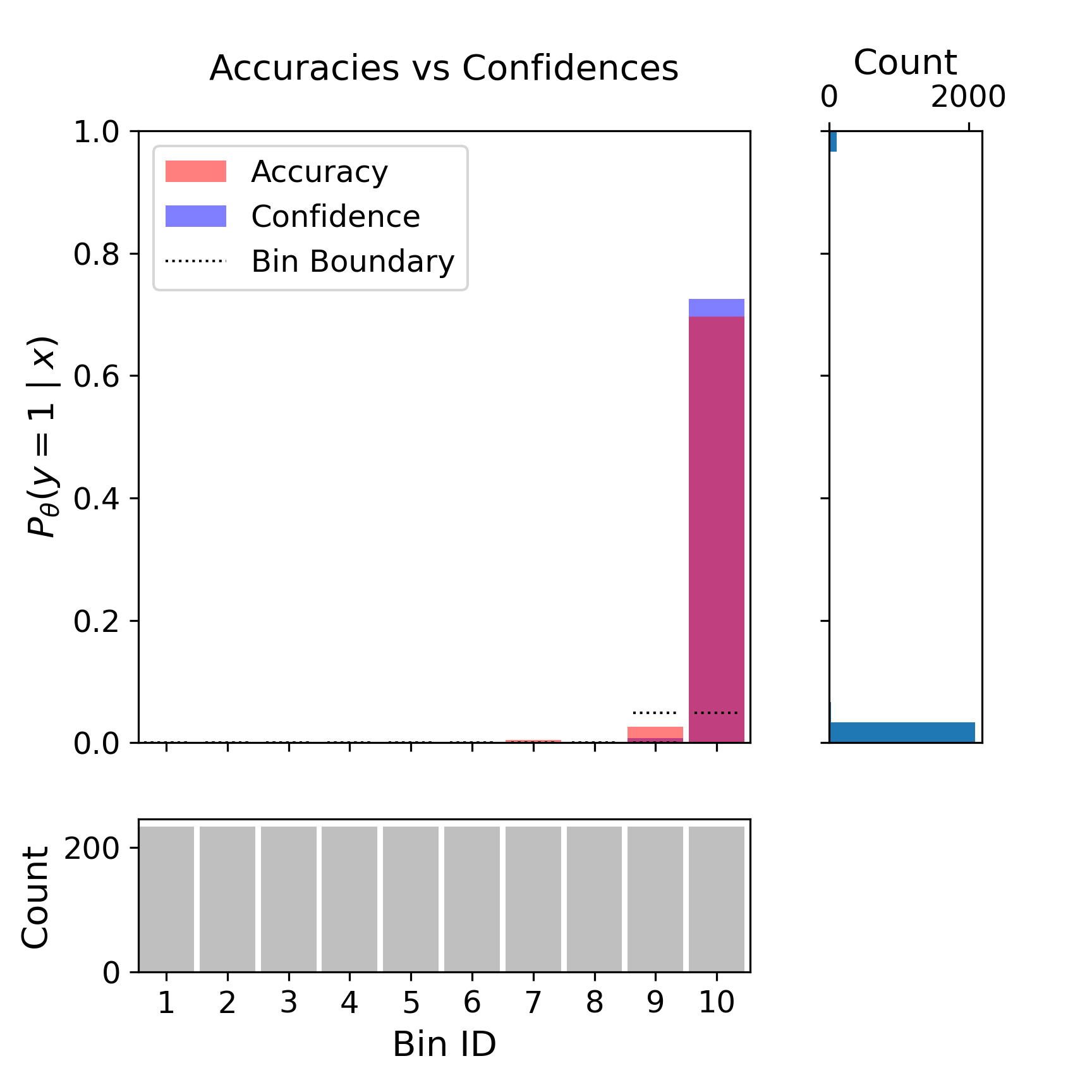}
    \caption{isolet}
    \end{subfigure}

    \begin{subfigure}[b]{\textwidth}
    \centering
    \includegraphics[trim={0 10pt 0 5pt},clip,width=0.32\columnwidth]{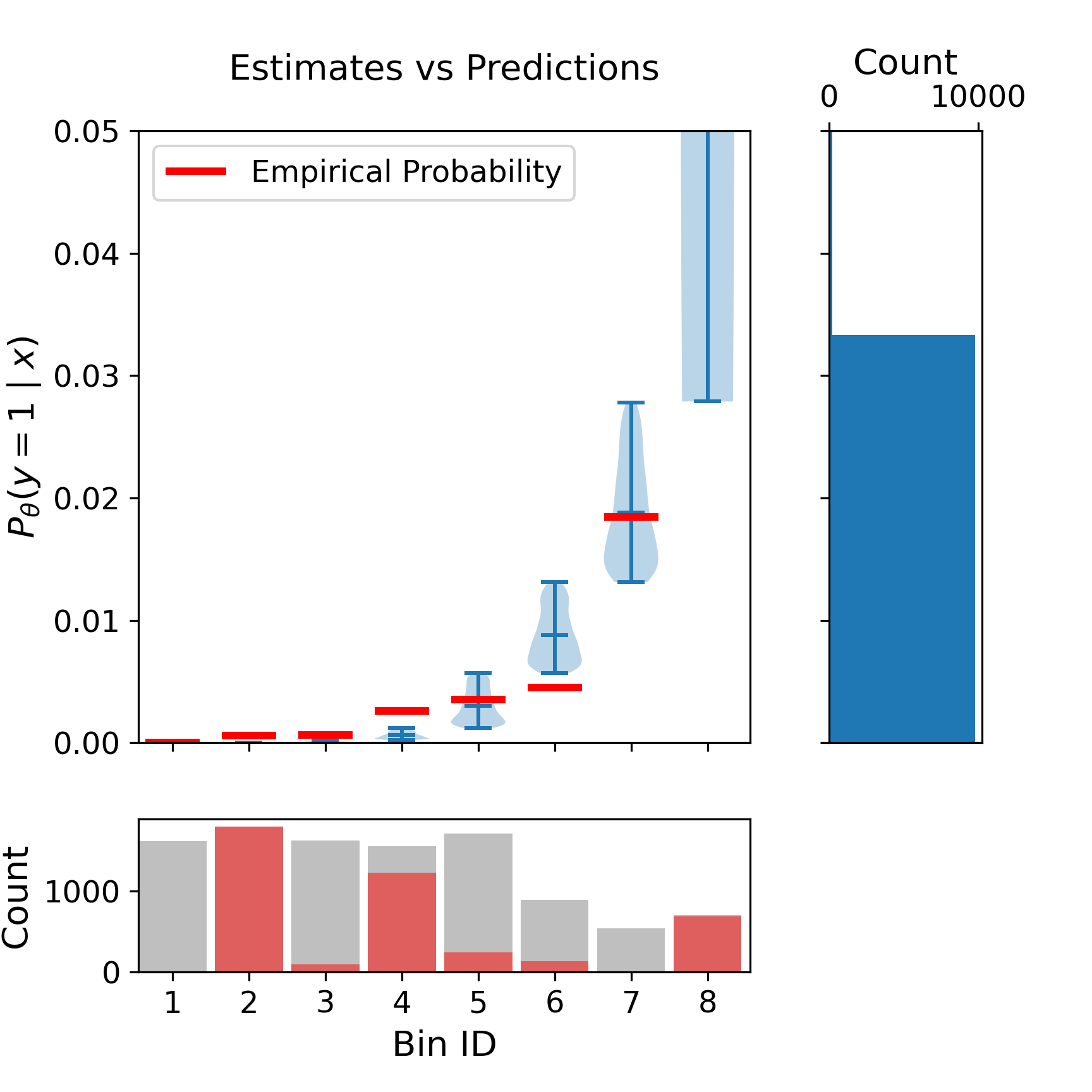}
    \hfill
    \includegraphics[trim={0 10pt 0 5pt},clip,width=0.32\columnwidth]{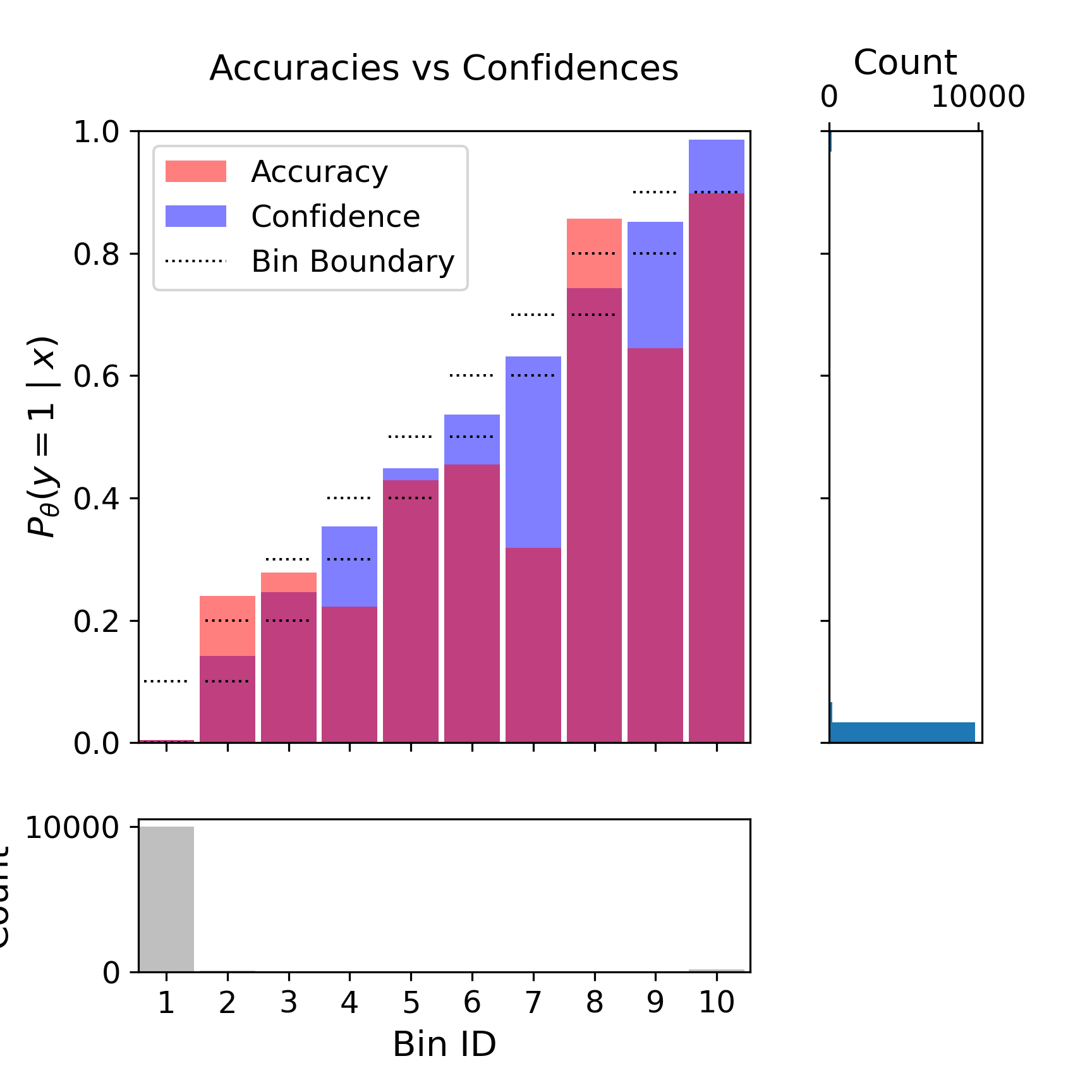}
    \hfill
    \includegraphics[trim={0 10pt 0 5pt},clip,width=0.32\columnwidth]{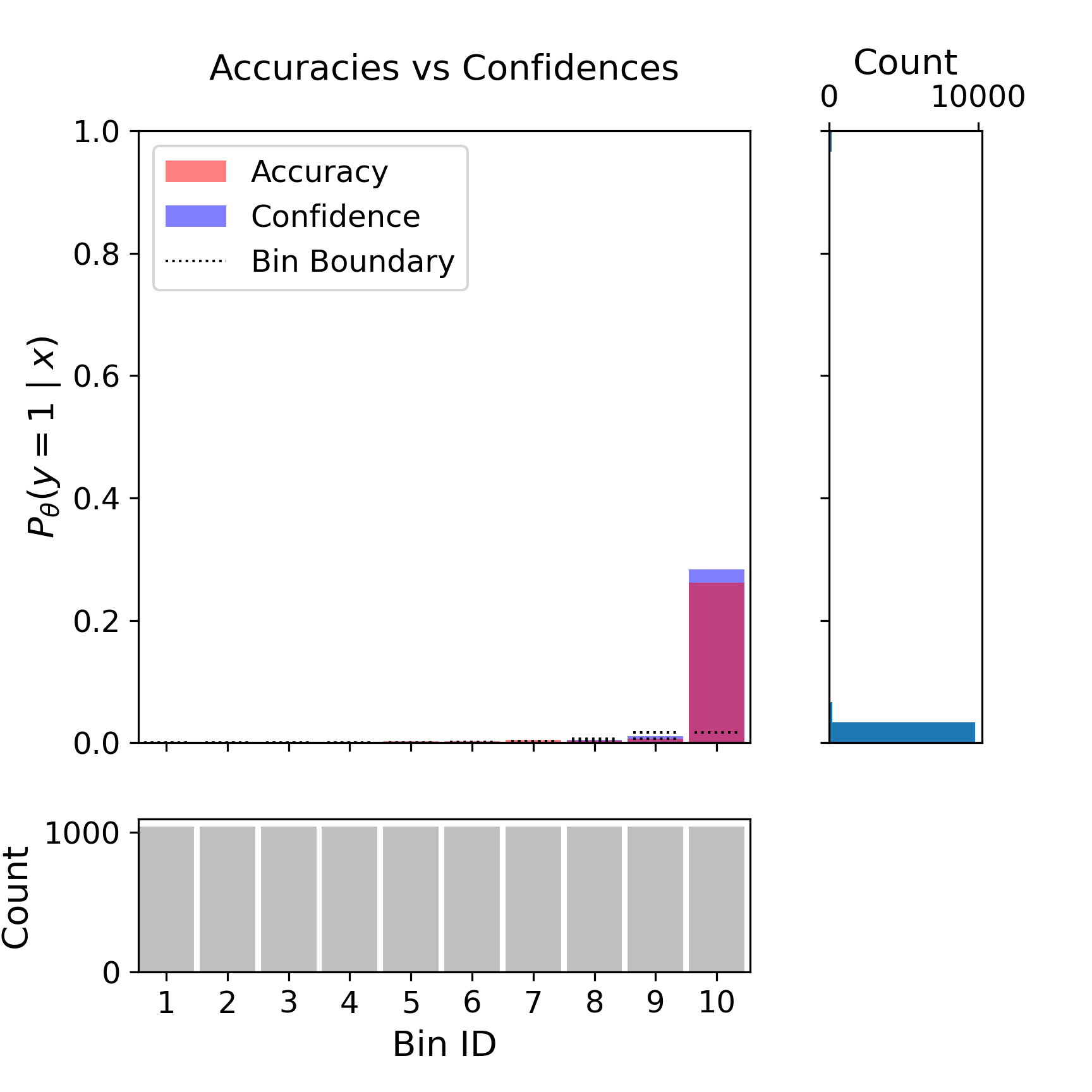}
    \caption{webpage}
    \end{subfigure}

    \caption{Comparison of visual representations of TCE, ECE and ACE for the logistic regression algorithm. (Left) The test-based reliability diagram of TCE. (Middle) The reliability diagram of ECE. (Right) The reliability diagram of ACE. Each row corresponds to a result on the dataset: (a) abalone, (b) coil\_2000, (c) isolet, and (d) webpage.}
    \label{fig:section_42_1}
\end{figure}

\begin{figure}[h]
    \centering
    
    \begin{subfigure}[b]{\textwidth}
    \centering
    \includegraphics[trim={0 10pt 0 5pt},clip,width=0.32\columnwidth]{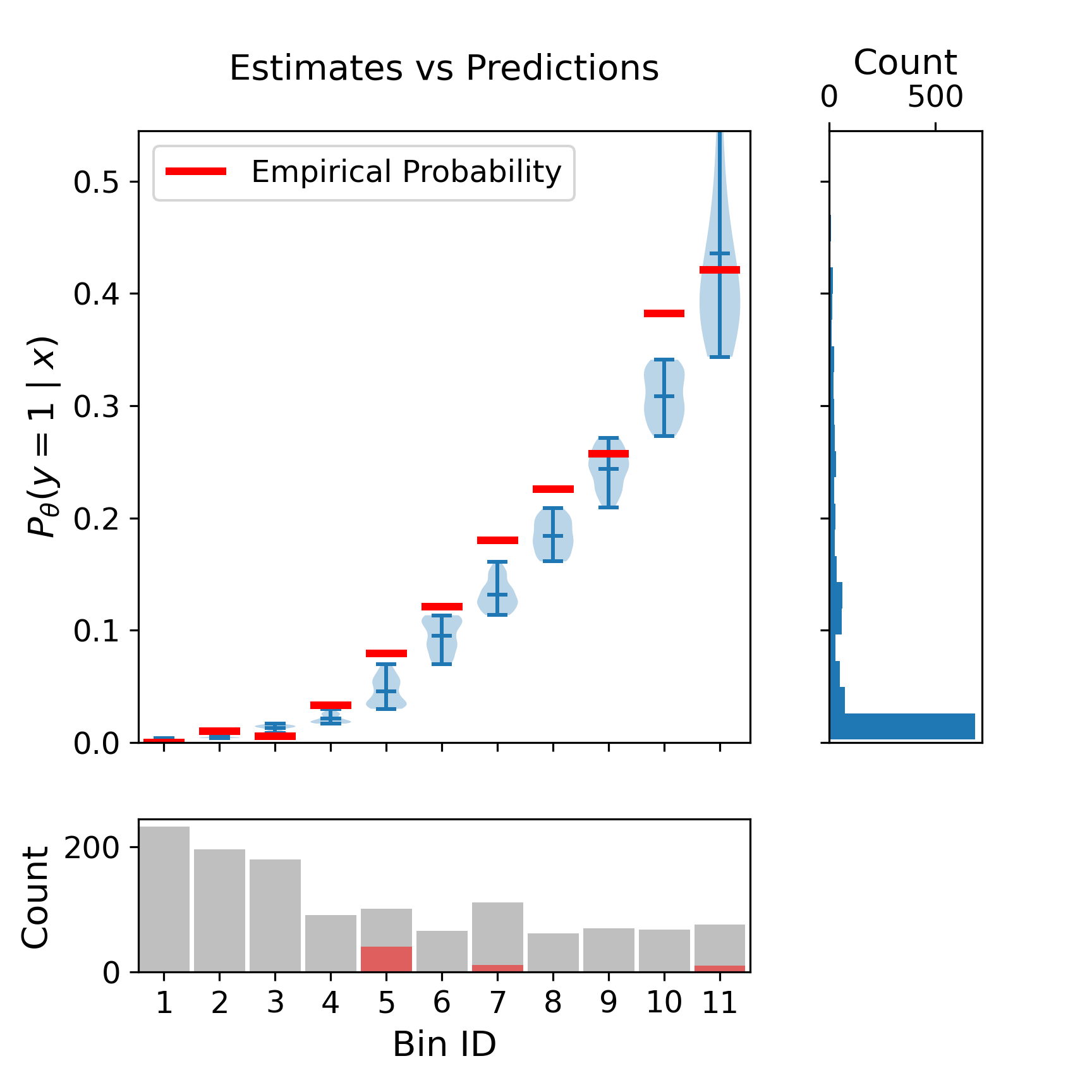}
    \hfill
    \includegraphics[trim={0 10pt 0 5pt},clip,width=0.32\columnwidth]{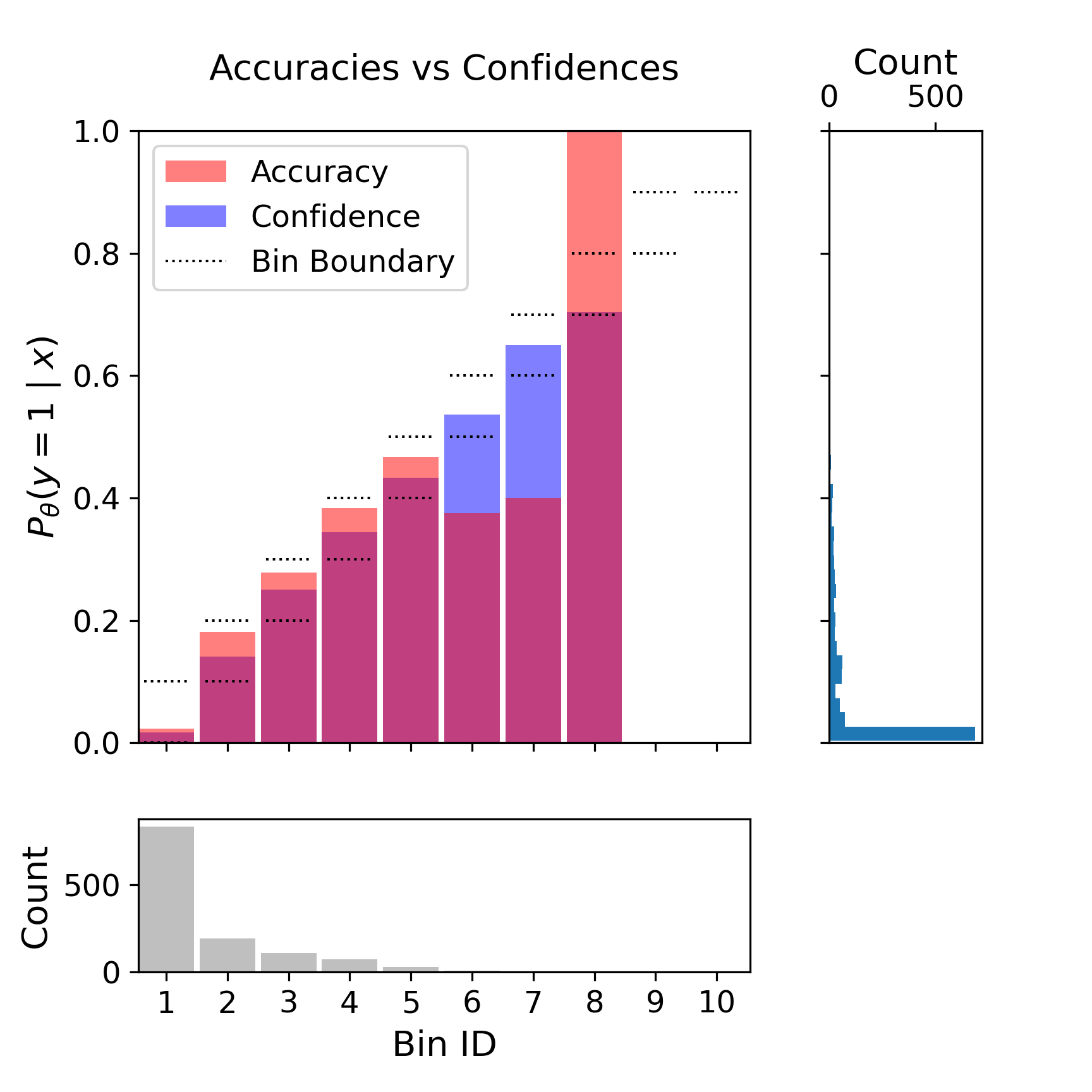}
    \hfill
    \includegraphics[trim={0 10pt 0 5pt},clip,width=0.32\columnwidth]{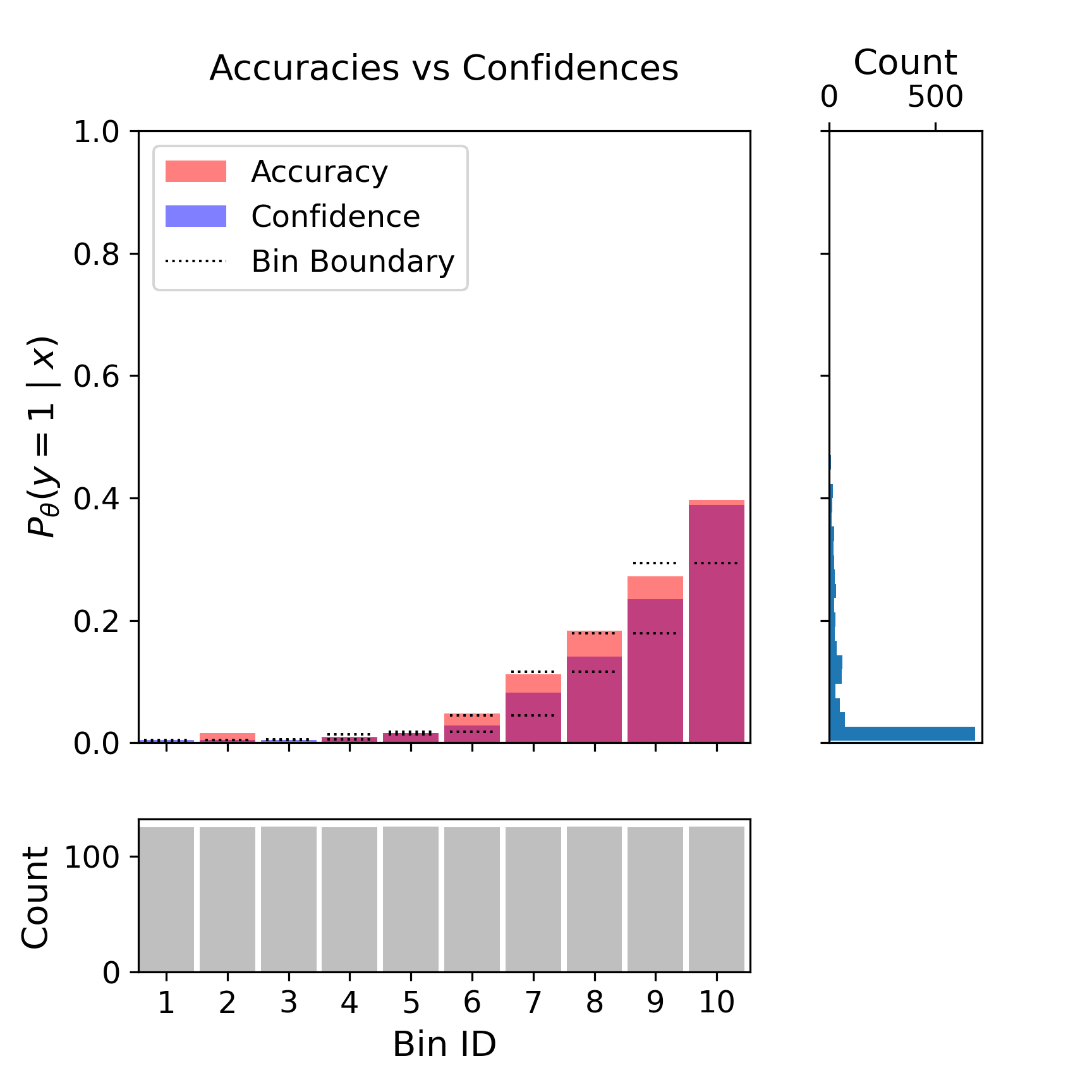}
    \caption{abalone}
    \end{subfigure}
    
    \begin{subfigure}[b]{\textwidth}
    \centering
    \includegraphics[trim={0 10pt 0 5pt},clip,width=0.32\columnwidth]{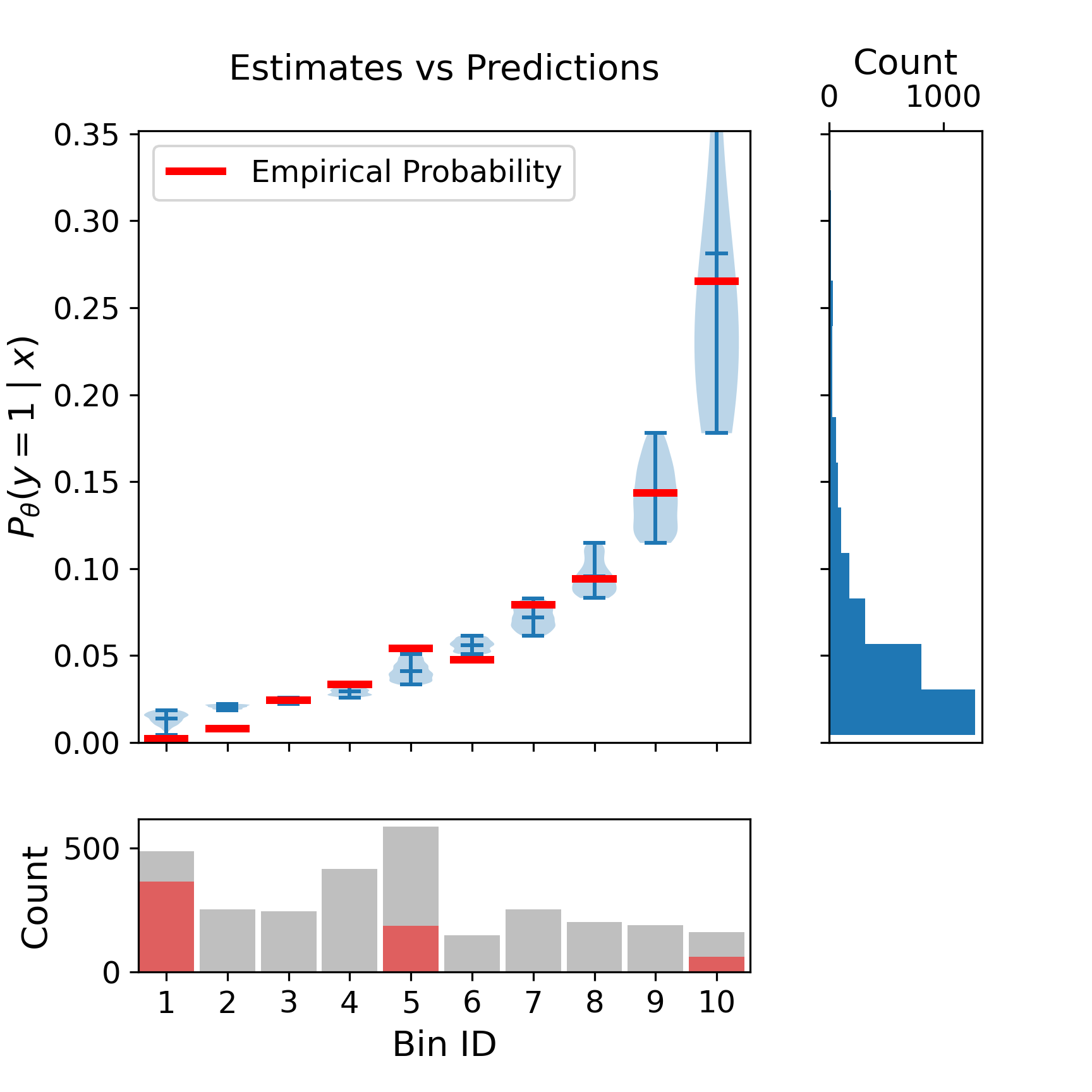}
    \hfill
    \includegraphics[trim={0 10pt 0 5pt},clip,width=0.32\columnwidth]{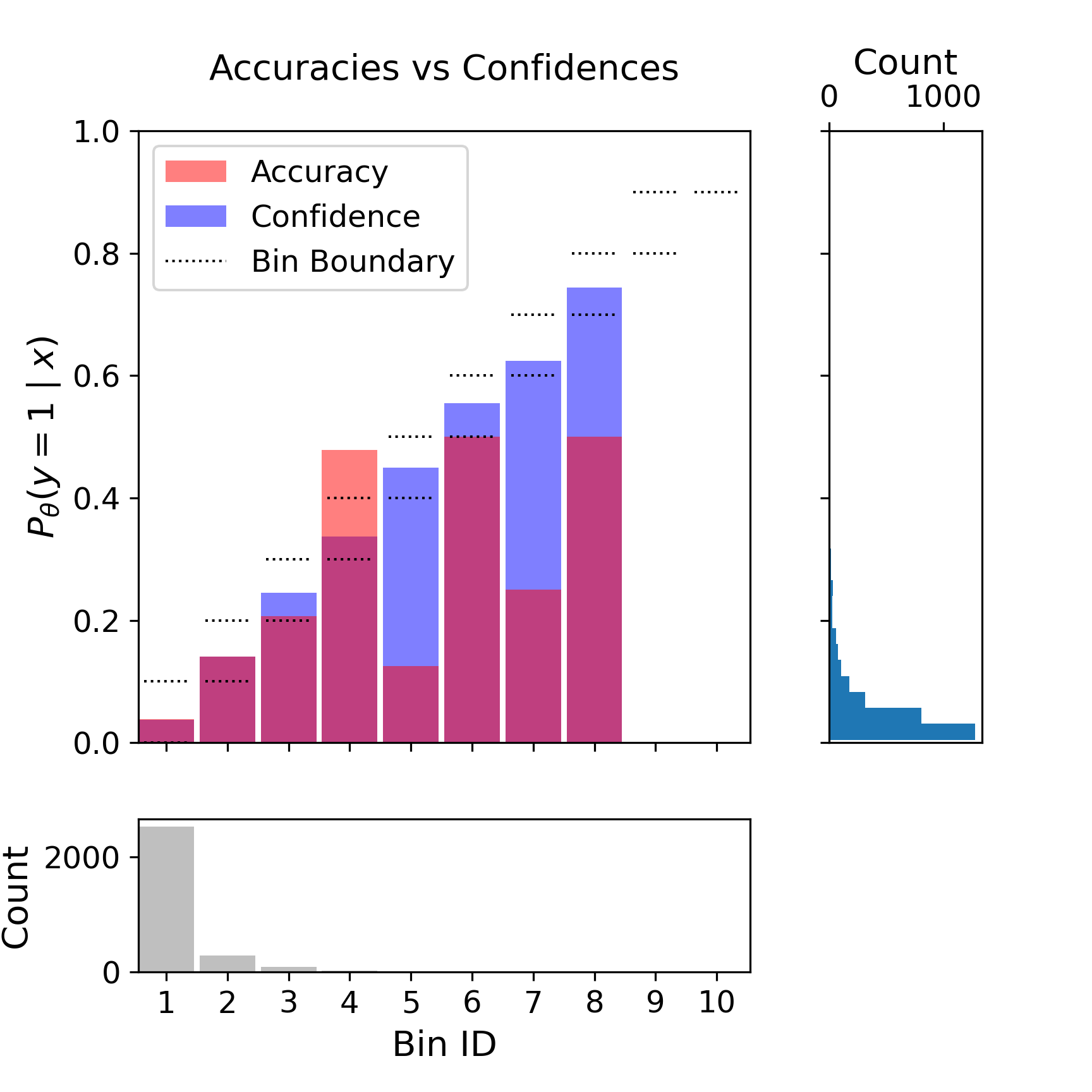}
    \hfill
    \includegraphics[trim={0 10pt 0 5pt},clip,width=0.32\columnwidth]{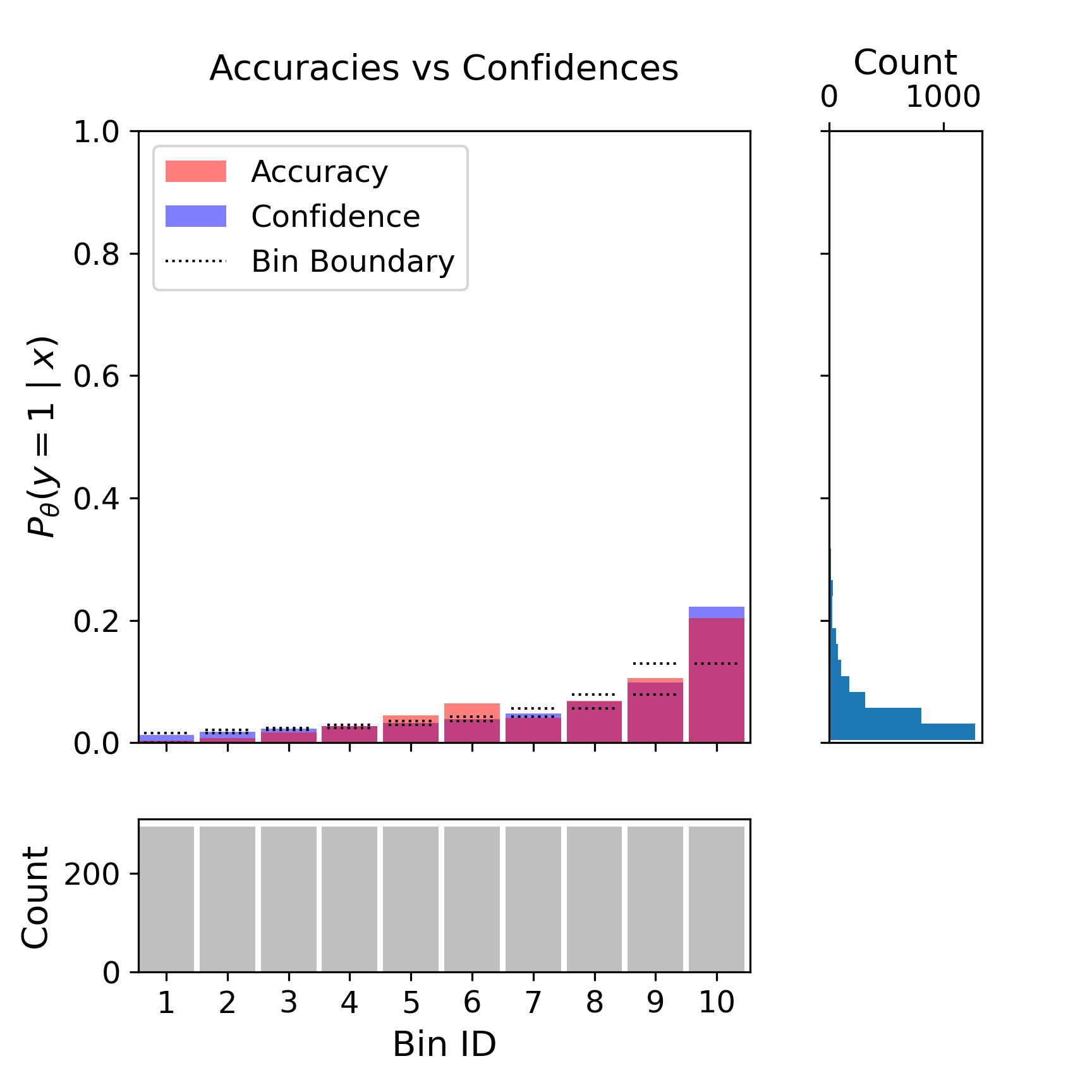}
    \caption{coil\_2000}
    \end{subfigure}
    
    \begin{subfigure}[b]{\textwidth}
    \centering
    \includegraphics[trim={0 10pt 0 5pt},clip,width=0.32\columnwidth]{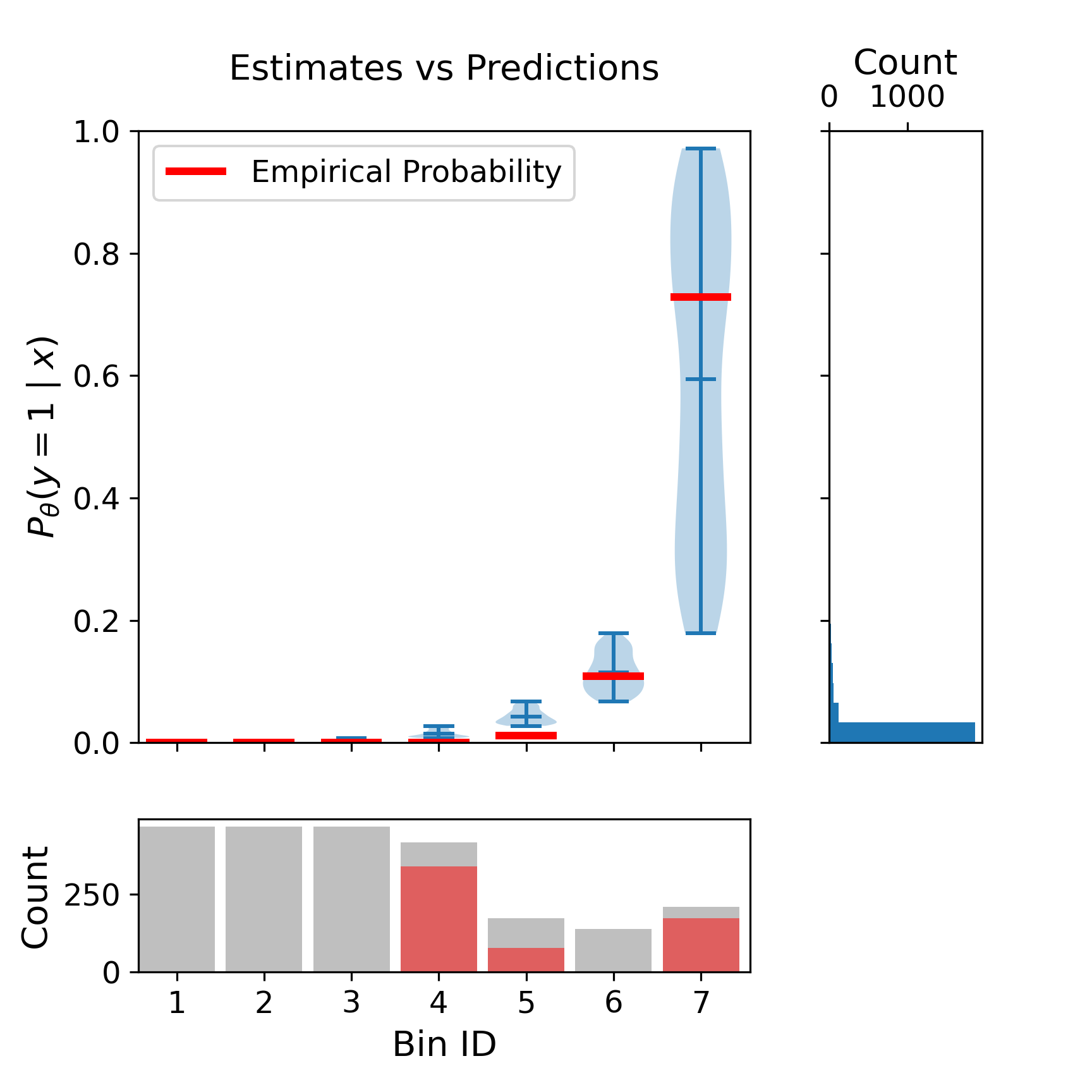}
    \hfill
    \includegraphics[trim={0 10pt 0 5pt},clip,width=0.32\columnwidth]{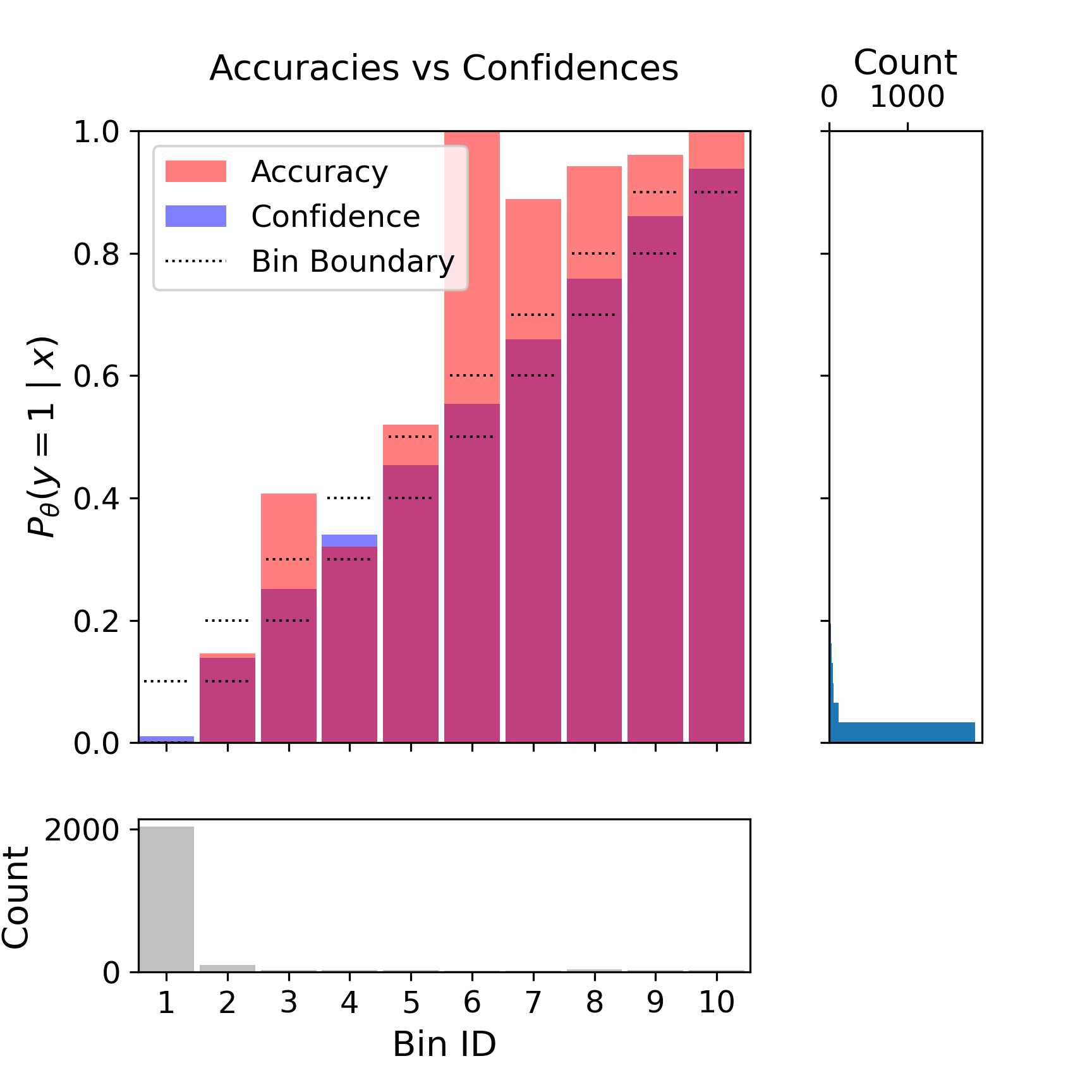}
    \hfill
    \includegraphics[trim={0 10pt 0 5pt},clip,width=0.32\columnwidth]{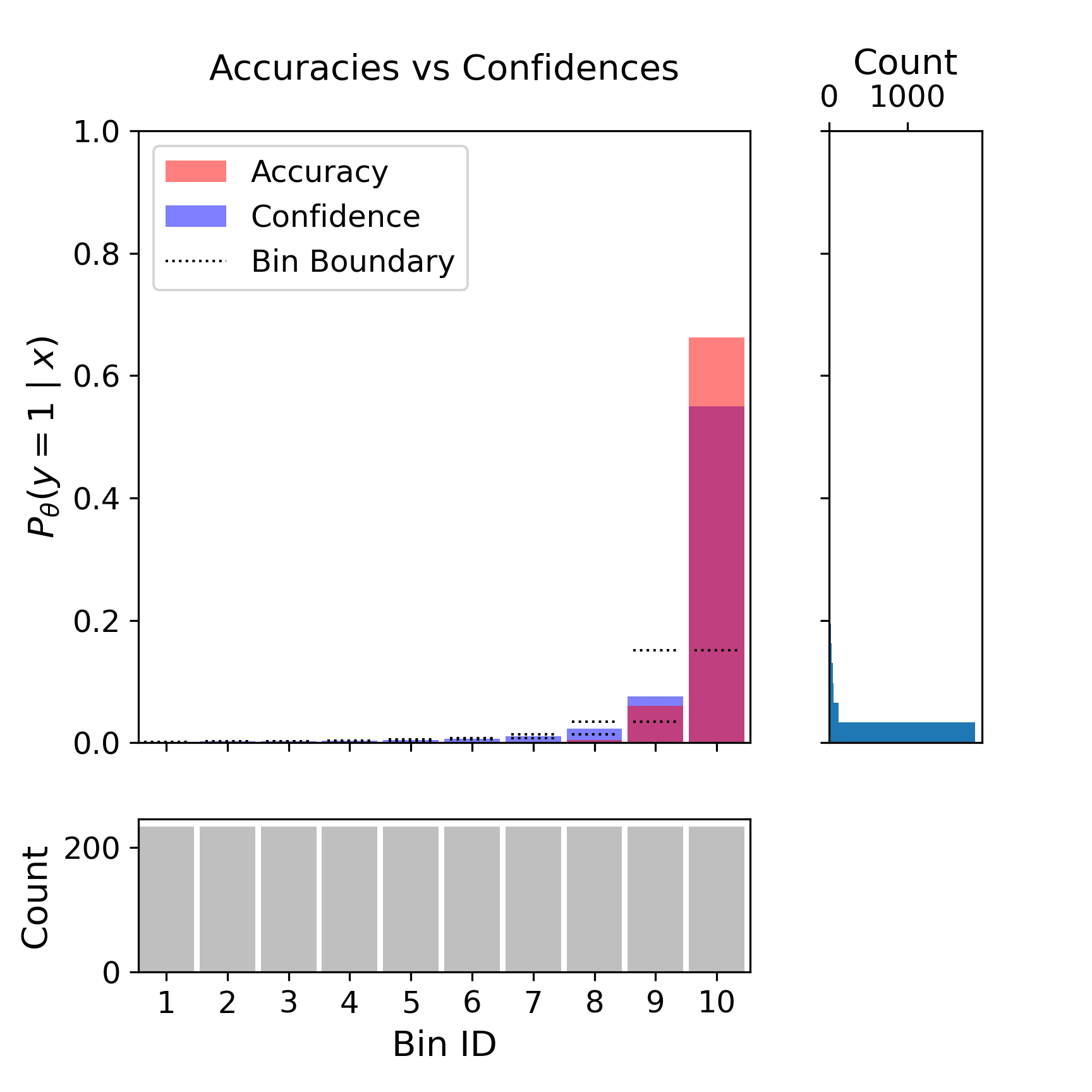}
    \caption{isolet}
    \end{subfigure}

    \begin{subfigure}[b]{\textwidth}
    \centering
    \includegraphics[trim={0 10pt 0 5pt},clip,width=0.32\columnwidth]{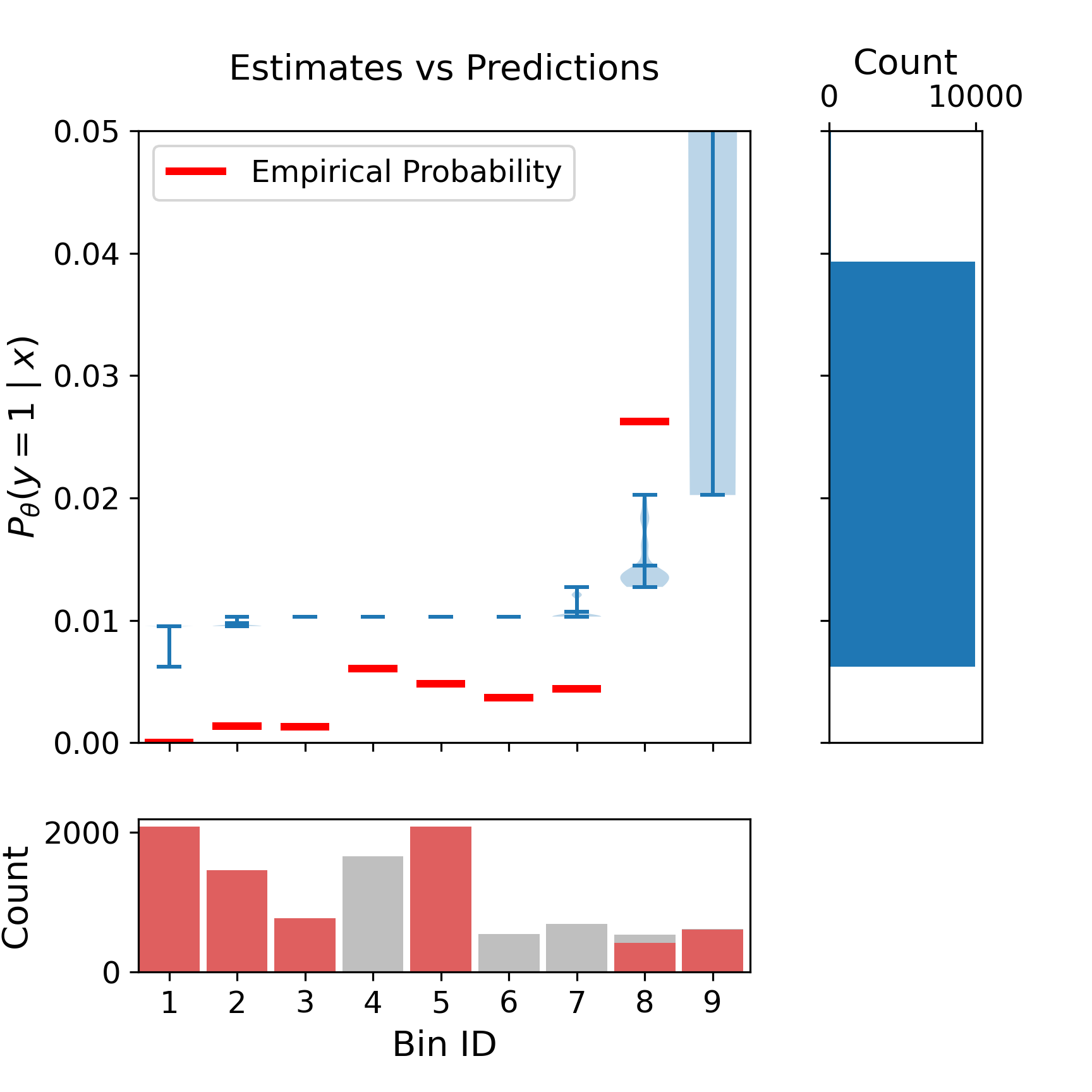}
    \hfill
    \includegraphics[trim={0 10pt 0 5pt},clip,width=0.32\columnwidth]{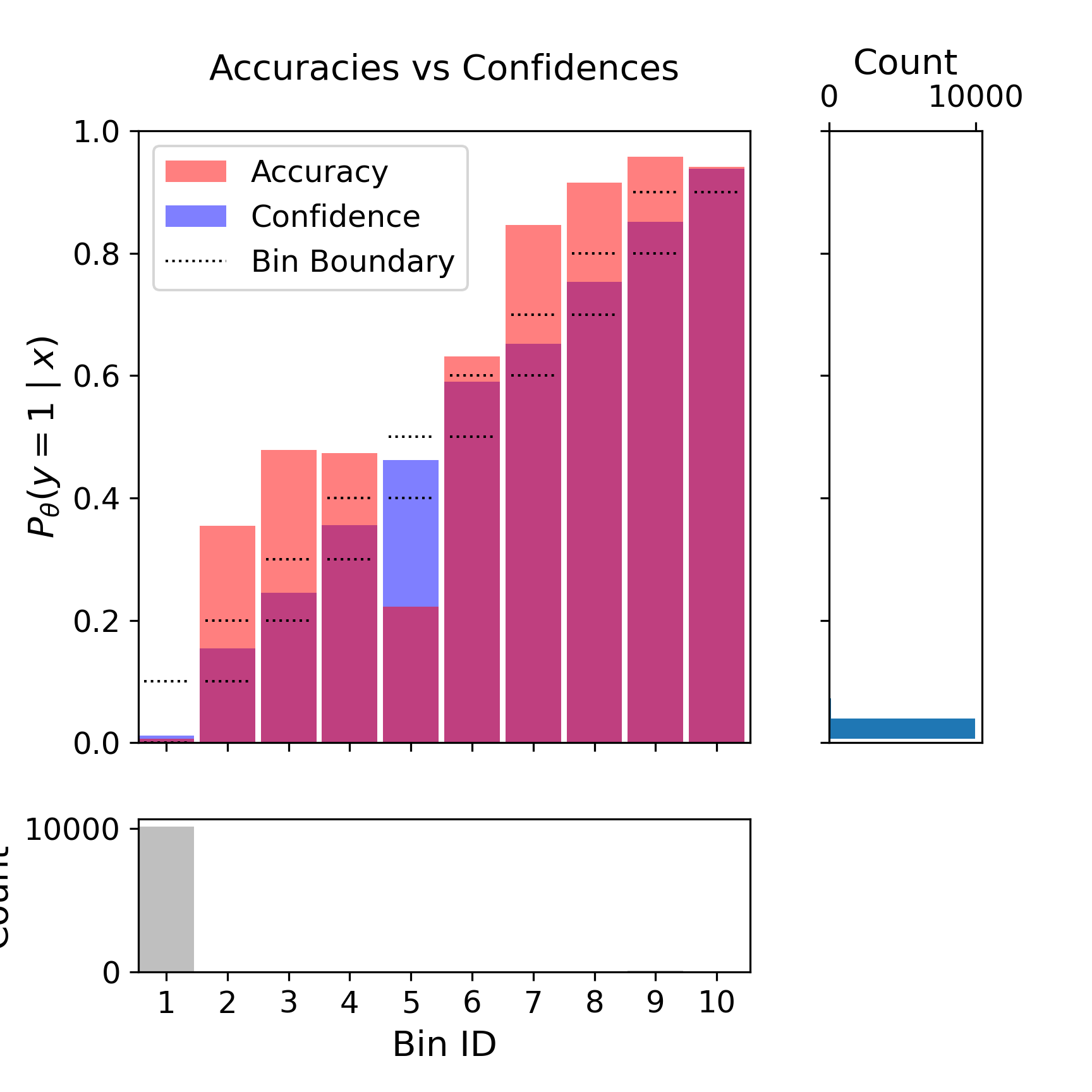}
    \hfill
    \includegraphics[trim={0 10pt 0 5pt},clip,width=0.32\columnwidth]{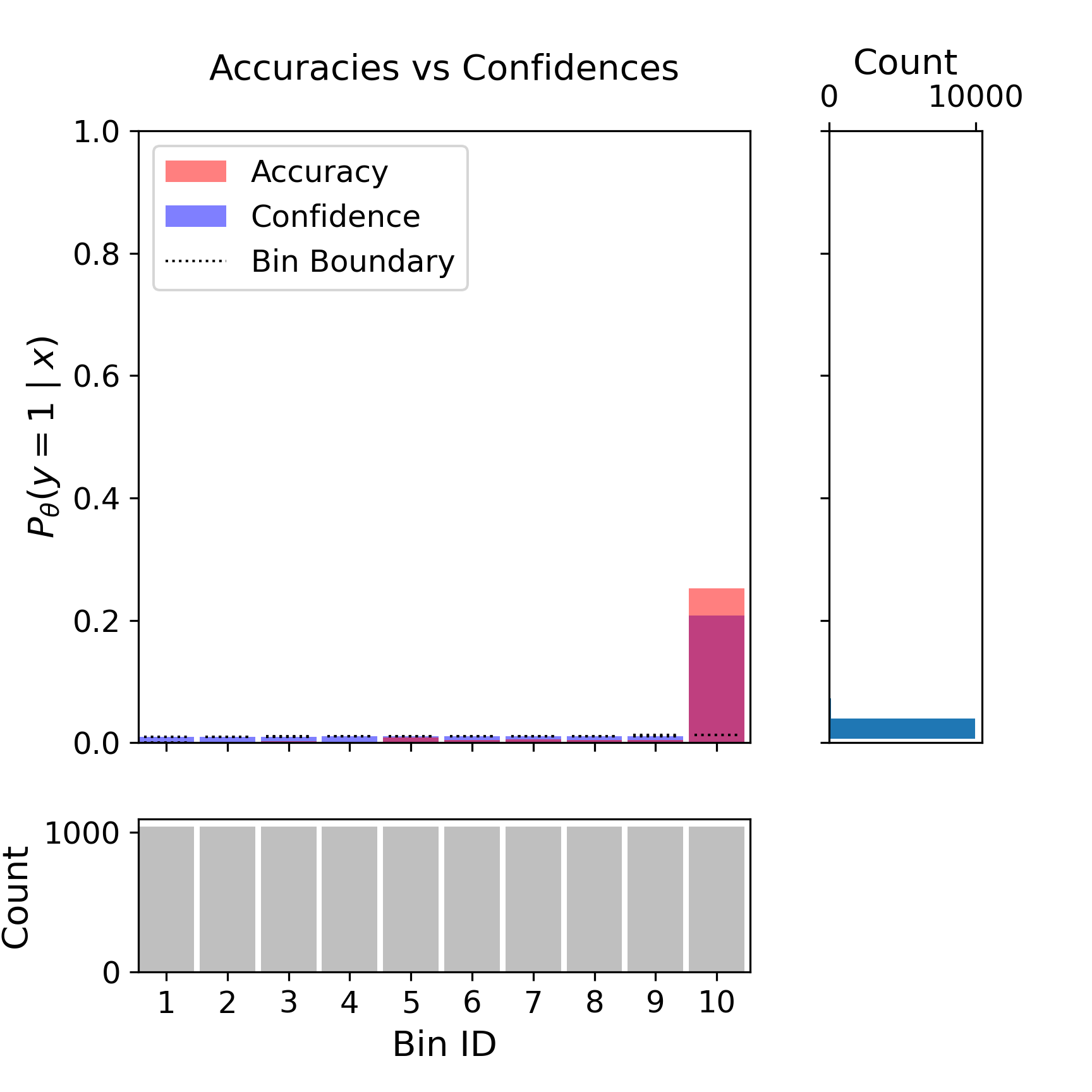}
    \caption{isolet}
    \end{subfigure}

    \caption{Comparison of visual representations of TCE, ECE and ACE for the logistic regression algorithm. (Left) The test-based reliability diagram of TCE. (Middle) The reliability diagram of ECE. (Right) The reliability diagram of ACE. Each row corresponds to a result on the dataset: (a) abalone, (b) coil\_2000, (c) isolet, and (d) webpage.}
    \label{fig:section_42_2}
\end{figure}

\begin{figure}[h]
    \centering
    \begin{subfigure}[b]{\textwidth}
    \centering
    \includegraphics[trim={0 10pt 0 5pt},clip,width=0.32\columnwidth]{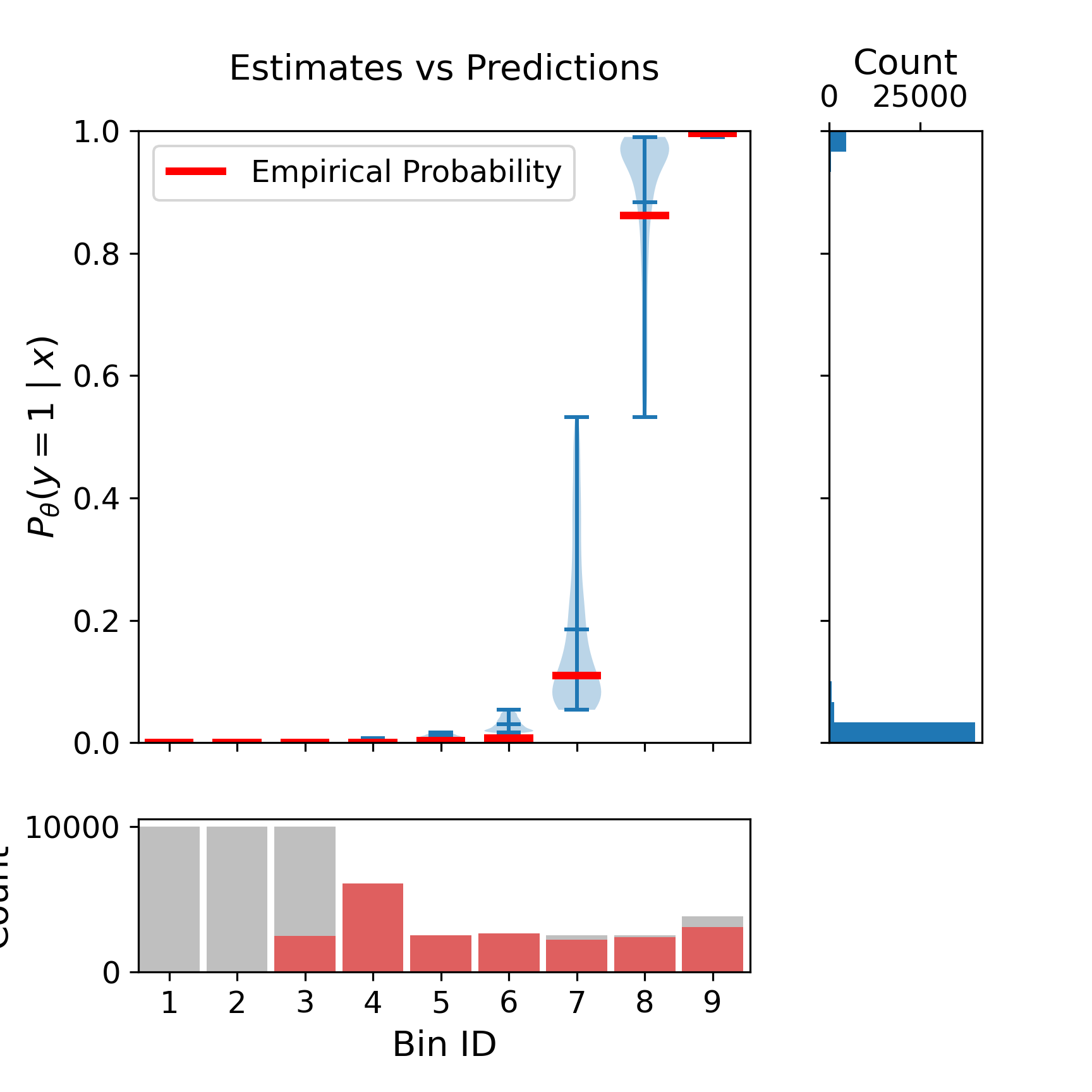}
    \hfill
    \includegraphics[trim={0 10pt 0 5pt},clip,width=0.32\columnwidth]{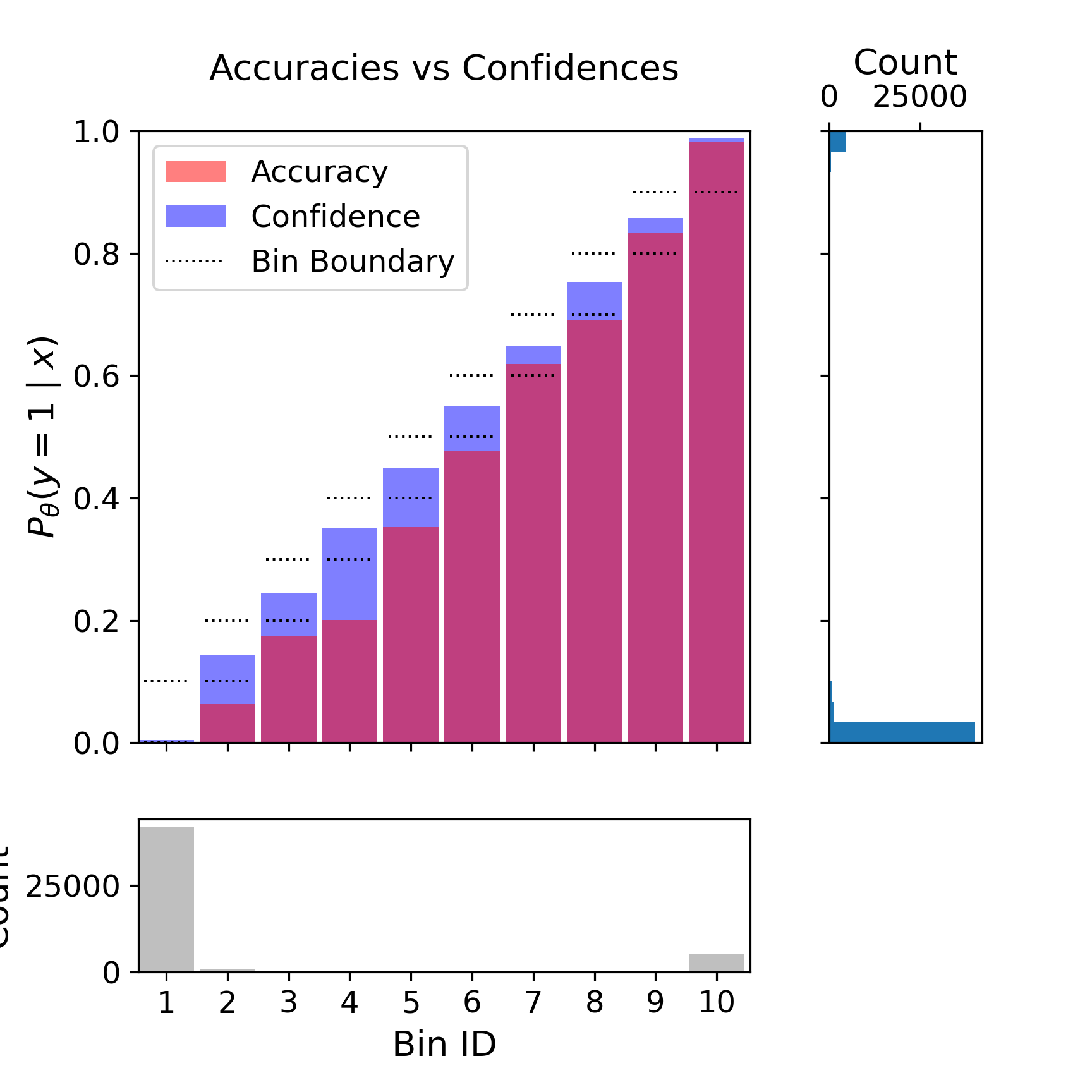}
    \hfill
    \includegraphics[trim={0 10pt 0 5pt},clip,width=0.32\columnwidth]{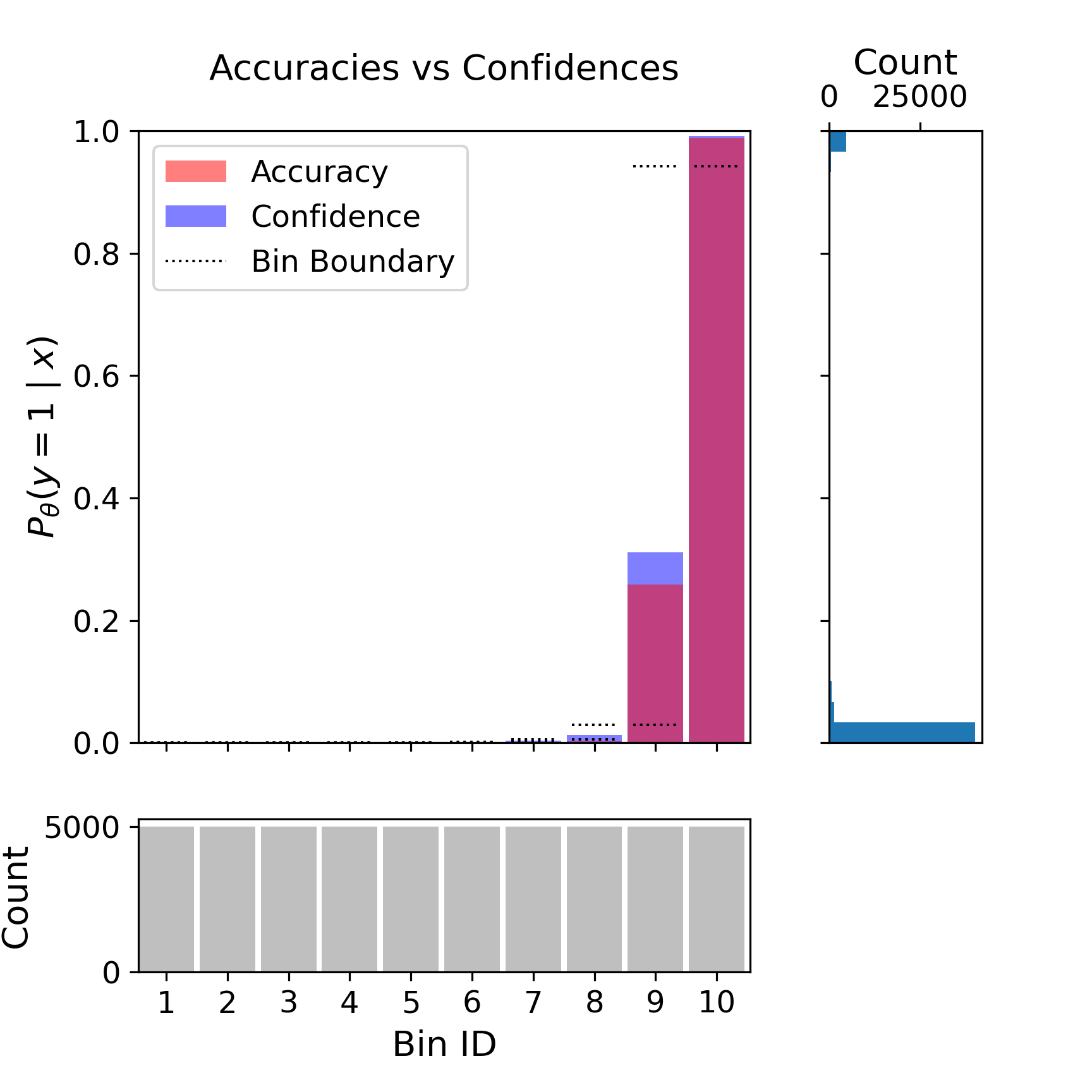}
    \caption{AlexNet}
    \end{subfigure}

    \begin{subfigure}[b]{\textwidth}
    \centering
    \includegraphics[trim={0 10pt 0 5pt},clip,width=0.32\columnwidth]{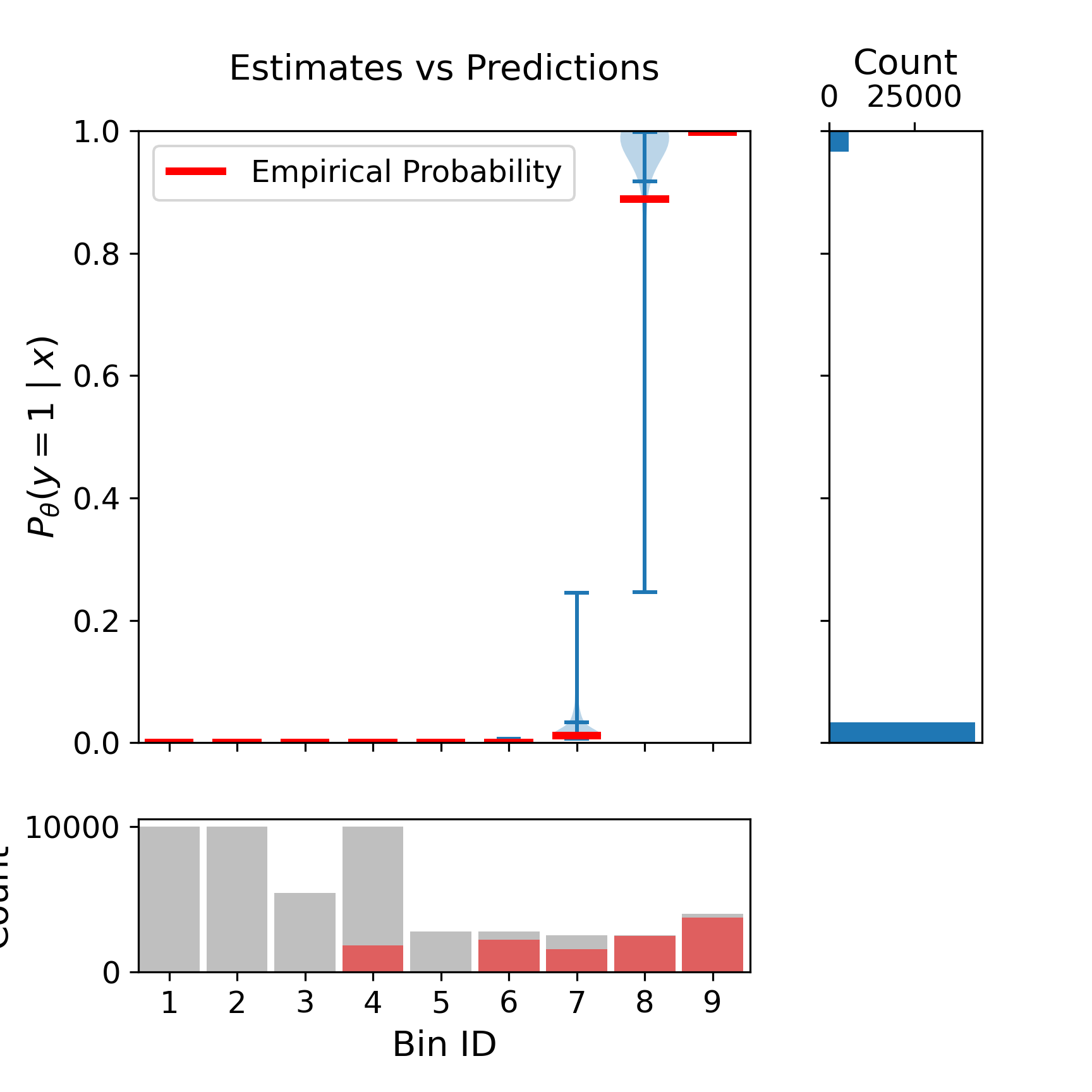}
    \hfill
    \includegraphics[trim={0 10pt 0 5pt},clip,width=0.32\columnwidth]{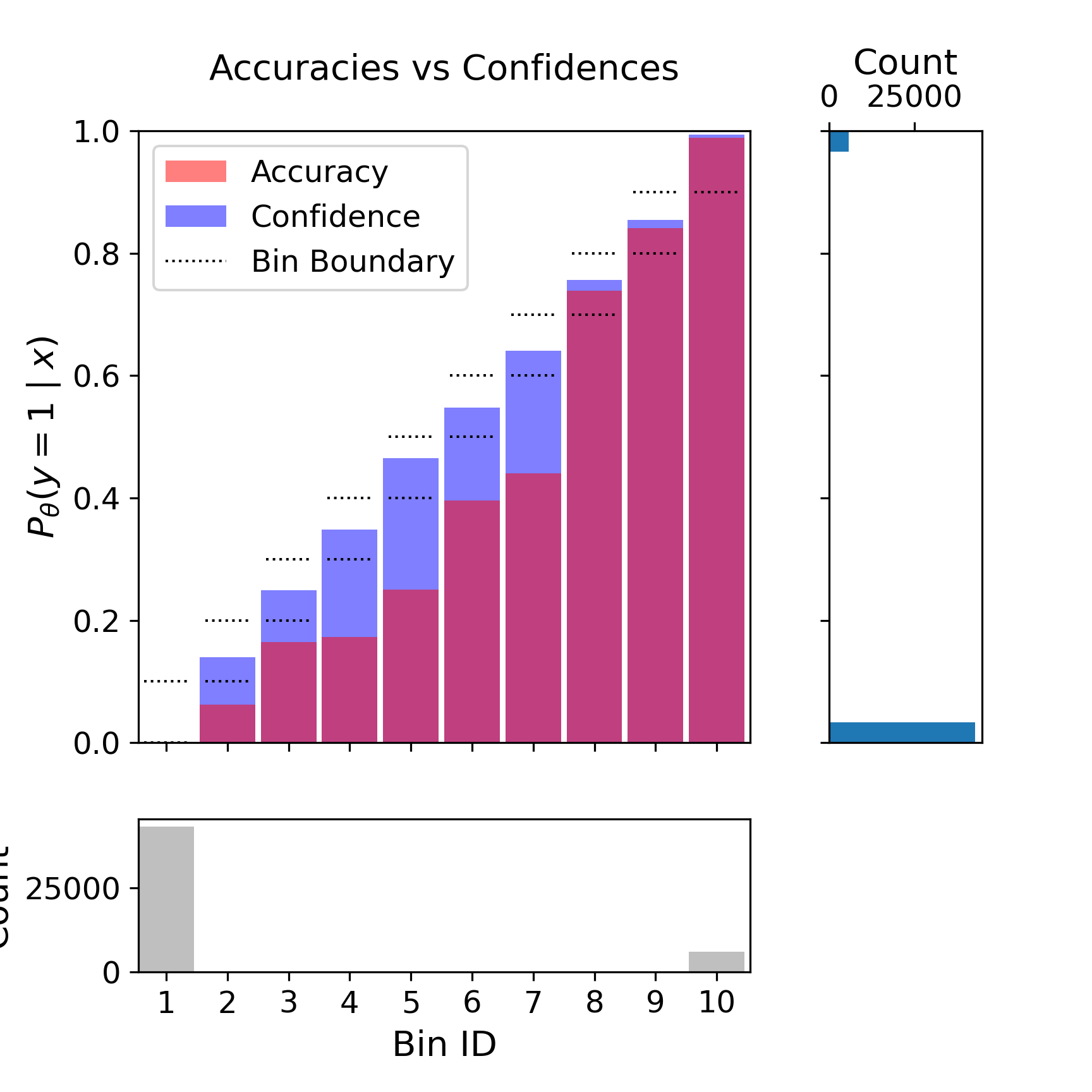}
    \hfill
    \includegraphics[trim={0 10pt 0 5pt},clip,width=0.32\columnwidth]{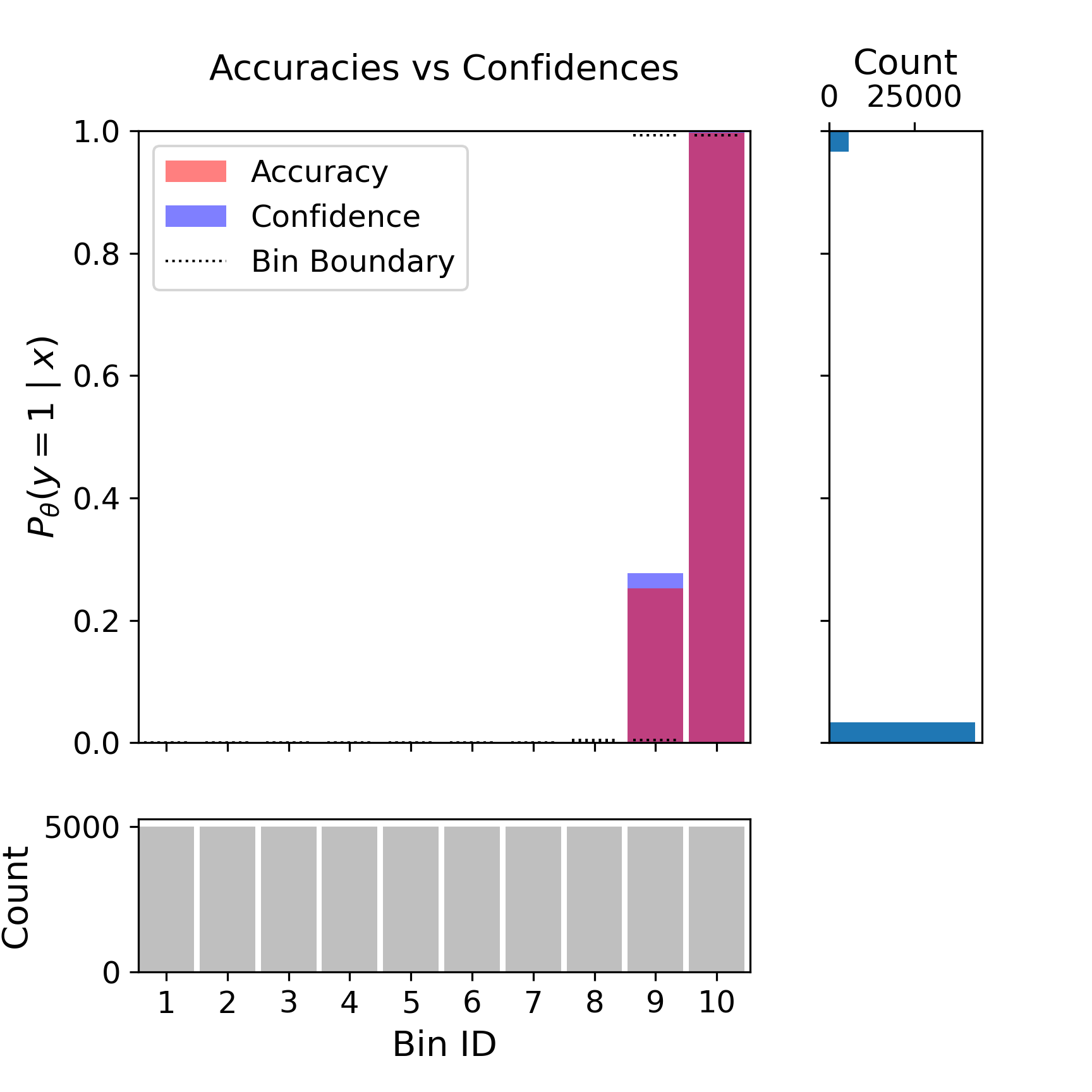}
    \caption{VGG19}
    \end{subfigure}
    
    \begin{subfigure}[b]{\textwidth}
    \centering
    \includegraphics[trim={0 10pt 0 5pt},clip,width=0.32\columnwidth]{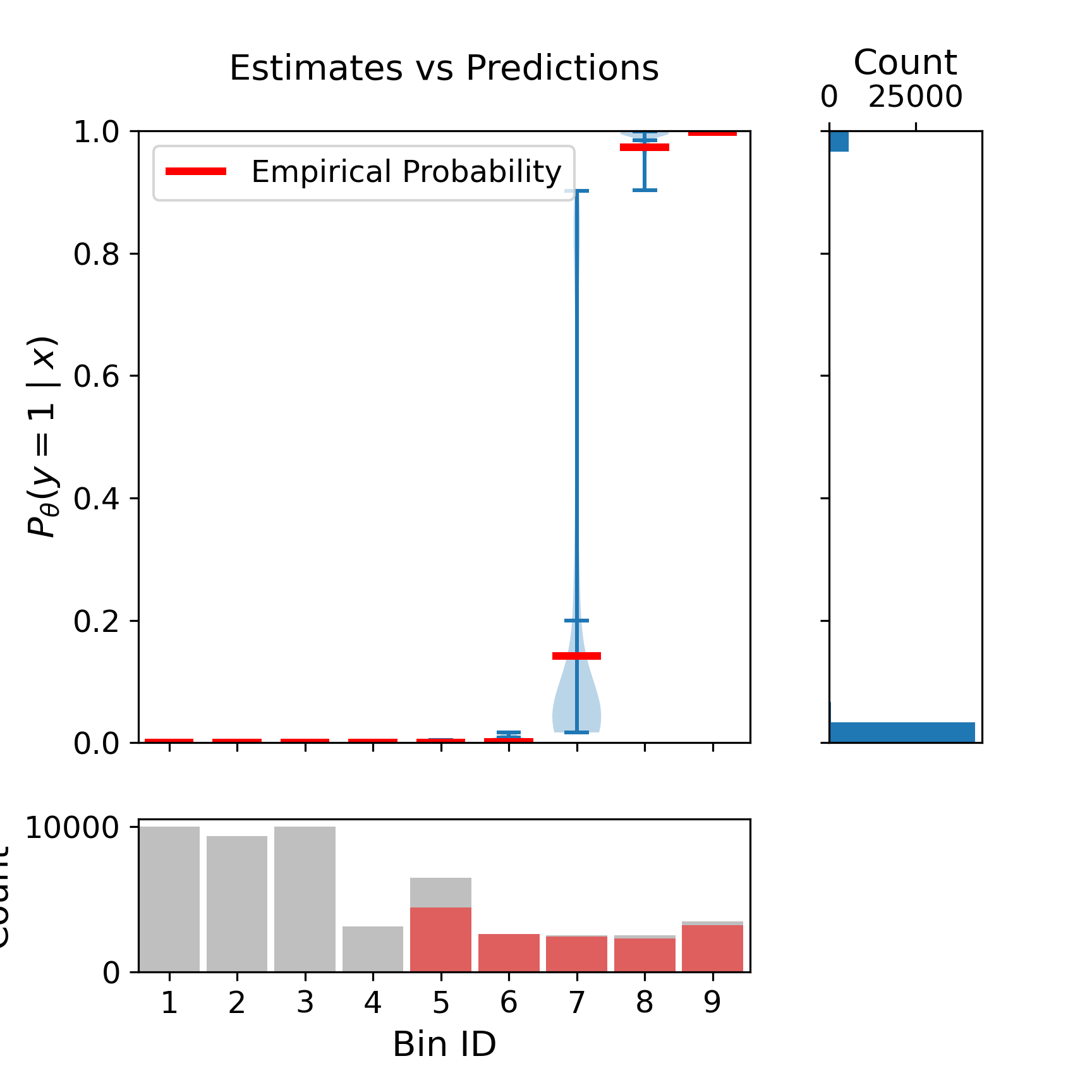}
    \hfill
    \includegraphics[trim={0 10pt 0 5pt},clip,width=0.32\columnwidth]{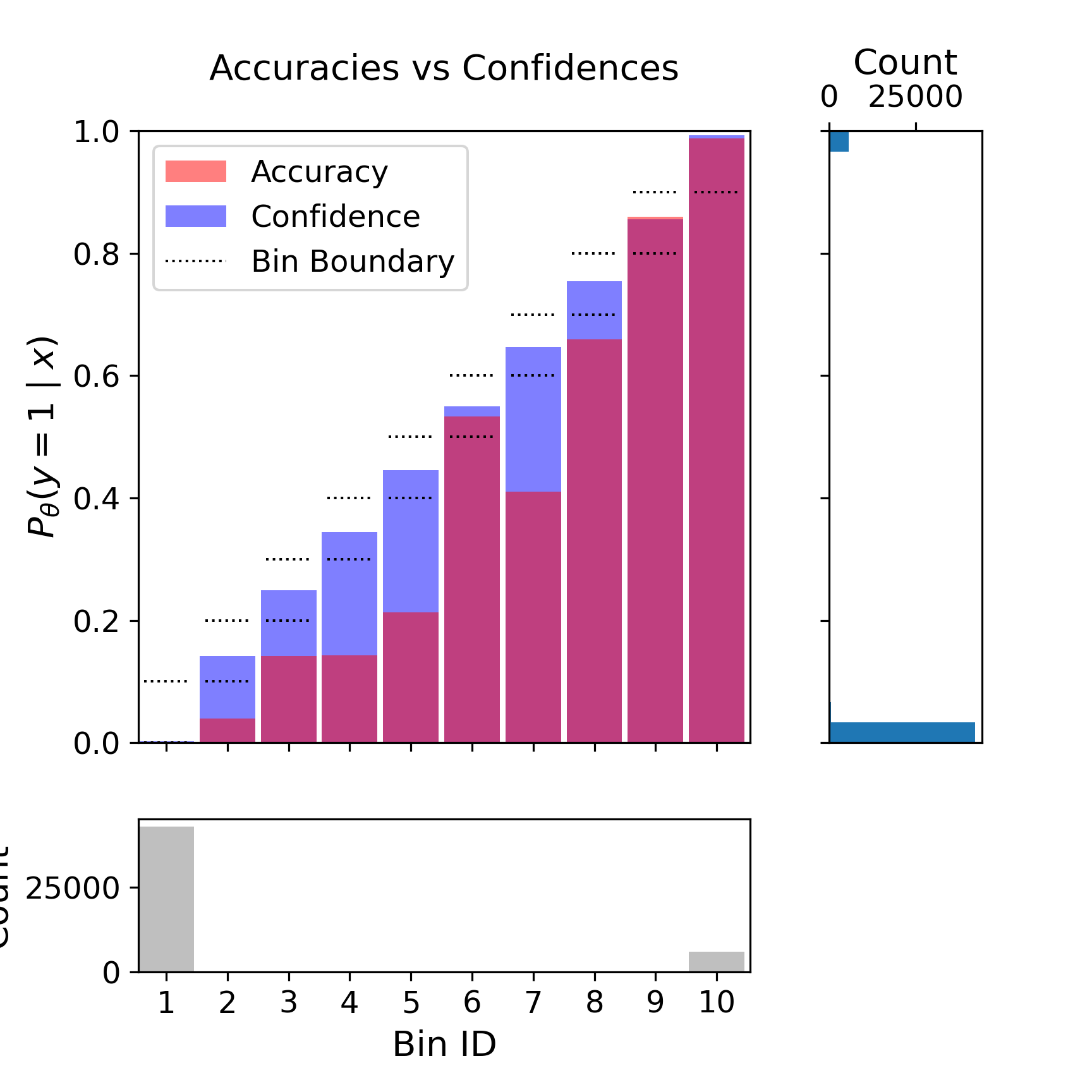}
    \hfill
    \includegraphics[trim={0 10pt 0 5pt},clip,width=0.32\columnwidth]{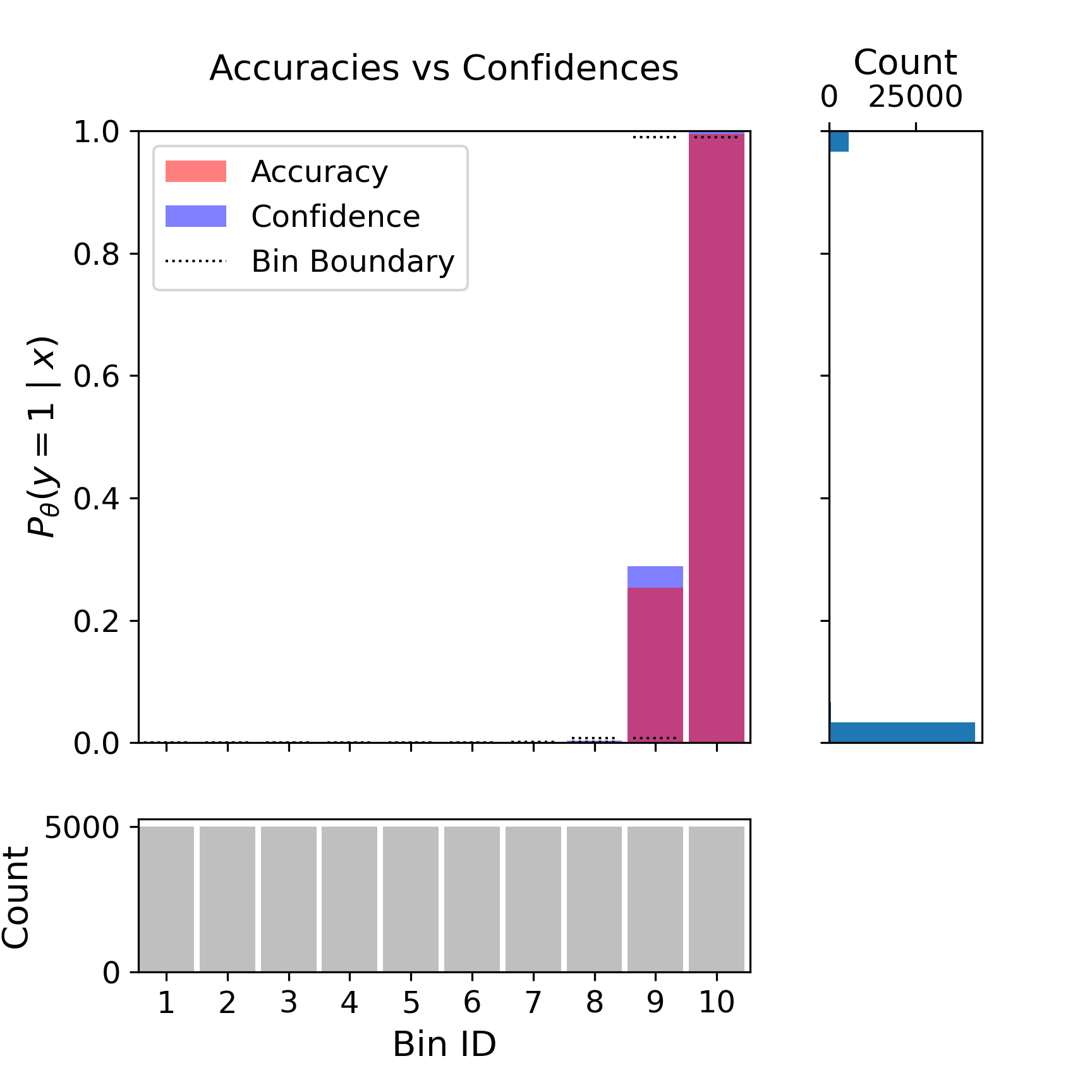}
    \caption{ResNet 18}
    \end{subfigure}
    
    \begin{subfigure}[b]{\textwidth}
    \centering
    \includegraphics[trim={0 10pt 0 5pt},clip,width=0.32\columnwidth]{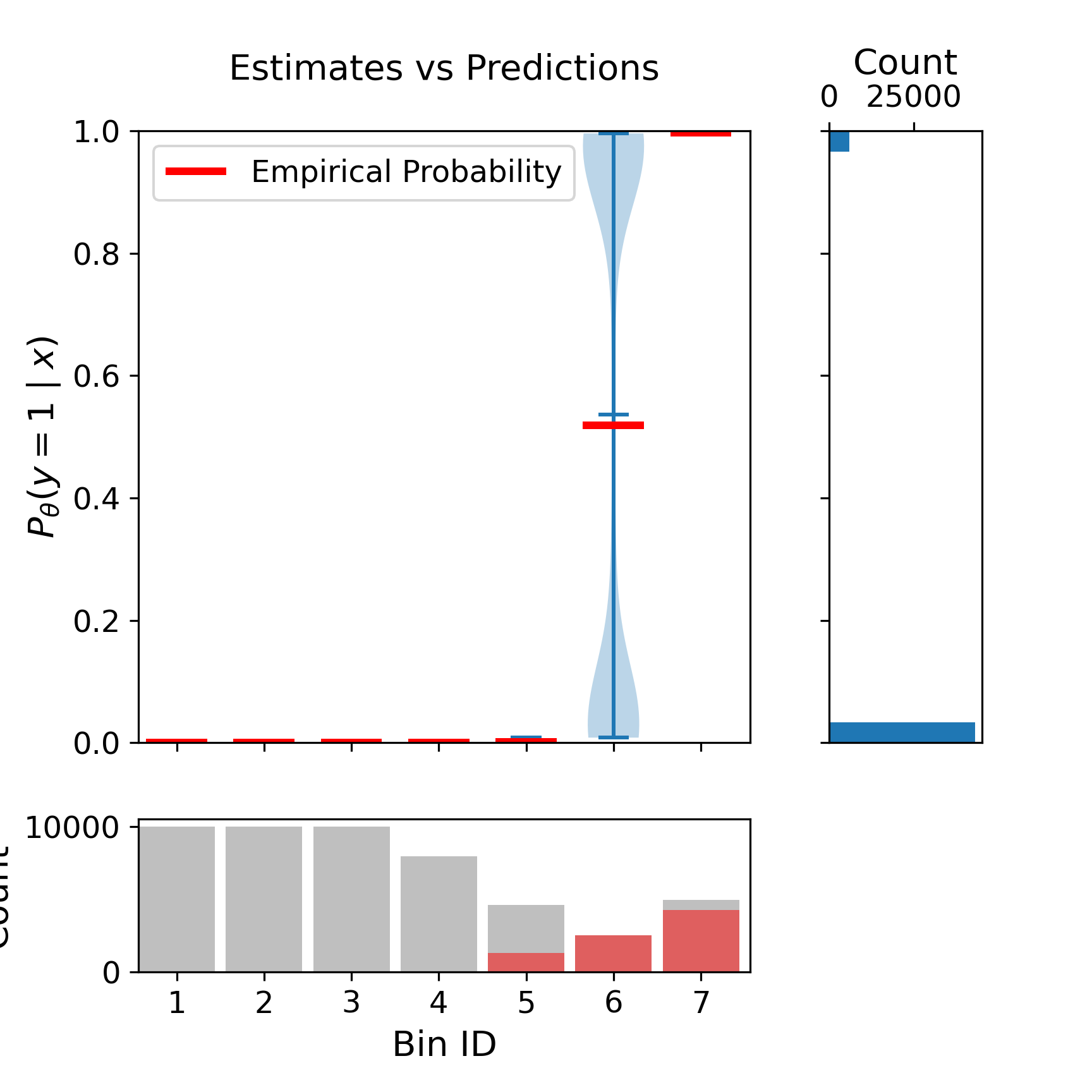}
    \hfill
    \includegraphics[trim={0 10pt 0 5pt},clip,width=0.32\columnwidth]{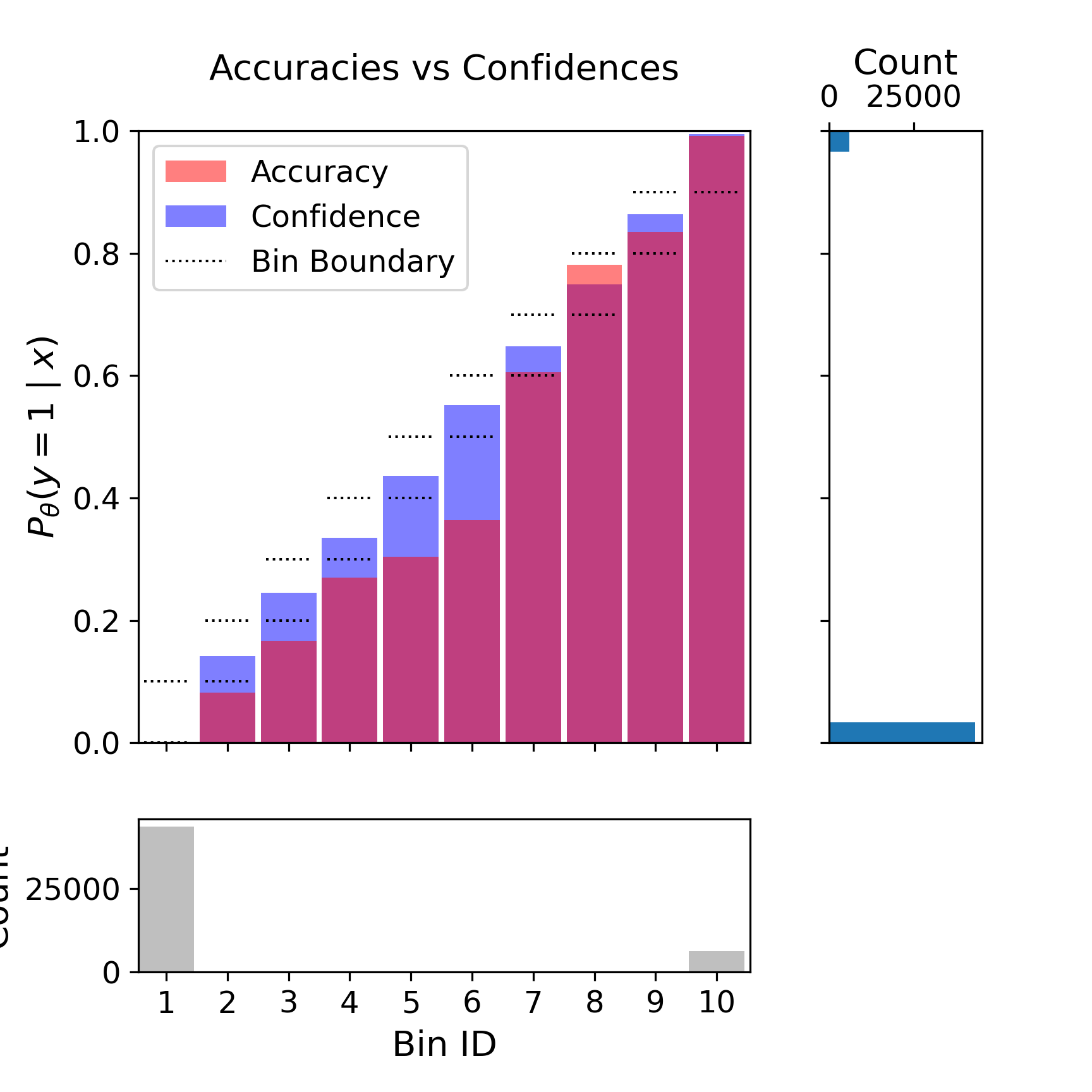}
    \hfill
    \includegraphics[trim={0 10pt 0 5pt},clip,width=0.32\columnwidth]{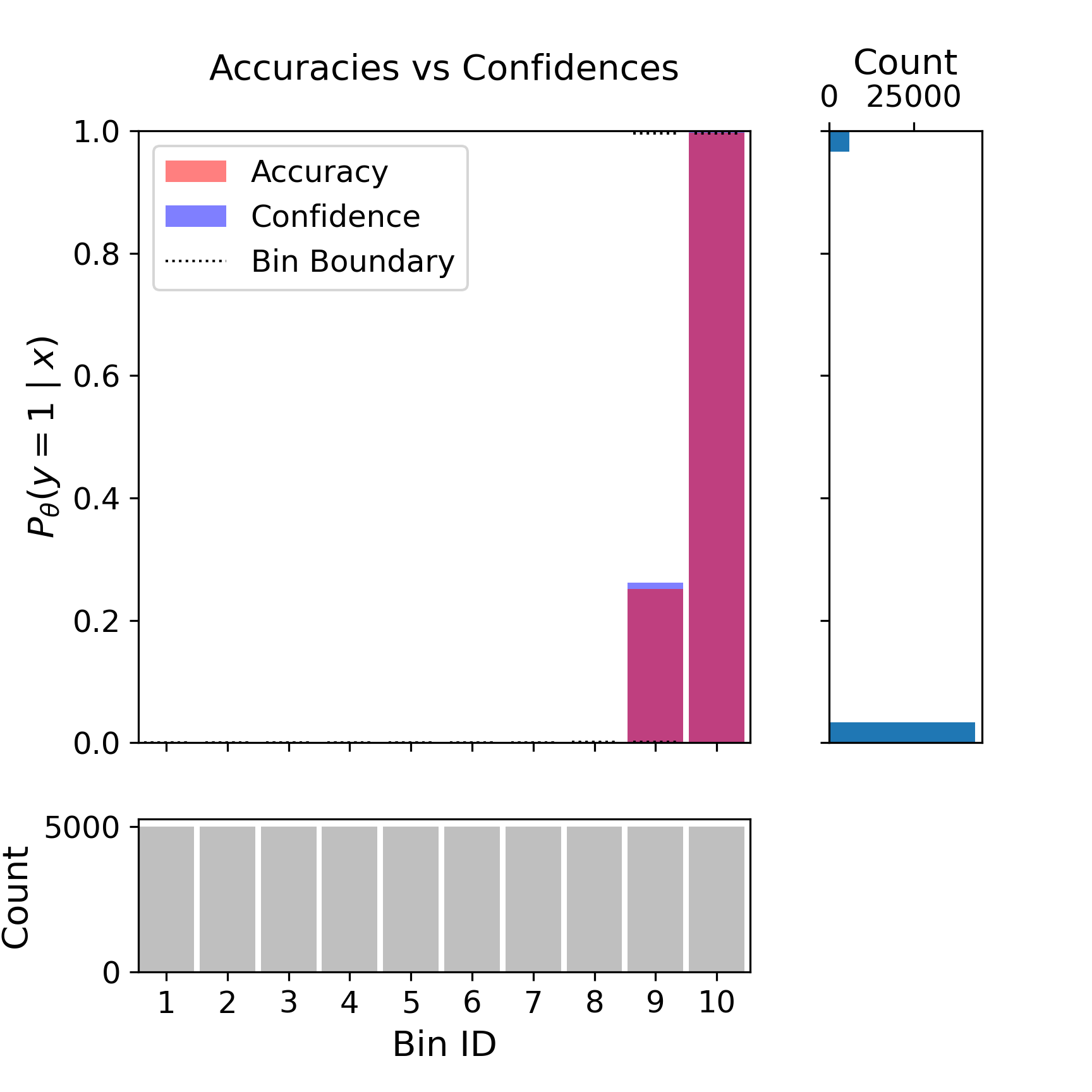}
    \caption{ResNet 152}
    \end{subfigure}
    
    \caption{Comparison of visual representations of TCE, ECE, and ACE on the ImageNet 1000 dataset. (Left) The test-based reliability diagram of TCE, (Middle) The reliability diagram of ECE (Right) The reliability diagram of ACE. Each row corresponds to a result for the model: (a) AlexNet, (b) VGG19, (c) ResNet 18, and (d) ResNet 152.}
    \label{fig:section_43_1}
\end{figure}

%\bibliography{bibliography}

\end{document}